\def\1{\bm{1}}
\DeclareMathAlphabet{\mathsfit}{\encodingdefault}{\sfdefault}{m}{sl}
\SetMathAlphabet{\mathsfit}{bold}{\encodingdefault}{\sfdefault}{bx}{n}
\DeclareMathOperator*{\argmax}{arg\,max}
\DeclareMathOperator{\sign}{\textrm{sign}}
\newcommand{\bA}{{\mathbf{A}}}
\newcommand{\by}{{\mathbf{y}}}
\newcommand{\bx}{{\mathbf{x}}}
\newcommand{\bz}{{\mathbf{z}}}
\newcommand{\bb}{{\mathbf{b}}}
\newcommand{\bs}{{\mathbf{s}}}
\newcommand{\bu}{{\mathbf{u}}}
\newcommand{\bv}{{\mathbf{v}}}
\newcommand{\btheta}{{\boldsymbol{\theta}}}
\newcommand{\st}{{\rm s.t.}}
\newcommand{\Renyi}{R\'enyi }
\theoremstyle{plain}
\newtheorem{theorem}{Theorem}[section]
\newtheorem{proposition}[theorem]{Proposition}
\newtheorem{lemma}[theorem]{Lemma}
\theoremstyle{definition}
\theoremstyle{remark}
\newtheorem{remark}[theorem]{Remark}
\newcommand*\samethanks[1][\value{footnote}]{\footnotemark[#1]}
\title{$f$-FERM: A  Scalable Framework for  Robust Fair Empirical Risk Minimization}
\author{Sina Baharlouei \thanks{University of Southern California (\texttt{baharlou,razaviya@usc.edu})}
\And Shivam Patel \thanks{Department of Electrical Engineering, IIT Bombay 
(\texttt{shivamapatel2002@gmail.com})}\\
% \\
\And Meisam Razaviyayn \samethanks[1]
}
\begin{document}

\maketitle

\vspace{-4mm}
\begin{abstract}
\vspace{-2mm}
Training and deploying machine learning models that meet fairness criteria for protected groups are fundamental in modern artificial intelligence. 
While numerous constraints and regularization terms have been proposed in the literature to promote fairness in machine learning tasks, most of these approaches are not amenable to stochastic optimization due to the complex and nonlinear structure of constraints and regularizers. Here, the term ``stochastic'' refers to the ability of the algorithm to work with small mini-batches of data. Motivated by the limitation of existing literature, this paper presents a unified stochastic optimization framework for fair empirical risk minimization based on $f$-divergence measures ($f$-FERM). The proposed stochastic algorithm enjoys theoretical convergence guarantees. In addition, our experiments demonstrate the superiority of fairness-accuracy tradeoffs offered by $f$-FERM for almost all batch sizes (ranging from full-batch to batch size of one). Moreover, we show that our framework can be extended to the case where there is a distribution shift from training to the test data. 
Our extension is based on a distributionally robust optimization reformulation of $f$-FERM objective under $\ell_p$ norms as uncertainty sets. Again, in this distributionally robust setting, $f$-FERM not only enjoys theoretical convergence guarantees but also outperforms other baselines in the literature in the tasks involving distribution shifts. 
 An efficient stochastic implementation of $f$-FERM is publicly available~\footnote{https://github.com/optimization-for-data-driven-science/f-FERM}.  
\end{abstract}

\vspace{-5mm}
\section{Introduction}
\vspace{-3mm}
Machine learning models are increasingly deployed in critical applications ranging from healthcare~\citep{ahmad2018interpretable} to image processing~\citep{krizhevsky2017imagenet}, education to job recruitment~\citep{boselli2018classifying}, and social networking to cybersecurity~\citep{xin2018machine}. Machine learning practitioners have adopted learning algorithms to fathom inherently difficult and crucial problems. However, na\"ive deployment of these models may lead to serious shortcomings such as biased predictions against minority groups~\citep{angwin2016machine, buolamwini2018gender}, vulnerability to adversarial attacks~\citep{madry2017towards, 7958570, pmlr-v206-baharlouei23a}, or lack of generalizability~\citep{arjovsky2019invariant}. Consequently, it is of utmost importance to have reliable and trustworthy models that are, in particular, fair and comply with equality norms and provisions worldwide~\citep{act1964civil, elford2023equality}.

With the increasing concern for the trustworthiness of unchecked machine learning algorithms, a broad class of paradigms has been proposed to counteract and mitigate both the cause and effects of model unreliability. Imposing statistical independence between model output and particular input features is of interest in various domains, especially when the generalization of a trained model is based on a collection of spurious features present in the training dataset~\citep{dwork2012fairness, hardt2016equality, yan2017learning}. These could be sensitive features like gender, race, age, and/or income in the context of fairness or confounding factors like environmental artifacts in the context of image classification~\citep{arjovsky2019invariant}. Existing literature on imposing statistical independence between selected input features and model outputs is directed into three approaches: pre-processing, post-processing, and in-processing methods. 

Pre-processing methods entail upstream changes made in datasets to mask sensitive features or reduce the dependency of output variables on sensitive features through transforming data in a stage before the training phase~\citep{kamiran2012data, zemel2013learning, ustun2019fairness}. Post-processing methods involve model-specific adjustments to the model's output to ensure the independence of predictions and sensitive attributes~\citep{hardt2016equality, alghamdi2022beyond}. While pre-processing and post-processing methods do not affect the training procedure, they fail to exploit underlying training mechanisms for the best achievable accuracy-fairness tradeoffs. Unsurprisingly enough, optimizing accuracy and fairness jointly (in-processing) leads to better tradeoffs than sequentially optimizing fairness and accuracy in a pre-processing or post-processing fashion. 

In-processing methods alternatively add fairness constraints or regularizers, penalizing dependence between sensitive attributes and output variables.~\citep{zafar2017fairness} utilizes covariance as the measure of independence between the sensitive attributes and the predictions. While such a measure is amenable to stochastic updates, it fails to capture correlations beyond linear. Alternatively, several non-linear measures such as \Renyi correlation~\citep{baharlouei2020renyi}, $\chi^2$ divergence~\citep{lowy2022stochastic}, $L_{\infty}$ distance~\citep{donini2018empirical}, and Maximum Mean Discrepancy (MMD)~\citep{prost2019toward} are proposed in the literature to establish the independence of the predictors and sensitive attributes. In-processing techniques can be model-specific~\citep{DBLP:journals/corr/abs-2111-03015, aghaei2019learning} or generalizable to different training algorithms~\citep{baharlouei2020renyi, lowy2022stochastic}.

In the spirit of in-processing methods, input data-driven constraints or regularization terms are used to modify training objectives of problems like learning generalizable models to new environments, invariant learning, and learning in the presence of distribution shifts~\citep{arjovsky2019invariant, pmlr-v97-mary19a, baharlouei2020renyi}. Such constrained/regularized reformulations are prevalent in learning robust classifiers against adversarial attacks~\citep{sinha2018certifying}, meta-learning~\citep{balaji2018metareg}, federated learning~\citep{deng2023distributed}, and alternative learning paradigms such as learning distributionally robust optimization (DRO) models~\citep{kuhn2019wasserstein, levy2020large}, tilted empirical risk minimization (TERM)~\citep{li2020tilted}, and Squared-root Lasso~\citep{belloni2011square}. 

While in-processing techniques outperform pre-processing and post-processing approaches, they are not scalable to large datasets because of a lack of adaptability to stochastic optimization~\citep{pmlr-v97-mary19a, lowy2022stochastic}. All aforementioned examples consist of regularization terms in their objective functions where the gradient cannot be described as a linear combination of data point functions. As a result, applying stochastic gradient descent or other stochastic first-order methods on the objective functions of such problems might not converge, especially for small batch sizes. 

Motivated by this,~\citep{lowy2022stochastic} proposes a stochastic optimization framework for Exponential \Renyi Mutual Information as the measure of independence.  More recently~\citet{zhong2023learning} use $f$-divergences as regularization terms to establish the independence between sensitive attributes and predictions. They estimate the $f$-divergence regularizers offline through multi-layer neural networks to avoid the computational challenges of devising scalable stochastic methods for nonconvex min-max problems. Our approach, on the other hand, directly solves the variational formulation for both full-batch and stochastic settings with convergence guarantees to non-spurious solutions. In Section~\ref{section: f-divergence}, using the variational representation of $f$-divergences, we present a convergent stochastic optimization framework for fair learning via $f$-divergences.~\citep{lowy2022stochastic} is a special case of $f$-divergences where $f(t) = t^2 -1$ ($\chi^2$ divergence). Aside from $\chi^2$, all other divergences listed in Table~\ref{table: f-divergence-table} are not introduced in the literature to the best of our knowledge.

Designing convergent stochastic algorithms for fair empirical risk minimization can be further explored in scenarios involving changes in the data distribution from the source to the target domain. Detection and mitigation of biases against protected groups in the presence of distribution shifts have been extensively studied in recent years. \citet{lechner2021impossibility} theoretically shows that learning fair representations (pre-processing) is nearly \textit{impossible} for the popular notions of fairness, such as demographic parity in the presence of the distribution shift. \citet{ding2021retiring}, on the other hand, experimentally demonstrates that applying post-processing fairness techniques~\citep{hardt2016equality} to learn fair predictors of income concerning race, gender, and age fails to transfer from one US state (training domain) to another state. Overlooking distribution shifts can lead to catastrophic decisions threatening the well-being of human subjects when deploying a trained model in certain hospitals to other hospitals~\citep{schrouff2022maintaining}. The current literature for handling distribution shifts with in-processing methods relies on certain assumptions on the type of distribution shift (demographic shift~\citep{fang2020rethinking, du2021fair, maity2021does, giguere2021fairness}, label shift~\citep{dai2020label}, and/or covariate shift~\citep{rezaei2021robust, singh2021fairness}) or explicit access to the \textbf{causal graph}~\citep{mishler2022fair, schrouff2022maintaining} of predictors, sensitive attributes, and target variables. As a result, they face practical limitations and cannot cope with most real-world problems involving complex shifts that cannot be categorized in the ones assumed in their works. 

Alternatively,~\cite{taskesen2020distributionally} provides convex objective functions for imposing fairness on logistic regression using constraint optimization. \cite{NEURIPS2019_1770ae9e} use MMD for defining uncertainty sets around training distribution, whereas \cite{NEURIPS2020_8929c70f} use Integral Probability Measure (IPM) to mitigate the distribution shift. The main limitation of these approaches is their reliance on the convexity of the underlying learning model and lack of scalability due to incompatibility with stochastic optimization algorithms.
% \cite{pmlr-v151-terjek22a} extensively compares entropic regularizers for the optimal transport problem with the Sinkhorn-Knopp algorithm for computing the approximate solution.
\citet{wang2023how} uses the Maximum Mean Discrepancy (MMD) distance between the spectral norm of the Hessian matrix at advantaged and disadvantaged data points. However, they do not provide convergence guarantees for their proposed algorithm to any notion of optimality. In addition, the method is not necessarily amenable to stochastic updates. While we naturally define the uncertainty set directly on the joint distribution of sensitive attributes and predictions, they use the curvature of the obtained solution quantified by the norm of the Hessian matrix as a heuristic for promoting the robustness of the fair solution.

\noindent \textbf{Contributions:} This paper establishes a scalable (stochastic) fair empirical risk minimization framework through regularization via $f$-divergences ($f$-FERM) for both standard and distributed shift settings. $f$-FERM presents a unified methodology based on the Legendre-Fenchel transformation, enabling us to develop theoretically convergent first-order stochastic algorithms when only small batches of data are available at each iteration. Further, we have presented the first distributionally robust optimization framework under $\ell_p$ norms uncertainty sets covering nonconvex losses such as neural networks. The presented framework for fair inference in the presence of distribution shifts does not rely on the causal graph describing the causal interaction of input features, sensitive attributes, and target variables, which is rarely available in practical problems.  

\noindent \textbf{Paper Organization:} We structure our response towards designing scalable, robust, and fair algorithms into two sections. Section~\ref{section: f-divergence} motivates the design of unbiased gradient estimators of objectives with information-theoretic $f$-divergence regularizers. In Section~\ref{section: DRO}, we present our approach for fair inference in the presence of the distribution shift in detail. 
% We consider additional cases when prior knowledge of bounds on distributional distance is present. 
Our experiments provide an extensive examination of various $f$-divergences and their suitability as regularizers and also show the consistency of our method across all batch sizes in contrast to existing benchmarks. Similar experiments are carried out for robust training on varying amounts of distributional shifts in data.

\vspace{-4mm}
\section{Fair Empirical Risk Minimization via $f$-divergences}
\vspace{-3mm}
\label{section: f-divergence}
A widely studied problem in algorithmic fairness is promoting a notion of group fairness, such as demographic parity, equalized odds, equality of opportunity, or sufficiency through an in-processing method. For these notions, we aim to establish a [conditional] statistical independence between the predictions (e.g., the creditworthiness of the individual) and the sensitive attributes (e.g., gender, race). For simplicity of presentation, we formulate all problems under the demographic parity notion, which requires statistical independence between the prediction and the sensitive attribute. Without loss of generality, all formulations and methods are generalizable to other aforementioned notions of group fairness by considering conditional random variables (see Appendix~\ref{appendix: notions}). A popular in-processing approach for training fair (classification) models under the demographic parity notion is to regularize the empirical risk minimization:
\vspace{-2mm}
\begin{equation}
\label{eq: ferm}
\vspace{-1mm}
    \min_{\btheta} \quad \frac{1}{n} \sum_{i=1}^n \ell(\hat{y}_{\btheta}(\bx_i), y_i) + \lambda \mathcal{D} \Big(\mathbb{P}({\hat{y}_{\btheta} (\bx), s}), \mathbb{P}(\hat{y}_{\btheta}(\bx)) \otimes \mathbb{P}(s)  \Big),
\end{equation}
where $\btheta$ is the learning parameters (e.g., weights of the neural network); $\bx_i \in \mathbb{R}^d$ is the $i$-th input feature vector; $y_i$ is the actual label/class for sample~$i$; $\hat{y}_{\btheta}(\bx_i)$ is the prediction of the model for sample~$i$; and $\ell(\hat{y}_{\btheta}(\bx_i), y_i)$ is the loss function measuring the ``goodness-of-fit" for sample~$i$. Here,
$\mathcal{D}$ is a divergence between the joint probability distribution of the predictions and sensitive attributes and the Kronecker product of their marginal distributions. Recall that  $\hat{y}_{\btheta}$ and $s$ are statistically independent iff $\mathbb{P}({\hat{y}_{\btheta} (\bx), s})$ follows  $\mathbb{P}(\hat{y}_{\btheta}(\bx)) \otimes \mathbb{P}(s)$. Thus, the second term in~\eqref{eq: ferm} is zero iff $\hat{y}_{\btheta}$ and $s$ are statistically independent (complete fairness under the demographic parity notion). 

% Various divergence measures have been used in the algorithmic fairness literature to establish the independence of sensitive attributes and predictions. \citep{zafar2017fairness} adopts covariance of sensitive attributes and predictions decision boundary as the measure of independence. While this choice of regularizer leads to an almost efficient optimization procedure, unfortunately, it cannot capture non-linear correlations between sensitive attributes and predictions. To address this issue,~\citep{pmlr-v97-mary19a, baharlouei2020renyi} utilize \Renyi correlation as the measure of independence. While these papers derive convergent algorithms for full-batch gradient descent, they fail to guarantee convergence for stochastic gradient descent (small-batch). Therefore, they are not scalable to large datasets. \citep{lowy2022stochastic} addresses this issue by introducing the Exponential \Renyi Mutual Information (ERMI) which is equivalent to $\chi^2$ divergence. It is known that $\chi^2$ divergence falls under the general class of $f$-divergence measures~\citep{DBLP:journals/corr/NielsenN13}. Therefore, a natural question is whether their framework can be naturally generalized to the class of $f$-divergence measures; and if so, whether more efficient fairness-imposing regularizers can be discovered in this class. 

This section studies the fair empirical risk minimization regularized by a broad class of $f$-divergence measures. Let $\mathbb{P}$ and $\mathbb{Q}$ be two discrete probability measures taking values in $\mathcal{P} = \{1, \dots, m\}$. The $f$-divergence between $\mathbb{P}$ and $\mathbb{Q}$ is defined as~\citep[Def 4.9]{polyanskiy2022information}(see Appendix~\ref{appendix: continuous} for the general continuous case):
\vspace{-1mm}
\begin{equation}
\label{eq: f_divergence}
    \mathcal{D}_f(\mathbb{P}, \mathbb{Q}) = \sum_{j = 1}^{m} \mathbb{Q}_j f \Big(\frac{\mathbb{P}_j}{\mathbb{Q}_j}\Big)
\end{equation}
The above definition, which is also known as $f$-mutual information~\citep{lu2023fmicl, csiszar2022information}, covers many known divergence measures used for imposing fairness, such as  KL-divergence for the choice of $f(t)=t\log(t)$~\citep{shui2022learning}, or $\chi^2$ divergence when $f(t) = (t-1)^2$~\citep{lowy2022stochastic}.  As shown in Appendix~\ref{appendix: f-divergence-properties}, $\mathcal{D}_f$  in~\eqref{eq: ferm}  is zero \textit{if and only if} the probability distribution of $s$ and $\hat{y}_{\btheta}$ \textit{are statistically independent} for the choices of $f$ listed in Table~\ref{table: f-divergence-table}. 
In addition, we prove that these $f$-divergences either cover or provide upper bounds for the popular notions of fairness violations in the literature, such as $\ell_p$ distances, \Renyi correlation~\citep{baharlouei2020renyi},  and demographic parity (equalized odds) violation. This means that by minimizing these regularizers, we are minimizing an upper bound of (other) popular fairness violation measures, and thus we are controlling them implicitly. Further, unlike \Renyi correlation~\citep{baharlouei2020renyi, grari2020fairness}, we can utilize Legendre-Fenchel duality (and variational representation) to develop  (provably) convergent algorithms with \textbf{stochastic (mini-batch) updates}. This formulation and the resulting stochastic optimization algorithm are described in the next subsection.

\vspace{-2mm}
\subsection{A Convergent Stochastic Algorithm for fair ERM via $f$-Divergences}
\vspace{-2mm}
Let us start by rewriting~$\eqref{eq: ferm}$ using $f$-divergences as the divergence measure:
\vspace{-2mm}
{\small
\begin{equation}
\label{eq: f-FERM}
\tag{$f$-FERM}
\vspace{-2mm}
    \min_{\btheta} \quad \frac{1}{n} \sum_{i=1}^n \ell(\hat{y}_{\btheta}(\bx_i), y_i) + \lambda \sum_{j \in \mathcal{Y}, \atop k \in \mathcal{S}} \mathbb{P}_{s}(s = k) \mathbb{P}_{\hat{y}_{\btheta}}(\hat{y}_{\btheta}) f \Big( \frac{\mathbb{P}_{\hat{y}_{\btheta}, s}(\hat{y}_{\btheta} = j, s = k)}{\mathbb{P}_{\hat{y}_{\btheta}}(\hat{y}_{\btheta} = j)\mathbb{P}_{s}(s = k)} \Big)  
\end{equation}
}
\normalsize
While the non-linearity of $f$-divergences in~\eqref{eq: f-FERM} empowers the underlying model to capture more complex dependencies between sensitive attributes and predictions compared to the linear measures~\citep{zafar2017fairness}, the objective function can no longer be represented as a summation of functions over input data points. Consequently, one cannot directly apply the stochastic gradient descent method (or its variations, such as Adam) to the objective function in~\eqref{eq: f-FERM}. In particular, directly evaluating the gradient of the objective function of~\eqref{eq: f-FERM} on a mini-batch of data leads to a statistically biased estimation of the entire objective's gradient. Such statistical biases prevent the convergence of algorithms such as SGD (even with a strongly convex minimization landscape)~\citep{DBLP:journals/corr/abs-2008-00051,DBLP:journals/corr/abs-1807-11880}, let aside the more complex objectives arising in modern-day neural networks. 

To derive stochastic algorithms, one can use the variational forms of $f$-divergences  to delineate them as a pointwise supremum of affine transformation over probability densities. The most commonly used and well-behaved transform is the Legendre-Fenchel transform (often called the convex conjugates), which linearizes the dependence of the objective function to input data points using a variational reformulation. Particularly, we can rewrite~\eqref{eq: f-FERM} using the following result:
\begin{proposition}
\label{thm: variational}
Let $f(\cdot)$ be a convex function. Then,
% and $\mathcal{D}_f$ is an $f$-divergence defined as~\eqref{eq: f_divergence}. Therefore,
\eqref{eq: f-FERM} can be reformulated as:
\vspace{-2mm}
\begin{equation}
\label{eq: variational_representation}
\small
\vspace{-2mm}
\min_{\btheta} \: \max_{A} \quad \sum_{i=1}^n \ell(\hat{y}_{\btheta}(\bx_i), y_i) + \lambda \sum_{j \in \mathcal{Y}, \atop k \in \mathcal{S}}  \Big[A_{jk} \mathbb{P}_{\hat{y}, s}(\hat{y}_{\btheta} = j, s = k) -  f^*(A_{jk}) \mathbb{P}_{\hat{y}}(\hat{y}_{\btheta} = j) \mathbb{P}_{s}(s = k) \Big]
\end{equation}
where $f^{*}(z) = \sup_{w \in \textup{dom}(f)} w^T z - f(w)$ is the Legendre-Fenchel  transformation of the function~$f$.
\end{proposition}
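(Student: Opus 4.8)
The plan is to recognize the regularizer in \eqref{eq: f-FERM} as the $f$-divergence of \eqref{eq: f_divergence} between two discrete measures and then to linearize it summand-by-summand via the Fenchel--Moreau biconjugate identity. Concretely, introduce the joint law $\mathbb{P}_{jk} := \mathbb{P}_{\hat{y}, s}(\hat{y}_{\btheta} = j, s = k)$ and the product-of-marginals law $\mathbb{Q}_{jk} := \mathbb{P}_{\hat{y}}(\hat{y}_{\btheta} = j)\,\mathbb{P}_s(s = k)$, and write $t_{jk} := \mathbb{P}_{jk}/\mathbb{Q}_{jk}$. Then the second term of \eqref{eq: f-FERM} is exactly $\lambda \sum_{j,k} \mathbb{Q}_{jk}\, f(t_{jk}) = \lambda\, \mathcal{D}_f(\mathbb{P},\mathbb{Q})$, so the whole argument reduces to rewriting a single scalar term $\mathbb{Q}_{jk} f(t_{jk})$.

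The key step is the biconjugate identity. Since $f$ is convex (and, for each generator in Table~\ref{table: f-divergence-table}, proper and lower semicontinuous on its domain), the Fenchel--Moreau theorem gives $f = f^{**}$, i.e. $f(t) = \sup_{A}\{A t - f^*(A)\}$ for every $t \in \textup{dom}(f)$. First I would substitute this into the summand to get $\mathbb{Q}_{jk} f(t_{jk}) = \mathbb{Q}_{jk}\sup_{A_{jk}}\{A_{jk} t_{jk} - f^*(A_{jk})\}$. Because $\mathbb{Q}_{jk}\ge 0$, the nonnegative weight may be pulled inside the supremum; combining with the identity $\mathbb{Q}_{jk} t_{jk} = \mathbb{P}_{jk}$ turns the term into $\sup_{A_{jk}}\{A_{jk}\mathbb{P}_{jk} - f^*(A_{jk})\mathbb{Q}_{jk}\}$, which is precisely the bracketed expression appearing in \eqref{eq: variational_representation}.

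Next I would assemble the terms. Since the summand indexed by $(j,k)$ depends only on the scalar variable $A_{jk}$, the objective is separable across indices, so the sum of the pointwise suprema equals a single joint supremum over the matrix $A = (A_{jk})$, namely $\sum_{j,k}\sup_{A_{jk}}(\cdot) = \sup_{A}\sum_{j,k}(\cdot)$. Finally, the empirical loss $\sum_{i}\ell(\hat{y}_{\btheta}(\bx_i), y_i)$ does not depend on $A$, so it can be moved inside the supremum without changing its value; taking $\min_{\btheta}$ of the resulting expression yields \eqref{eq: variational_representation}.

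The main point requiring care (rather than a genuine obstacle) is justifying $f = f^{**}$: this needs $f$ to be proper, convex, and lower semicontinuous, which I would verify case-by-case for the generators listed in Table~\ref{table: f-divergence-table} by passing to the closed convex hull where necessary. One should also note the standard absolute-continuity hypothesis $\mathbb{P} \ll \mathbb{Q}$ (equivalently $\mathbb{Q}_{jk} > 0$ wherever $\mathbb{P}_{jk} > 0$) built into the definition of $\mathcal{D}_f$; under this convention the ratio $t_{jk}$ is well defined and both the pulling of $\mathbb{Q}_{jk}$ inside the supremum and the identity $\mathbb{Q}_{jk} t_{jk} = \mathbb{P}_{jk}$ are immediate, so no further regularity is needed.
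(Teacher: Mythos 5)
Your proposal is correct and follows essentially the same route as the paper's own proof: both invoke the Fenchel--Moreau biconjugation $f = f^{**}$, substitute it into each summand of the discrete $f$-divergence, pull the nonnegative weight $\mathbb{Q}_{jk}$ inside the supremum using $\mathbb{Q}_{jk} t_{jk} = \mathbb{P}_{jk}$, and exchange the sum with the supremum by separability across the entries of $A$. If anything, your write-up is more careful than the paper's (which states the lemma only for lower semi-continuous convex $f$ and omits the $\mathbb{Q}_{jk}>0$ convention), but the underlying argument is identical.
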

\vspace{-4mm}
\begin{proof}
The proof is standard and appears in Appendix~\ref{appendix: thm21}.
\end{proof}
\vspace{-4mm}
In order to solve~\eqref{eq: variational_representation}, we will use (stochastic) first-order methods. Notice that 
% $A_{jk}$ variables are optimization parameters of the maximum problem introduced by the Legendre-Fenchel transformation of $\mathcal{D}_f$. 
% As the model loss function $\ell(\hat{y}_{\btheta}(\bx_i), y_i)$ is independent of $A_k$, the $\text{max}_A$ can be clubbed with the original minimization problem over model parameters $\btheta$ to create a min-max objective where the maximization problem is concave. 
$\mathbb{P}_s(s = k)$ is constant through the optimization procedure and is computed once by counting the number of data points whose sensitive attribute takes the value of $k$: 
% \begin{equation}
% \label{eq: s_estimation}
$
    \pi_k \coloneqq \mathbb{P}_{s}(s = k) = \frac{1}{n} \sum_{i = 1}^{n} \mathbbm{1}(s_i = k).
$
% \end{equation}
Assume we use the softmax layer to compute the probabilities of different classes in our classification task (as it is standard in logistic regression or using neural networks for classification). Let $F_{j} (\bx_i; \btheta)$ be the $j$-th entry of the softmax layer output for datapoint $\bx_i$, predicting the probability of class~$j$.  Then it is easy to show that we can obtain unbiased estimators of $\mathbb{P}_{\hat{y}_{\btheta}}(\hat{y}_{\btheta} = j)$ and   $    \mathbb{P}_{\hat{y}_{\btheta}, s}(\hat{y}_{\btheta} = j, s = k)$ using i.i.d. mini-batch~$\mathcal{B}$ of data points. More precisely, we have
\vspace{-1mm}
{
\small
\begin{equation}\label{eq: unbiasedEstp}
\begin{split} 
    \mathbb{P}_{\hat{y}_{\btheta}}(\hat{y}_{\btheta} = j) & = \frac{1}{n} \sum_{i = 1}^{n} F_{j} (\bx_i; \btheta)= \mathbb{E} \Big[\underbrace{\frac{1}{|\mathcal{B}|} \sum_{i = 1}^{|\mathcal{B}|} F_{j} (\bx_i; \btheta)}_{ \hat{\mathbb{P}}_{\hat{y}_{\btheta}}(j; \: \mathcal{B})} \Big] \\
    \vspace{-1mm}
    \mathbb{P}_{\hat{y}_{\btheta}, s}(\hat{y}_{\btheta} = j, s = k) &= \frac{1}{n} \sum_{i = 1}^{n} F_{j} (\bx_i; \btheta) \mathbbm{1}(s_i = k) = \mathbb{E} \Big[ \underbrace{\frac{1}{|\mathcal{B}|} \sum_{i = 1}^{|\mathcal{B}|} F_{j} (\bx_i; \btheta) \mathbbm{1}(s_i = k)}_{ \hat{\mathbb{P}}_{\hat{y}_{\btheta}, s}(j, k; \: \mathcal{B})} \Big].
\end{split}
\end{equation}
\vspace{-2mm}
}
\normalsize
As a result, Problem~\eqref{eq: variational_representation} can be written as a linearly separable function of input data points ($\bx_i$'s):
\begin{equation}
\label{eq: linearly_separable}    
\small
\min_{\btheta} \: \max_{A} \quad \frac{1}{n}\sum_{i=1}^n \left[ \ell(\hat{y}_{\btheta}(\bx_i), y_i) + \lambda \sum_{j \in \mathcal{Y}, \atop k \in \mathcal{S}}  \Big[A_{jk} F_{j} (\bx_i; \btheta) \mathbbm{1}(s_i = k) -  f^*(A_{jk}) \pi_k F_{j} (\bx_i; \btheta)  \Big] \right]
\end{equation}
Thus, evaluating the gradient of the objective function w.r.t. the variables $\btheta$ and $\bA$ over a random batch of data points leads to an unbiased estimator of the gradient of the objective function. 

In addition to providing an unbiased estimator of gradients, the reformulation~\eqref{eq: linearly_separable} has another crucial property: \textit{the objective function is concave in}~$\bA$. Therefore, optimization problem~\eqref{eq: linearly_separable} falls under the category of nonconvex-concave min-max optimization problems. That is, the objective is (possibly) nonconvex in~$\btheta$ and is concave in~$\bA$.
Thus, we can borrow tools from the (stochastic) nonconvex-concave min-max optimization literature~\citep{pmlr-v119-lin20a,9186144,li2023nonsmooth} to derive a convergent first-order stochastic algorithm as presented in Algorithm~\ref{alg: f-divergence-minibatch}. 
% In other words, we can derive convergent stochastic algorithms by changing the original minimization problem to a min-max optimization and introducing new auxiliary variables $A$. 
We listed the closed-form of $f(\cdot), f^{*}(\cdot),$  for several widely-used $f$-divergence measures including KL-divergence, Reverse KL-divergence, $\chi^2$-divergence, Squared Hellinger distance, Jensen-Shannon divergence, and total variation distance in Table~\ref{table: f-divergence-table}. For the derivation, see Appendix~\ref{appendix: derivations}.

\begin{algorithm}
    \caption{Stochastic Gradient Descent-Ascent (SGDA) for $f$-FERM}
    \label{alg: f-divergence-minibatch}
    \begin{algorithmic}[1]
	 \STATE \textbf{Input}: $\btheta^0 \in \mathbb{R}^{d_{\theta}},$ step-sizes $\eta_\btheta$, $\eta_\alpha$, fairness parameter $\lambda \geq 0,$ iteration number $T$, Batchsize $b$
    \FOR {$t = 1, \ldots, T$}
    \STATE Sample minibatch of data $\mathcal{B}_t = \{(\bx_{t1},\by_{t1}),\cdots,(\bx_{tb},\by_{tb})\}$ 
     % \STATE $dA_{jk} = \nabla_{\bA} (A_{jk}^{t-1} \mathbb{P}_{\hat{\by}_\btheta, \bs} - f^*(A_{jk}^{t-1}) \mathbb{P}_{\hat{\by}_\btheta}\mathbb{P}_{\bs})$
    % \STATE $d \btheta = \nabla_\btheta ( \ell (\btheta^{t-1}, \bx, \by) + \lambda (\bA^{t-1} \mathbb{P}_{\hat{\by}_\btheta, \bs} - f^*(\bA^{t-1})\mathbb{P}_{\hat{\by}_\btheta} \mathbb{P}_\bs)  $
    \STATE $\small \btheta^t = \btheta^{t-1} - \frac{\eta_\btheta}{b} \sum \; \nabla_\btheta \ell(\hat{y}_{\btheta}(\bx), y) - \eta_\btheta \lambda \nabla_\btheta \Big( A_{jk}^{t-1} \hat{\mathbb{P}}_{\hat{\by}_\btheta, \bs}(j, k; \: \mathcal{B}_t) - \pi_k f^*(A_{jk}^{t-1}) \hat{\mathbb{P}}_{\hat{\by}_\btheta}(j; \: \mathcal{B}_t) \Big)$
    
    \STATE $A_{jk}^t = A_{jk}^{t-1} + \eta_\alpha\; \nabla_{\bA} \left( A_{jk}^{t-1} \hat{\mathbb{P}}_{\hat{\by}_\btheta, \bs}(j, k; \: \mathcal{B}_t) - \pi_k f^*(A_{jk}^{t-1}) \hat{\mathbb{P}}_{\hat{\by}_\btheta}(j; \: \mathcal{B}_t) \right)$
    
    \ENDFOR
    \STATE \textbf{Return:} $\btheta^T$
\end{algorithmic}
\vspace{-.03in}
\end{algorithm}
\begin{theorem}
\label{thm: f_divergence_convergence}      (\textbf{Informal Statement}) Assume that $\ell(\cdot, \cdot)$ and $\mathcal{F}_j(\cdot, \btheta)$ are Lipschitz continuous for any given $j$ and $\btheta$ and their gradients are $L$-Lipshitz. Further, assume that $\mathbb{P}(s = k) > 0$ for all protected groups and $\mathbb{P}(\hat{y}_{\btheta} = j) > 0$ at every iteration for all labels $j$. Then, for any given batch size $1 \leq | \mathcal{B}| \leq n$, Algorithm~\ref{alg: f-divergence-minibatch} finds an $\epsilon$-stationary solution of~\eqref{eq: f-FERM} in $\mathcal{O}(\frac{1}{\epsilon^8})$ for any given $\epsilon > 0$. 
\end{theorem}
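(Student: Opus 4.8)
The plan is to recognize problem~\eqref{eq: linearly_separable} as a smooth nonconvex–concave stochastic min–max problem and to invoke the convergence theory for two–timescale stochastic gradient descent–ascent (e.g.~\citep{pmlr-v119-lin20a}), which delivers exactly the $\mathcal{O}(\epsilon^{-8})$ iteration complexity claimed. Writing $L(\btheta, A)$ for the objective in~\eqref{eq: linearly_separable} and $\Phi(\btheta) := \max_A L(\btheta, A)$, the target is to bound the number of iterations needed to drive the norm of the gradient of the Moreau envelope of $\Phi$ below $\epsilon$; this is the appropriate stationarity measure because $\Phi$ is only weakly convex (and generally nonsmooth) in the nonconvex–concave regime. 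First I would assemble the three structural ingredients that the convergence theorem requires: concavity in the max variable, joint smoothness, and a compact feasible set for $A$ together with bounded-variance unbiased stochastic gradients.

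Second, I would verify those ingredients one at a time. For concavity in $A$, each summand $A_{jk}F_j(\bx_i;\btheta)\mathbbm{1}(s_i=k) - f^*(A_{jk})\pi_k F_j(\bx_i;\btheta)$ is affine minus a nonnegative multiple of the convex conjugate $f^*$, hence concave in $A$, as already noted after~\eqref{eq: linearly_separable}. For joint smoothness, the loss term inherits $L$-Lipschitz gradients from the assumption on $\ell$ and $F_j$; for the coupling term I would use that each $F_j$ and its gradient are bounded (Lipschitz $F_j$ on the relevant domain, with softmax outputs already lying in $[0,1]$) together with smoothness of $f^*$ on the interior of its domain, so that the products $A_{jk}F_j$ and $f^*(A_{jk})F_j$ have Lipschitz gradients in $(\btheta, A)$ jointly. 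For compactness of the $A$-domain, the positivity assumptions $\pi_k = \mathbb{P}(s=k)>0$ and $\mathbb{P}(\hat{y}_\btheta = j)>0$ enter, guaranteeing that the maximizing $A_{jk}$ stays in a fixed compact subset of $\mathrm{int}\,\mathrm{dom}(f^*)$, so the ascent variable can be restricted to a convex compact set of finite diameter $D$ without changing the solution.

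Third, with the mini-batch estimators of~\eqref{eq: unbiasedEstp} already shown to be unbiased, I would establish that the resulting stochastic gradients of $L$ in both $\btheta$ and $A$ have bounded variance $\sigma^2$, again using the boundedness of $F_j$, $\pi_k$, and the (now compact) range of $A$. Feeding smoothness, concavity, the compact diameter $D$, and the variance bound into the two-timescale SGDA guarantee — with the ascent step $\eta_\alpha$ taken on a faster timescale than the descent step $\eta_\btheta$, as in Algorithm~\ref{alg: f-divergence-minibatch} — yields an $\epsilon$-stationary point of $\Phi$ after $\mathcal{O}(\epsilon^{-8})$ stochastic gradient steps, independently of the batch size $1\le|\mathcal{B}|\le n$.

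The main obstacle I anticipate is the smoothness/compactness step: controlling the conjugate $f^*$. For several entries of Table~\ref{table: f-divergence-table} (e.g.\ total variation, where $\mathrm{dom}(f^*)$ is bounded, or KL, where $f^*$ is exponential and grows fast), $f^*$ may be nonsmooth or steep near the boundary of its domain, so the crux is to argue — via the positivity assumptions and the explicit optimality relation $A_{jk}^\star = f'\big(\mathbb{P}_{\hat{y},s}/(\mathbb{P}_{\hat{y}}\pi_k)\big)$ — that the iterates remain in a region where $f^*$ is $L'$-smooth, thereby legitimately reducing the problem to the textbook smooth nonconvex–concave setting. Once that localization is secured, the remainder is a direct application of the cited min–max convergence results.
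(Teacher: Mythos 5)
Your proposal follows essentially the same route as the paper's proof: verify that the mini-batch gradients of the reformulation~\eqref{eq: linearly_separable} are unbiased, use the positivity assumptions on $\pi_k$ and $\mathbb{P}(\hat{y}_{\btheta}=j)$ to confine the optimal $A_{jk}$ (via the first-order condition involving $f'$ of the probability ratio) to a bounded region where the gradients have bounded variance, and then invoke the two-timescale SGDA guarantee of \citet{pmlr-v119-lin20a} (their Theorem~4.9) to obtain the $\mathcal{O}(\epsilon^{-8})$ complexity. If anything, your treatment of the crux --- the smoothness and domain issues of $f^*$ near the boundary, handled by localizing the iterates --- is spelled out more explicitly than in the paper, which asserts the boundedness of $A_{jk}$ and $f^*(A_{jk})$ by the same optimality argument but in less detail.
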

\vspace{-2mm}
\begin{proof}
The formal statement and proof are relegated to Appendix~\ref{appendix: thm_convergence}.
\end{proof}

Theorem~\ref{thm: f_divergence_convergence} applies to all $f$-divergences listed in Table~\ref{table: f-divergence-table} for all batch-sizes (even as small as the batch size of $1$). 
More sophisticated algorithms can be used to obtain $\mathcal{O}(\epsilon^{-6})$ iteration complexity~\cite{rafique1810non,zhang2022sapd+}. However, such algorithms use nested loops and require more hyperparameter tunings. We provide an example of such an algorithm in Appendix~\ref{appendix: faster_convergence}. If the f-divergence leads to a strongly concave  function in $\mathbf{A}$ or satisfies Polyak-{\L}ojasiewicz condition (e.g., for $\chi^2$ divergence), a faster rate of $\mathcal{O}(\epsilon^{-5})$ can be obtained for this algorithm (Appendix~\ref{appendix: thm_convergence}). In addition, if larger batch size of  $\mathcal{O}(\epsilon^{-2})$ is used, we can further improve this rate to $O(\epsilon^{-4})$ iteration complexity (see Appendix~\ref{appendix: thm_convergence}). Finally, when full batch size is used, then double/triple-loop algorithms can lead to the iteration complexity bounds of $O(\epsilon^{-2})$ in the nonconvex-strongly concave setting and  $O(\epsilon^{-3})$ in the general nonconvex-concave setting; see 
\citep{kong2021accelerated,nouiehed2019solving, ostrovskii2021efficient,thekumparampil2019efficient}.

\begin{table}
\vspace{-10mm}
  \caption{Unbiased Estimators for $f$-divergence Regularizers}
  \label{table: f-divergence-table}
  % \tiny
  \centering
  \begin{tabular}{lll}
    \toprule
    Divergence   & $f(t)$ & The term $r_{jk}$ inside regularizer $\lambda \sum_{j,k} r_{jk}$ in \eqref{eq: linearly_separable} \\
    \midrule
    
    $\chi^2$ & $(t-1)^2$ & $\pi_k [ A_{jk}\mathbb{P}_{\hat{\by}_{\btheta}|\bs_k} - (A_{jk} + \frac{\bA _{jk}^2}{4})\mathbb{P}_{\hat{\by}_{\btheta}} ] $  \\
    Reverse KL & $-\ln t $ & $\pi_k [ A_{jk} \mathbb{P}_{\hat{\by}_{\btheta}|\bs_k} + (1+ \ln (-A_{jk}))\mathbb{P}_{\hat{\by}_{\btheta}} ] $ \\
    Total Variational  & $\frac{1}{2} |t-1|$ & $\pi_k A_{jk} [ \mathbb{P}_{\hat{\by}_{\btheta}|\bs_k}- \mathbb{P}_{\hat{\by}_{\btheta}}] \mathbb{I}_{\{|A_{jk}|<1/2\}}$ \\
    KL  & $t \ln t$ & $ \pi_k [A_{jk} \mathbb{P}_{\hat{\by}_{\btheta}|\bs_k} - e^{A_{jk}-1} \mathbb{P}_{\hat{\by}_\btheta}] $  \\
    Jensen-Shannon  & $ -(t+1) \ln (\frac{t+1}{2}) + t \ln t $ & $ \pi_k [ A_{jk} \mathbb{P}_{\hat{\by}_{\btheta}|\bs_k} + \ln(2 - e^{A_{jk}})\mathbb{P}_{\hat{\by}_\btheta}  ] $ \\
    Squared Hellinger  & $ (\sqrt{t} -1)^2 $ & $ \pi_k [ A_{jk} \mathbb{P}_{\hat{\by}_\btheta|\bs_k} + (A_{jk}^{-1} +2) \mathbb{P}_{\hat{\by}_\btheta}] $\\
   % Cressie-Read ($\alpha$) Divergence & $ \frac{t^{\alpha} - \alpha t - (1-\alpha)}{\alpha (\alpha-1)} $ & \parbox{5cm}{$ \pi_k \Big[A_{jk} \mathbb{P}_{\hat{\by}_\btheta|\bs_k} - \frac{\mathbb{P}_{\hat{\by}_\btheta}}{A_{jk}} \Big( \Big( (A_{jk}-1)\frac{\mathbb{P}_{\hat{\by}_\btheta|\bs_k}}{\pi_k \mathbb{P}_{\hat{\by}_\btheta}} +1 \Big)^{\{ \frac{A_{jk}}{A_{jk}-1} \}} -1\Big) \Big] $}\\
    \bottomrule
  \end{tabular}
\end{table}
\vspace{-4mm}
\section{Robust $f$-FERM in the Presence of Distribution Shifts}
\vspace{-3mm}
\label{section: DRO}
In the previous section, we assumed that the training and test domains have the same distribution. However, this assumption is not necessarily valid in certain applications~\citep{fang2020rethinking}. In particular, a model that behaves fairly on the training data distribution may have an unfair performance in the test phase. To address this issue, this section develops stochastic algorithms for fair empirical risk minimization via $f$-divergences in the presence of the distribution shifts.

Assume that $\hat{\mathbb{P}}_{s, y}(s, \hat{y})$ is the joint distribution of sensitive attributes and predictions on the training data. The distributionally robust fair empirical risk minimization via $f$-divergences is formulated as:
\vspace{-2mm}
\begin{equation}
\label{eq: ferm_dro}
\vspace{-3mm}
    \min_{\btheta} \: \frac{1}{n} \sum_{i=1}^n \ell(\hat{y}_{\btheta}(\bx_i), y_i)  \quad \st \quad \max_{\mathbb{P} \in \mathcal{B}} \mathcal{D}_f \Big(\mathbb{P}({\hat{y}_{\btheta} (\bx), s}) || \:  \mathbb{P}(\hat{y}_{\btheta}(\bx)) \otimes \mathbb{P}(s)  \Big) \leq \kappa. 
\end{equation}
$\mathcal{B} = \mathcal{B}(\hat{\mathbb{P}}, \delta)$ is the distributional uncertainty set defined as a certain ball around the training distribution $\hat{\mathbb{P}}$ with radius $\delta$. This formulation guarantees that the model fairness is preserved  (up to a violence of f-divergence less than $\kappa$) even when the test distribution slightly changes.  
With a slight change of notation, $\hat{\mathbb{P}}$ refers to the training distribution, whereas  $\mathbb{P}$ is the optimization parameter. 

One can define the uncertainty set through an $\epsilon$ neighborhood around the joint distribution of the training data characterized by a distance measure such as $\ell_p$ norms, Wasserstein distance, or MMD distance. While these distributionally robust uncertainty sets are thoroughly analyzed for empirical risk minimization (ERM)~\citep{kuhn2019wasserstein, blanchet2019robust, levy2020large}, the DRO formulation for ERM is limited to the Wasserstein distance for the fair logistic regression~\citep{taskesen2020distributionally} and MMD distance~\citep{wang2023how} on the distribution curvature as a heuristic for robustness. Unfortunately, none of these approaches offer a convergent algorithm with stochastic updates. Further, some of these approaches are limited to special loss functions and heuristics. On the other hand, we study imposing the distributionally robust fairness via $f$-divergences for a general loss function where the uncertainty set is characterized by $\ell_p$ norms (Section~\ref{sec: Lp_norm}) or $f$-divergences (Section~\ref{sec: g_divergence}). Our results show that the former approach is more suitable when lower levels of robustness for fairness are required, and the latter works better for handling larger distribution shifts.
 
\vspace{-2mm}
\subsection{Robust $f$-FERM Under $\ell_p$ Norms  and Small Distribution Shifts}
\vspace{-2mm}
\label{sec: Lp_norm}
This section focuses on the widely studied $\ell_p$ norms as the uncertainty set for the distributional distance between the training and test domains. In this case, Problem~\eqref{eq: ferm_dro} can be written as:
\vspace{-2mm}
\begin{equation}
\label{eq: f-DRO}
\vspace{-1mm}
    \min_\btheta \frac{1}{n} \sum_{i=1}^{n} \ell (\hat{y}_{\btheta}(\bx_i), y_i) \quad \st \quad  \max_{\substack{||\mathbb{P} - \hat{\mathbb{P}}||_p \leq \delta \\ \substack{||\mathbb{Q} - \hat{\mathbb{Q}}||_p \leq \delta}}}  \mathcal{D}_f(\mathbb{P}||\mathbb{Q}) \leq \kappa,
\end{equation}
\vspace{-1mm}
where $\hat{\mathbb{P}}$ represents the joint distribution of the sensitive attributes and predictions and $\hat{\mathbb{Q}}$ denotes the Kronecker product of the marginal distributions between sensitive attributes and predictions.

Since handling non-convex constraints is challenging, as it is standard in training machine learning models, we consider the Lagrangian relaxation of Problem~\eqref{eq: f-DRO} as follows:
\vspace{-2mm}
\begin{equation}
\label{eq: dro_lagrangian}
% \tag{$f$-DRO}
\vspace{-2mm}
\min_\btheta \frac{1}{n} \sum_{i=1}^{n} \ell (\hat{y}_{\btheta}(\bx_i), y_i) + \lambda  \max_{\substack{\|\mathbb{P} - \hat{\mathbb{P}}\|_p \leq \delta \\ \substack{\|\mathbb{Q} - \hat{\mathbb{Q}}\|_p \leq \delta}}}  \mathcal{D}_f(\mathbb{P}||\mathbb{Q})    
\end{equation}
This problem falls under the nonconvex-nonconcave, min-max optimization category and is most likely to be computationally hard for general uncertainty sets~\citep{daskalakis2021complexity}. However, such a min-max optimization problem can be solved to stationarity when the diameter of set $\mathcal{B}$ is small (i.e., under small domain shift), see~\citep{ostrovskii2021nonconvex}. The core idea is to approximate the inner maximization problem with the Taylor approximation, leading to a nonconvex-concave min-max optimization, which is easier to solve~\citep{daskalakis2021complexity, razaviyayn2020nonconvex}. This idea has been used and been successful in machine learning (see~\citet{foret2020sharpness} for its use in Sharpness-aware minimization). Utilizing this idea, Problem~\eqref{eq: dro_lagrangian} can be approximated as:
\vspace{-2mm}
\small
\begin{equation}\label{eq: min-max-small-dro}
% \begin{split}
\vspace{-2mm}
    \min_{\btheta} \max_{\substack{\|\mathbb{U}\|_p \leq \delta \\ \substack{\|\mathbb{V}\|_p \leq \delta}}} 
   \Bigg( h(\btheta, \mathbb{U},\mathbb{V}) := \frac{1}{n} \sum_{i=1}^{n} \ell (\hat{y}_{\btheta}(\bx_i), y_i) 
   % & 
   + \lambda \langle \mathbb{U}, \nabla_{\mathbb{P}} \mathcal{D}_f (\hat{\mathbb{P}} || \hat{\mathbb{Q}}) \rangle 
   % \\
   % & 
   +  \lambda \langle \mathbb{V}, \nabla_{\mathbb{Q}} \mathcal{D}_f (\hat{\mathbb{P}} || \hat{\mathbb{Q}}) \rangle \Bigg), 
% \end{split}
\end{equation}
\normalsize
where we used the change of variables~$\mathbb{U} := \mathbb{P} - \hat{\mathbb{P}}$ and $\mathbb{V} := \mathbb{Q} - \hat{\mathbb{Q}}$.
Equivalently, 
\vspace{-1mm}
\begin{equation}
\label{eq: dro_grad}
\vspace{-1mm}
    \min_{\btheta} \frac{1}{n} \sum_{i=1}^{n} \ell (\hat{y}_{\btheta}(\bx_i), y_i) + \lambda \delta \|\nabla_{\mathbb{P}} \mathcal{D}_f (\hat{\mathbb{P}} || \hat{\mathbb{Q}})\|_q +  \lambda \delta \| \nabla_{\mathbb{Q}} \mathcal{D}_f (\hat{\mathbb{P}} || \hat{\mathbb{Q}})\|_q, 
\end{equation}
where $\|\cdot\|_q$ is the dual of the $\ell_p$ norm with $\frac{1}{p} + \frac{1}{q} = 1$.
\begin{proposition}
\label{thm: epsilon}
Assume that the gradient of the loss function is $L$-Lipshitz, and the second-order derivative of the loss exists. Then, a given $\epsilon-$approximate stationary solution of Problem~\eqref{eq: dro_grad} is an $O(\epsilon)-$approximate stationary solution of Problem~\eqref{eq: dro_lagrangian} whenever $L\delta \lesssim \epsilon  $.
\end{proposition}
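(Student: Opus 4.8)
The plan is to view both problems as composite minimizations in $\btheta$ that share the empirical loss $\frac1n\sum_i\ell(\hat y_\btheta(\bx_i),y_i)$ and differ only in their fairness penalty. Problem~\eqref{eq: dro_lagrangian} penalizes the exact worst-case divergence $\Phi(\btheta):=\max_{\|\mathbb{U}\|_p\le\delta,\,\|\mathbb{V}\|_p\le\delta}\mathcal{D}_f(\hat{\mathbb{P}}+\mathbb{U}\,\|\,\hat{\mathbb{Q}}+\mathbb{V})$, whereas the linearization in~\eqref{eq: min-max-small-dro}, whose inner maximum is evaluated in closed form to give~\eqref{eq: dro_grad}, penalizes the first-order surrogate $\tilde\Phi(\btheta)=\mathcal{D}_f(\hat{\mathbb{P}}\|\hat{\mathbb{Q}})+\delta\|\nabla_{\mathbb{P}}\mathcal{D}_f\|_q+\delta\|\nabla_{\mathbb{Q}}\mathcal{D}_f\|_q$, i.e.\ the nominal divergence plus the linear robustness margin. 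Because the loss term is common, I would reduce the whole statement to a single uniform estimate: the Hausdorff distance between the generalized (Clarke) subdifferentials $\partial\Phi(\btheta)$ and $\partial\tilde\Phi(\btheta)$ is $O(L\delta)$. Granting this, if $\btheta$ satisfies $\mathrm{dist}(0,\partial\tilde G(\btheta))\le\epsilon$ for the objective $\tilde G$ of~\eqref{eq: dro_grad}, then since $\partial G-\partial\tilde G=\lambda(\partial\Phi-\partial\tilde\Phi)$ the triangle inequality gives $\mathrm{dist}(0,\partial G(\btheta))\le\epsilon+\lambda\cdot O(L\delta)$, which is $O(\epsilon)$ under $L\delta\lesssim\epsilon$.

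To establish the subdifferential estimate I would apply Danskin's theorem to $\Phi$. After the change of variables $\mathbb{U}=\mathbb{P}-\hat{\mathbb{P}}$, $\mathbb{V}=\mathbb{Q}-\hat{\mathbb{Q}}$ the feasible set becomes a fixed product of two $\ell_p$-balls of radius $\delta$, so the inner maximization is over a compact convex set that does not move with $\btheta$. Since the second-order derivative of the loss exists and its gradient is $L$-Lipschitz, the composed map $\btheta\mapsto\mathcal{D}_f(\hat{\mathbb{P}}(\btheta)\|\hat{\mathbb{Q}}(\btheta))$ is $C^2$ with Hessian bounded by a multiple of $L$. Danskin then yields that every element of $\partial_\btheta\Phi(\btheta)$ has the chain-rule form $J_{\hat{\mathbb{P}}}^{\top}\nabla_{\mathbb{P}}\mathcal{D}_f\big|_{*}+J_{\hat{\mathbb{Q}}}^{\top}\nabla_{\mathbb{Q}}\mathcal{D}_f\big|_{*}$, where $*$ denotes evaluation at a maximizer $(\mathbb{U}^\star,\mathbb{V}^\star)$ and $J_{\hat{\mathbb{P}}},J_{\hat{\mathbb{Q}}}$ are the Jacobians of the predicted distributions. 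The surrogate gradient, computed from $\tilde\Phi$, has the same nominal chain-rule term $J_{\hat{\mathbb{P}}}^{\top}\nabla_{\mathbb{P}}\mathcal{D}_f\big|_{0}+J_{\hat{\mathbb{Q}}}^{\top}\nabla_{\mathbb{Q}}\mathcal{D}_f\big|_{0}$ evaluated at the center, plus a margin contribution that is a subgradient of $\|\cdot\|_q$ composed with $\delta$ times the Hessian of $\mathcal{D}_f$ and the Jacobians.

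The decisive observation is that $(\mathbb{U}^\star,\mathbb{V}^\star)$ need not be localized: it lies in the $\delta$-ball, so since $\nabla_{\mathbb{P}}\mathcal{D}_f$ and $\nabla_{\mathbb{Q}}\mathcal{D}_f$ are Lipschitz with constant $O(L)$ (the Hessian bound), one has $\|\nabla_{\mathbb{P}}\mathcal{D}_f|_{*}-\nabla_{\mathbb{P}}\mathcal{D}_f|_{0}\|=O(L\delta)$ and likewise for $\mathbb{Q}$. Thus the exact outer gradient differs from the nominal chain-rule term by $O(L\delta)$ once $J_{\hat{\mathbb{P}}},J_{\hat{\mathbb{Q}}}$ are bounded (a consequence of the Lipschitz hypotheses). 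The surrogate's margin contribution is itself a Hessian-times-$\delta$ quantity and hence also $O(L\delta)$. Subtracting the two expansions, the common nominal term cancels and only these two $O(L\delta)$ pieces remain, giving $\mathrm{dist}(\partial\Phi(\btheta),\partial\tilde\Phi(\btheta))=O(L\delta)$, exactly the small-maximization-domain phenomenon of~\citet{ostrovskii2021nonconvex}. Feeding this into the triangle inequality of the first paragraph and invoking $L\delta\lesssim\epsilon$ closes the argument.

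The main obstacle I anticipate is that, without strong concavity of $\mathcal{D}_f$ in $(\mathbb{U},\mathbb{V})$, the maximizer of the exact inner problem and the maximizer of its linearization can sit at genuinely different points of the $\delta$-ball, so one cannot proceed by arguing that the two maximizers coincide. The resolution, and the crux of the proof, is that Danskin reduces the outer gradient to $\nabla\mathcal{D}_f$ evaluated \emph{at} the maximizer, and for that only the maximizer's membership in the $\delta$-ball matters, not its precise location; Lipschitzness of $\nabla\mathcal{D}_f$ then converts the $O(\delta)$ displacement in the maximization variable into an $O(L\delta)$ perturbation of the outer gradient. The remaining work is bookkeeping: handling the Clarke subdifferentials of the nonsmooth maximum and of the dual norm $\|\cdot\|_q$, and using the standing assumptions $\mathbb{P}(s=k)>0$ and $\mathbb{P}(\hat y_\btheta=j)>0$ to keep the relevant first- and second-order derivatives of $\mathcal{D}_f$ bounded so that the effective Hessian constant is genuinely $O(L)$.
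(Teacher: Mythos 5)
Your proof is correct in substance and follows essentially the same route as the paper: the paper's entire proof of this proposition is the remark that it is ``an immediate application of'' Theorem 3.1 of \citet{ostrovskii2021nonconvex}, and your Danskin-plus-Lipschitz argument --- maximizers of the exact inner problem need only be located to within the $\delta$-ball, so $O(L)$-Lipschitzness of $\nabla_{\mathbb{P},\mathbb{Q}}\mathcal{D}_f$ converts their unknown position into an $O(L\delta)$ perturbation of the outer (sub)gradient, after which a triangle inequality transfers approximate stationarity --- is precisely the mechanism behind that theorem, as you note yourself. So your proposal is best viewed as an unpacked, self-contained rendering of the paper's citation rather than a genuinely different argument; the only caveat, which you also flag, is that identifying the effective Hessian constant of $\mathcal{D}_f$ with the constant $L$ in the hypothesis is as informal in your write-up as it is in the proposition itself.

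One thing you did silently deserves to be said out loud, because the truth of the statement depends on it. Your surrogate $\tilde{\Phi}$ keeps the nominal divergence $\mathcal{D}_f(\hat{\mathbb{P}}\,\|\,\hat{\mathbb{Q}})$ alongside the margin terms $\delta\|\nabla_{\mathbb{P}}\mathcal{D}_f\|_q+\delta\|\nabla_{\mathbb{Q}}\mathcal{D}_f\|_q$. As printed, \eqref{eq: min-max-small-dro} and \eqref{eq: dro_grad} contain only the linear terms: the zeroth-order term $\lambda\mathcal{D}_f(\hat{\mathbb{P}}\,\|\,\hat{\mathbb{Q}})$ of the Taylor expansion has been dropped (it reappears in the update rule of Algorithm~\ref{alg: gradient-regularizer}, which confirms it is intended). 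For the literal \eqref{eq: dro_grad} the proposition would be false: the omitted term is $\btheta$-dependent, and its gradient $\lambda\nabla_{\btheta}\mathcal{D}_f(\hat{\mathbb{P}}\,\|\,\hat{\mathbb{Q}})$ need not be $O(\epsilon)$ at a stationary point of the truncated objective, so the common ``nominal'' chain-rule term in your two subdifferential expansions would no longer cancel. Your corrected reading is the one under which both your argument and the cited theorem apply, and making that correction explicit should be considered part of a complete proof.
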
  

This proposition, which is an immediate application of~\citet[Theorem 3.1]{ostrovskii2021nonconvex}, states that if the desired training accuracy~$\epsilon$ is comparable with the distribution shift amount~$\delta$ (i.e. small distribution shift regime), then one can solve problem~(\ref{eq: dro_grad}) instead of (\ref{eq: dro_lagrangian}). Thus, in this regime, we need to solve~\eqref{eq: dro_grad} or equivalently~\eqref{eq: min-max-small-dro}. To this end, we 
% can compute (sub)gradients~$\nabla_P \mathcal{D}_f (\hat{\mathbb{P}} || \hat{\mathbb{Q}})$ and $\nabla_Q \mathcal{D}_f (\hat{\mathbb{P}} || \hat{\mathbb{Q}})$, and then performing auto-differentiation on~\eqref{eq: dro_grad}. 
need to obtain the  (sub)-gradients of the objective function in~\eqref{eq: min-max-small-dro} w.r.t the $\btheta$, $\mathbb{U}$, and $\mathbb{V}$ variables.  First, notice that % the (sub)gradients w.r.t. $\mathbb{U}$ and $\mathbb{V}$ can be computed by
\[
\nabla_{\mathbb{U}} h(\btheta, \mathbb{U},\mathbb{V}) = \nabla_{\mathbb{P}} D_f (\hat{\mathbb{P}}|| \hat{\mathbb{Q}}) = \boldsymbol{\alpha}^*(\hat{\mathbb{P}},\hat{\mathbb{Q}})
\; \textrm{and} \; 
\nabla_{\mathbb{V}} h(\btheta, \mathbb{U},\mathbb{V}) = \nabla_{\mathbb{Q}} D_f (\hat{\mathbb{P}}|| \hat{\mathbb{Q}}) = f^*(\boldsymbol{\alpha}^*(\hat{\mathbb{P}},\hat{\mathbb{Q}})),
\]
where $\boldsymbol{\alpha}^* (\hat{\mathbb{P}},\hat{\mathbb{Q}}) \in \argmax_{\boldsymbol{\alpha}}  \sum_j \alpha_j \hat{p}_j(\btheta) -  \hat{q}_j(\btheta) f^* (\alpha_j)$. 
Here we invoked Danskin's theorem on the variational form of $D_f$; $\hat{p}_j(\btheta)$ and $\hat{q}_j(\btheta)$ is the $j$-th element of $\hat{\mathbb{P}}$ and  $\hat{\mathbb{Q}}$, respectively.
Next, we need to compute  $\nabla_{\btheta} h(\btheta, \mathbb{U},\mathbb{V})$.  Notice that the derivative of the first term in $h(\cdot)$ w.r.t. $\btheta$ is easy to compute. We next calculate the derivative of the second term of $ h(\btheta, \mathbb{U},\mathbb{V})$ w.r.t. $\btheta$. As the derivative of the third term can be computed similarly, we omit its derivation here.
\begin{equation}
\label{eq: derivative_h_theta}
\begin{split}
\nabla_{\btheta} \langle \mathbb{U}, \nabla_{\mathbb{P}} \mathcal{D}_f (\hat{\mathbb{P}} || \hat{\mathbb{Q}}) \rangle 
% & = \nabla_{\btheta} \langle \mathbb{U}, \nabla_{\mathbb{P}} \sup_{\alpha} ( \sum_j \alpha_j \hat{p}_j(\btheta) -  \hat{q}_j(\btheta) f^* (\alpha_j))\rangle 
% \\
% & 
= \nabla_{\btheta} \langle \mathbb{U} , \boldsymbol{\alpha}^*(\hat{\mathbb{P}},\hat{\mathbb{Q}}) \rangle 
= \sum_j u_j \frac{\hat{q}_j(\btheta) \nabla_{\btheta} \hat{p}_j (\btheta)  - \hat{p}_j(\btheta) \nabla_{\btheta} \hat{q}_j(\btheta) }
{\hat{q}_j^2(\btheta) \times (f^*)^{\prime\prime} (\alpha)|_{\alpha = \alpha_j^* (\hat{\mathbb{P}}, \hat{\mathbb{Q}})}}
\end{split}
\end{equation}
where in the last equation, we used the implicit function theorem to compute the derivative of $\boldsymbol{\alpha}^*$ w.r.t. $\btheta$. Notice that an implicit assumption here is that $f$ is differentiable (which holds for KL-divergence, $\chi^2$ divergence, reverse KL, Jensen-Shannon, and Squared Hellinger distance). Having access to the gradients, we can apply the standard [sub-]gradient descent-ascent algorithm to obtain a solution to Problem~\eqref{eq: dro_grad} (see Appendix~\ref{appendix: dro_alg} for the details).   
\vspace{0.2cm}

\noindent\textbf{A semi-stochastic memory-efficient first-order training algorithm.} To apply (stochastic) gradient descent-ascent  algorithm~\citep{pmlr-v119-lin20a} to problem~\eqref{eq: min-max-small-dro}, we need to have unbiased estimator of the function $h(\btheta, \mathbb{U},\mathbb{V})$ w.r.t. $\btheta$, $\mathbb{U}$, and $\mathbb{V}$ variables. While it seems challenging to obtain unbiased estimator w.r.t. all variables, one can notice that if $\hat{p}_j(\btheta)$ and  $\hat{q}_j(\btheta)$ can be computed easily with one forward pass over all data points (i.e., in $O(m\times n)$ memory requirement). Consequently, the gradient of $h(\btheta, \mathbb{U},\mathbb{V})$ w.r.t. $\mathbb{U}$ and $\mathbb{V}$  can be computed with one forward pass over all data points (without the need for doing backpropagation). On the other hand, one can easily obtain unbiased estimator of $\nabla_{\btheta}\hat{p}_j(\btheta)$ and $\nabla_{\btheta}\hat{q}_j(\btheta)$ in~\eqref{eq: derivative_h_theta} using a small mini-batch of data. 
Such a task requires $O(b \times d)$ memory with $d$ being the number of parameters (i.e., $\btheta \in \mathbb{R}^d$) and $b$ being the batch size. Combining this unbiased estimation with the computed values of $\hat{p}_j(\btheta)$ and  $\hat{q}_j(\btheta)$ leads to an unbiased estimator of the objective of \eqref{eq: min-max-small-dro} w.r.t. $\btheta$ variable. To summarize, we need to do one forward propagation to obtain gradients w.r.t. $\mathbb{U}$ and $\mathbb{V}$, and we only do backpropagation for computing gradients w.r.t. $\btheta$ over the mini-batch of data. Such an algorithm requires $O(mn + bd)$ memory requirement and thus can be used for training large models (with $d,n \gg b,m$). It is known that memory requirements are the major limiting factors in training large models such as LLMs~\citep{malladi2023mezo}.

\vspace{-2mm}
\subsection{Robust $f$-FERM Under $\ell_{\infty}$ Norms and Potentially Large Distribution Shifts}
\vspace{-2mm}
\label{sec: g_divergence}
The developed framework in the previous section assumes the distribution shift is small (the uncertainty set diameter is smaller than a certain threshold). When preserving fairness in the presence of large distribution shifts is a priority, our previous methodology might not work well. 
As discussed before, the formulation \eqref{eq: dro_lagrangian} leads to a nonconvex-nonconcave min-max optimization problem and this class of problems is hard to solve computationally in general (even to stationarity notions). 
Thus, we need to exploit the structure of the problem. In this section, we show that we can exploit the structure to develop a first-order algorithm under large distribution shifts. Particularly, we focus on the case where the uncertainty set is $\ell_{\infty}$ ball and the divergence satisfies certain assumptions (i.e., $f^*(\alpha^*)>0$ and $\alpha^*>0$, which is satisfied for KL divergence). 

Since the function $D_f$ is convex in $\mathbb{P}$ and $\mathbb{Q}$, under $\ell_{\infty}$ uncertainty set on $\mathbb{P}$ and $\mathbb{Q}$, the optimal solution of the maximization problem in~\eqref{eq: dro_lagrangian} will be at an extreme point. Moreover, under the assumption that $f^*(\alpha^*)>0$ and $\alpha^*>0$ (which is satisfied for KL divergence), one can easily see that the optimal $p_j = \min\{\hat{p}_j + \delta,1\}$ and $q_j = \max\{\hat{q_j} - \delta, 0\}$ (see Appendix~\ref{appendix: dro_infinity} for the exact proof). Notice that we need to relax the probability simplex constraint to obtain this efficient, optimal closed-form solution. Thus under this assumption, problem~\eqref{eq: dro_lagrangian} can be reformulated as
\vspace{-1mm}
\begin{equation}
\label{eq: dro_lagrangian_linfty}
\vspace{-1mm}
\min_\btheta \frac{1}{n} \sum_{i=1}^{n} \ell (\hat{y}_{\btheta}(\bx_i), y_i) + \lambda   \mathcal{D}_f(\min\{\mathbb{P} +\delta, 1\} ||\max\{\mathbb{Q} - \delta, 0\}),    
\end{equation}
which is a regular minimization problem and  (sub)gradient descent can be utilized to solve it. 
\vspace{-4mm}
\section{Experiments}
\label{section: experiments}
\vspace{-3mm}
We use three popular notions of group fairness: demographic parity, equalized odds, and equality of opportunity violations (see Appendix~\ref{appendix: notions} for definitions) to measure the fairness of trained models. To run Algorithm~\ref{alg: f-divergence-minibatch}, we set $\eta_{\btheta}$ and $\eta_{\alpha}$ to $10^{-5}$ and $10^{-6}$ respectively in all experiments. Further, by changing $\lambda$, we get different points in the trade-off curve between accuracy and fairness. The range of   $\lambda$ depends on the $f$-divergence (see Appendix~\ref{appendix: hyperparameter} for more information on tuning hyper-parameters). In the inference phase of our experiments, we use the standard maximum likelihood decoding based on the output of the softmax layer, i.e., the predicted label is the label with the highest logit value.

As we will see in this section, several $f$-divergence measures lead to reasonable fairness/accuracy tradeoffs and can outperform existing benchmarks. However,  no single $f$-divergence measure uniformly outperforms other measures in all the experiments. Thus, we believe in applications, the choice of the $f$-divergence can be viewed as a hyperparameter that can be tuned by cross-validation.

\vspace{-2mm}
\subsection{Fairness-Accuracy Tradeoffs on Benchmark Datasets}
\vspace{-2mm}
\begin{wrapfigure}{r}{0.45\textwidth}
\vspace{-8mm}
    \begin{center}    \centerline{\includegraphics[width=0.44\columnwidth]{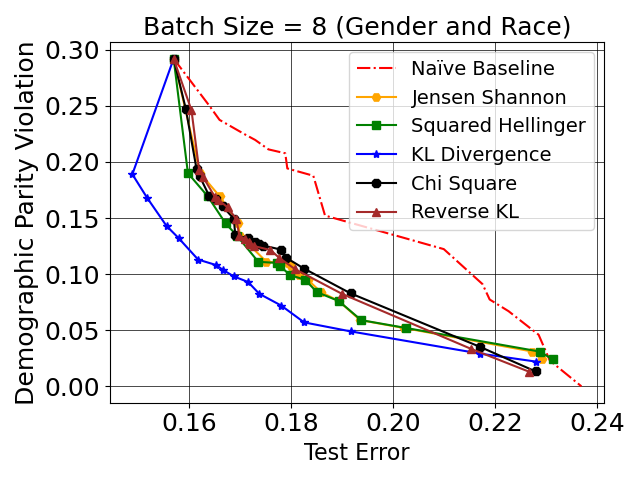}}
    \vspace{-4mm}
    \caption{\small Performance of different $f$-divergences as the regularizers. The experiment is on the adult dataset with gender and race as sensitive attributes. While the offered tradeoffs are close to each other for small demographic parity violations, KL-divergence shows an extraordinary performance for a low-fairness high-accuracy regime. We do not display the performance for larger batch sizes or when only one sensitive attribute is available due to the insignificant difference between the performance of different $f$-divergences.}
    \label{fig: f-divergence}
    \vspace{-4mm}
\end{center}
\end{wrapfigure}
In the first set of experiments, we compare different $f$-divergence formulations for~\eqref{eq: f-FERM} to each other and several state-of-the-art approaches supporting multiple sensitive attributes. Figure~\ref{fig: f-divergence} demonstrates the given tradeoff on the adult dataset~\citep{misc_adult_2} with gender and race as the sensitive attributes (black-female, black-male, white-female, white-male).  To measure fairness, we use the demographic parity violation defined as:
\vspace{-1mm}
\begin{equation*}
\vspace{-1mm}
 \textrm{DPV} = \max_{i, j \in \mathcal{S}} | \mathbb{P}(\hat{y} = 1 | s = i) - \mathbb{P}(\hat{y} = 1 | s = j)|    
\end{equation*}
In the case of binary sensitive attributes (e.g., gender), there is no significant variation between different $f$-divergences. However, when we have $2$ sensitive attributes and the batch size is small ($8$ in Figure~\ref{fig: f-divergence}), the results significantly differ for various $f$-divergences.
Interestingly, KL-divergence for smaller $\lambda$ values shows improvement in fairness violation and accuracy simultaneously.  We do not observe such a phenomenon for other $f$-divergences and state-of-the-art approaches in the literature. Further, in Figure~\ref{fig: adult}, we compare one of the $f$-divergences (reverse KL) to several SOTA methods including~\citet{pmlr-v97-mary19a, baharlouei2020renyi, cho2020fair}. Other approaches such as the pre-processing method of~\citet{zemel2013learning}, post-processing approach of~\citet{hardt2016equality}, and several in-processing methods including~\citet{zafar2017fairness, donini2018empirical, jiang2020wasserstein} demonstrate lower performance compared to the ones depicted in Figure~\ref{fig: adult} and are removed from the figure. While our approach demonstrates consistently good performance across different batch sizes (full-batch, $64$, $8$, $2$), the performances of other methods drop significantly for smaller ones. For further experiments on other datasets (German and COMPAS) and other fairness measures (equality of opportunity and equalized odds violations), see Appendix~\ref{appendix: experiments}.
\begin{figure}[H]
% \vspace{-3mm}
    \begin{center}
\vspace{-3mm}
\centerline{\includegraphics[width=1\columnwidth]{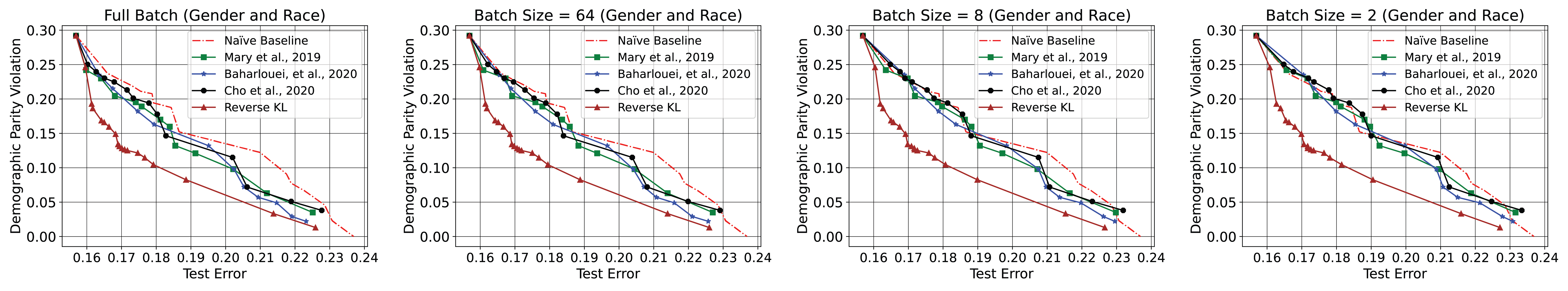}}
    \vspace{-4mm}
    \caption{\small Performance of the trained fair models on Adult Dataset with gender and race as two sensitive attributes with different Batch-sizes. The red dashed line represents the Na\"ive baseline where the model outputs zero with probability $p$. By increasing $p$, the model becomes fairer at the cost of the loss in accuracy.}
    \label{fig: adult}
\end{center}
\vspace{-10mm}
\end{figure}

\subsection{Fairness-Accuracy Tradeoffs in the Presence of the Distribution Shift}
\vspace{-1mm}
We perform two experiments to evaluate the  Algorithms developed in Section~\ref{section: DRO}. In the first experiment, we randomly switch the label of genders for $n\%$  of the data points ($n$ ranges from $1$ to $20$) in the Adult dataset. Then, we train models on the new datasets with a proportion of corrupted sensitive attributes and evaluate the performance on the test data. Figure~\ref{fig: percentage} is obtained by training different models to achieve $80\%$ accuracy on the test data and comparing their demographic parity violation. By increasing the percentage of corrupted sensitive attributes, we see that both $f$-DRO and $f$-infinity achieve less DP violation than SOTA approaches in the literature. In this specific experiment, $f$-DRO works better than $f$-infinity, and there is no significant difference between choosing KL-divergence or $\chi^2$ as the function $f$. Among the papers designed for handling distribution shifts,~\citet{rezaei2021robust} and~\citet{wang2020robust} were the only options with the available implementation.
\begin{wrapfigure}{r}{0.50\textwidth}
% \vspace{-9mm}
\vspace{-3mm}
    \begin{center}    \centerline{\includegraphics[width=0.49\columnwidth]{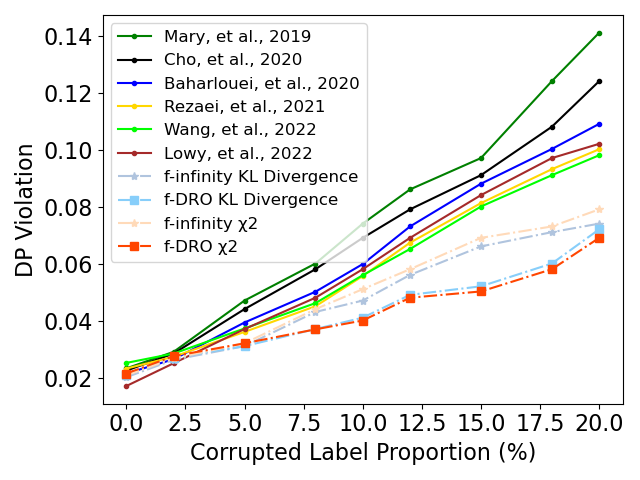}}
    \vspace{-4mm}
    \caption{\small Performance of different state-of-the-art approaches and our two methods for handling distribution shift. The dataset is adult, and the sensitive attribute is gender. We randomly flip the label of a proportion of gender entries (from $0$ to $20\%$). As we observe, our approach demonstrates more robustness against the drop in DP violation compared to other approaches.}
    \label{fig: percentage}
    \vspace{-4mm}
\end{center}
\end{wrapfigure}
In a more recently collected dataset (new adult)~\citep{ding2021retiring}, the users are separated based on their living state. We train different fair models in a single state and evaluate the fairness-accuracy tradeoff in other states. Figure~\ref{fig: new_adult} depicts the performance of different methods. For each method, the center point is the average of accuracy and fairness among $50$ states. The horizontal and vertical lines show the $25$-percentile to $75$-percentile range of performance among the states. The training fairness violation is set to $0.02$ for all methods. We observe that $f$-infinity preserves the fairness level better than other approaches. In comparison, $f$-DRO has a better accuracy. Depending on the application, we suggest using $f$-infinity if preserving a high level of fairness is a priority and $f$-DRO for the cases when a better tradeoff between fairness and accuracy is expected. Note that both approaches offer better fairness-accuracy tradeoffs compared to the SOTA approaches in the literature.  
\begin{figure}[H]
% \vspace{-3mm}
\vspace{-3mm}
    \begin{center}
\centerline{\includegraphics[width=1\columnwidth]{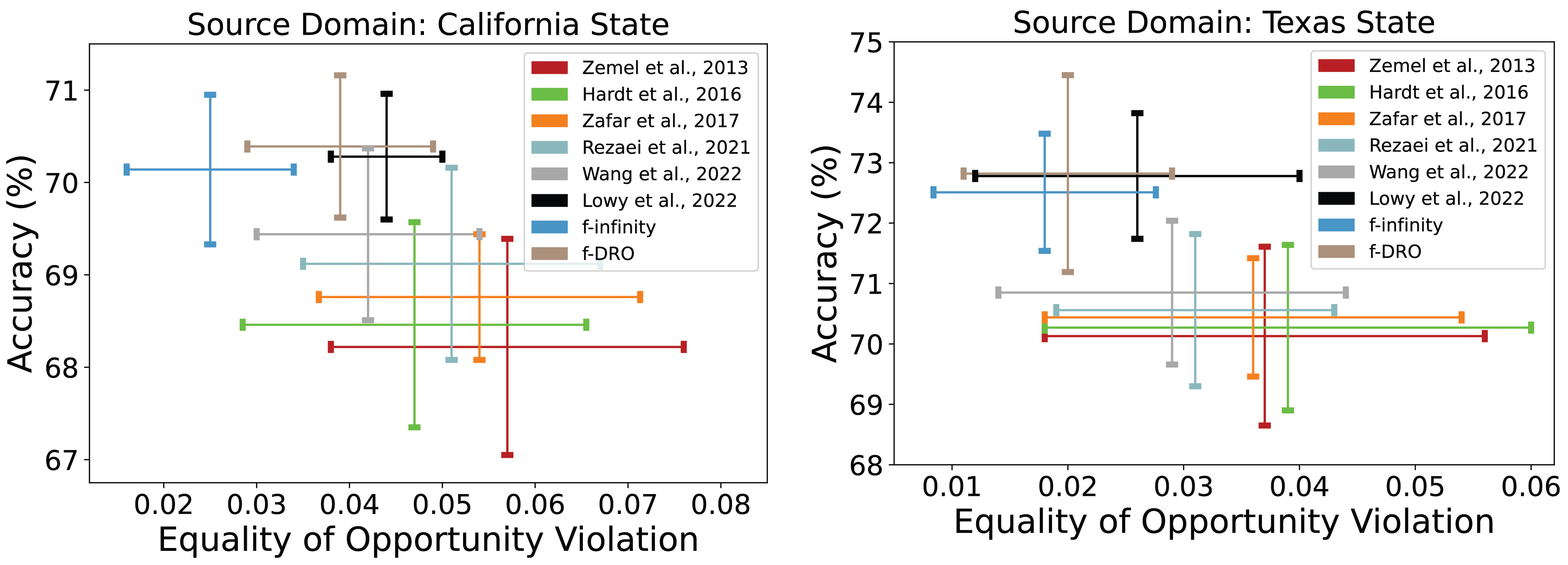}}
    \vspace{-4mm}
    \caption{\small Performance of the trained fair models on new Adult Dataset. The model is trained on one state (California or Texas) and evaluated in $50$ states. The distribution of each state dataset is different than others. Thus, the IID assumption does not hold among datasets of different states.}
    \label{fig: new_adult}
\end{center}
\vspace{-10mm}
\end{figure}

\section{Conclusion}
\vspace{-3mm}
This paper presented a unified stochastic framework for fair empirical risk minimization via $f$-divergences ($f$-FERM). The key idea is to reformulate the objective function as a min-max optimization problem using Legendre-Fenchel duality of $f$-divergence. This enables us to develop an unbiased gradient estimator and a convergent stochastic first-order algorithm. Furthermore, we robustified $f$-FERM using $\ell_p$ norm balls as the uncertainty set against distributional changes. While our empirical investigation delves into the performance and fairness distinctions among various $f$-divergences, a more comprehensive analysis is warranted to determine the optimal $f$-divergence concerning the tradeoff between performance and fairness, faster convergence, and asymptotic behaviors. Furthermore, the distributionally robust formulation of fair empirical risk minimization and the advantages of each formulation can be explored beyond $f$-divergences as the measure of fairness violation and $\ell_p$ norm balls as uncertainty sets. 
\newpage
\section*{Acknowledgements}
This work was supported by the NSF CAREER Award CCF2144985, the AFOSR Young Investigator Program Award FA9550-22-1-0192, a gift from the USC-Meta Center for Research and Education in AI, and a gift from Google.

\bibliography{iclr2024_conference}
\bibliographystyle{iclr2024_conference}

\newpage
\appendix
\section{$f$-FERM for other notions of group fairness}
\label{appendix: notions}
This section shows how we can use alternative notions of fairness, such as equality of opportunity of equalized   odds~\citep{hardt2016equality} instead of demographic parity violation in~\ref{eq: f-FERM}.

Note that a trained model satisfies the equality of opportunity notion for a given binary classifier with a binary sensitive attribute if and only if:
\begin{equation}
\mathbb{P} (\hat{y}_{\btheta}(\bx) = 1, s = i | \:   y = 1) = \mathbb{P} (\hat{y}_{\btheta}(\bx) = 1, s = j | \: y = 1) \quad \forall \: i, j \in \mathcal{S} 
\end{equation}
Therefore, to have a framework for fair inference via $f$-divergences under the equality of opportunity notion, we optimize:
\begin{equation}
\label{eq: f-FERM_eqo}
    \min_{\btheta} \quad \frac{1}{n} \sum_{i=1}^n \ell(\hat{y}_{\btheta}(\bx_i), y_i) + \lambda \mathcal{D}_f \Big(\mathbb{P}(\hat{y}_{\btheta} (\bx), s | y = 1) || \mathbb{P}(\hat{y}_{\btheta}(\bx) | y = 1) \otimes \mathbb{P}(s | y = 1)  \Big).
\end{equation}
Practically, it means that for evaluating the probability measures in the regularization term, we need to focus on the data points whose target labels are $1$.

Further, one can similarly adopt equalized odds as the measure of fairness. Equalized odds as the measure of fairness is defined as:
\begin{equation}
\mathbb{P} (\hat{y}_{\btheta}(\bx) = 1, s = i | \:   y = k) = \mathbb{P} (\hat{y}_{\btheta}(\bx) = 1, s = j | \: y = k) \quad \forall \: i, j \in \mathcal{S}, \: k \in \mathcal{Y} 
\end{equation}
Therefore, we must add a regularizer per each class label to satisfy the equalized odds notion. Other notions of fairness can be used in this framework as long as they can be represented as the conditional independence between sensitive attributes, predictions, and labels~\citep{castelnovo2022clarification}.

\section{$f$-divergences for continuous sensitive attributes and target variables}
\label{appendix: continuous}
In Section~\ref{section: f-divergence}, we developed a framework for promoting fairness for classification problems where both target labels and sensitive attributes are discrete variables. Hence, we could efficiently solve the variational formulation that arose through the designing of unbiased estimators. However, it is not uncommon to find applications of f-divergence regularizers in practice that require either the sensitive features or the output variable to be continuous or both to be continuous parameters. In such cases, the summation over the respective variable is replaced by an integral over the probability distribution. The challenging aspect is calculating the variational form's integral and trailing supremum in the continuous domain.

Let $P$ and $Q$ be two continuous distributions over the space $\Omega$ such that $P$ is absolutely continuous with respect to $Q$ ($P \ll Q$). Then, the $f$-divergence between these two distributions for a given convex function $f$ is defined as:
\begin{equation}
\label{eq: f_divergence_continuous}
    \mathcal{D}_f(\mathbb{P}, \mathbb{Q}) = \int_{\Omega} f \Big(\frac{dP}{dQ}\Big) dQ
\end{equation}
When the target variable is continuous (regression problems), but the sensitive attribute is discrete,~\eqref{eq: f-FERM} can be written as:
\begin{equation*}
    \min_{\btheta} \max_{\bA \in \mathbb{R}^{\infty}} \sum_{i=1}^n \ell(\hat{y}_{\btheta}(\bx_i), y_i) + \lambda \sum_{k} \int_x  \Big[\bA_{k}(x) \mathbb{P}(x) -  f^*(\bA_{k}(x)) \mathbb{Q}_{k}   \Big] dx
\end{equation*}
With slight changes, the above problem can be reformulated as follows:
\begin{equation*}
    \min_{\btheta} \max_{\bA \in \mathbb{R}^{jk}} \sum_{i=1}^n \ell(\hat{y}_{\btheta}(\bx_i), y_i) + \lambda \max_{\bA_1, \dots, \bA_m}\quad\sum_{k} \mathbb{E}\Big[\bA_{k}(s) \mathbb{P}_{j}(s) -  f^*(\bA_{k}(s)) \mathbb{Q}_{k} \Big] 
\end{equation*}
% A compelling idea is to estimate each function $A_{k}(s)$ with a kernel representation. First, we do an empirical copula transformation on the sensitive attribute of each data sample represented by $C_t(s_i)$. Then, we define a kernel transformation matrix as follows: 

% Instead of computing the expectation over a functional space, we maximize a vector $\alpha$ with the dimension equal to the number of elements in the transformation matrix. ({\color{red} should we estimate $P(s)$ as well?})
When both sensitive features and target variables are continuous, the objective function becomes:
\begin{equation*}
    \min_{\btheta} \max_{\bA \in \mathbb{R}^{\infty\times \infty}} \sum_{i=1}^n \ell(\hat{y}_{\btheta}(\bx_i), y_i) + \lambda \int_x \int_y \Big[\bA(x,y) \mathbb{P}(x) -  f^*(\bA(x,y)) \mathbb{Q}(y)   \Big] dx \;dy
\end{equation*}
Such a formulation is clearly intractable for solving $\bA_{k}(x)$ or $\bA(x,y)$ in the continuous domain. We need to approximate the above integrals in discretized/quantized regions or find another variational representation for designing unbiased estimators of continuous domain $f$-divergences. We leave developing algorithms for the continuous target variables and sensitive attributes as a future direction.  

\section{$f$-divergences cover well-known notions of fairness violation}
\label{appendix: f-divergence-properties}
In this section, first, we show that optimizing $f$-divergences to $0$ guarantees the independence of the sensitive attributes and predictions. In other words, optimizing $f$-divergences leads to a fair model under the demographic parity notion (or other group fairness notions discussed in Appendix~\ref{appendix: notions}). 
\begin{proposition} \citep[Theorem 2.3]{polyanskiy2022information}
Let $f$ be a convex function from $\mathbb{R}^+$ to $\mathbb{R}$, such that $f$ is convex, $f(1) = 0$, $f$ is strictly convex in a neighborhood of $1$. Then $\mathcal{D}_{f}(\mathbb{P} || \mathbb{Q}) = 0$, if and only if $P = Q$.
\end{proposition}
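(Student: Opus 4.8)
The plan is to prove the two directions separately, treating $P=Q\Rightarrow\mathcal{D}_f=0$ as immediate and concentrating effort on the converse. For the easy direction, if $\mathbb{P}_j=\mathbb{Q}_j$ for all $j$ then every ratio $\mathbb{P}_j/\mathbb{Q}_j$ equals $1$ on the support, so $f(\mathbb{P}_j/\mathbb{Q}_j)=f(1)=0$ and the definition in~\eqref{eq: f_divergence} gives $\mathcal{D}_f(\mathbb{P}\|\mathbb{Q})=\sum_j\mathbb{Q}_j\cdot 0=0$. For the converse I would first reprove nonnegativity via a supporting-line (subgradient) inequality and then extract the equality condition. Since $f$ is convex and finite at the interior point $1$, pick a subgradient $c\in\partial f(1)$, so that $f(t)\geq f(1)+c(t-1)=c(t-1)$ for every $t\in\mathbb{R}^+$ (using $f(1)=0$). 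Writing $t_j:=\mathbb{P}_j/\mathbb{Q}_j$ and summing against $\mathbb{Q}_j$,
\[
\mathcal{D}_f(\mathbb{P}\|\mathbb{Q})=\sum_j\mathbb{Q}_j f(t_j)\geq c\sum_j\mathbb{Q}_j(t_j-1)=c\Big(\sum_j\mathbb{P}_j-\sum_j\mathbb{Q}_j\Big)=0,
\]
because $\mathbb{P}$ and $\mathbb{Q}$ are both probability vectors. Thus $\mathcal{D}_f\geq 0$, and $\mathcal{D}_f=0$ forces equality in the supporting-line bound termwise, i.e. $f(t_j)=c(t_j-1)$ for every $j$ with $\mathbb{Q}_j>0$.

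The decisive step is to convert this termwise equality into a contradiction with strict convexity whenever $\mathbb{P}\neq\mathbb{Q}$. Suppose $\mathbb{P}\neq\mathbb{Q}$; then not every $t_j$ equals $1$, and since $\sum_j\mathbb{Q}_j(t_j-1)=0$ with positive weights forces the deviations to cancel, there exist support points with $t_{\min}<1<t_{\max}$. The map $L(t):=c(t-1)$ agrees with $f$ at both $t_{\min}$ and $t_{\max}$ and lies weakly below $f$ everywhere. For any $t=\alpha t_{\min}+(1-\alpha)t_{\max}\in[t_{\min},t_{\max}]$, convexity gives $f(t)\leq\alpha f(t_{\min})+(1-\alpha)f(t_{\max})=\alpha L(t_{\min})+(1-\alpha)L(t_{\max})=L(t)$, while the supporting inequality gives $f(t)\geq L(t)$; hence $f\equiv L$ on the whole interval $[t_{\min},t_{\max}]$. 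But $1$ lies in the interior of that interval, so $f$ is affine on a neighborhood of $1$, contradicting the assumed strict convexity of $f$ near $1$. Therefore $\mathbb{P}=\mathbb{Q}$.

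The main obstacle I anticipate is exactly this last step. Because $f$ is only convex globally and strictly convex near $1$, the support ratios $t_j$ may lie outside that neighborhood, so one cannot invoke the strict-Jensen equality condition pointwise. The \emph{sandwiching} device — using global convexity to propagate the equality $f=L$ from the extreme support points $t_{\min},t_{\max}$ across the entire interval they span, which necessarily contains a neighborhood of $1$ — is what reconciles the global convexity with the local strict convexity. A minor technical point is the treatment of indices with $\mathbb{Q}_j=0$: if $\mathbb{P}_j>0$ there, the standard $f$-divergence convention renders $\mathcal{D}_f=+\infty$ for superlinear $f$, which is incompatible with $\mathcal{D}_f=0$, so one may safely restrict to the common support where all $t_j$ are well defined.
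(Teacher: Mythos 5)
The paper itself never proves this proposition: it is imported verbatim from \citep[Theorem 2.3]{polyanskiy2022information} and used as a black box in Appendix~\ref{appendix: f-divergence-properties} to conclude that the regularizer in~\eqref{eq: ferm} vanishes if and only if $\hat{y}_{\btheta}$ and $s$ are independent. Your proposal therefore supplies what the paper outsources, and the argument you give is essentially the standard one underlying the cited theorem: the forward direction is immediate from $f(1)=0$; for the converse, a supporting line $L(t)=c(t-1)$ with $c\in\partial f(1)$ gives nonnegativity after summing against $\mathbb{Q}$, equality forces $f(t_j)=L(t_j)$ termwise, and the sandwich step (chord from above by convexity, supporting line from below) propagates $f\equiv L$ over all of $[t_{\min},t_{\max}]$, an interval containing $1$ in its interior, contradicting strict convexity near $1$. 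That sandwich is exactly the right device for reconciling global (non-strict) convexity with strict convexity assumed only locally, and your writeup has the virtue of making explicit which hypothesis is used where --- something the paper's citation hides.

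One loose end: your treatment of indices with $\mathbb{Q}_j=0<\mathbb{P}_j$ disposes only of superlinear $f$, i.e.\ $f'(\infty):=\lim_{t\to\infty}f(t)/t=+\infty$. The hypotheses permit finite $f'(\infty)$; indeed squared Hellinger (in the paper's Table~\ref{table: f-divergence-table}) has $f'(\infty)=1$ and reverse KL has $f'(\infty)=0$. In that case the conventional singular contribution $f'(\infty)\,\mathbb{P}[\{j:\mathbb{Q}_j=0\}]$ is finite, and after restricting to the common support your centering identity fails: one gets $\sum_{j:\mathbb{Q}_j>0}\mathbb{Q}_j(t_j-1)=p-1<0$ with $p:=\mathbb{P}[\{j:\mathbb{Q}_j>0\}]$, so the existence of $t_{\max}>1$ is no longer guaranteed. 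The repair is one line: strict convexity near $1$ implies $f'(\infty)>c$ (chord slopes of $f$ through $(1,0)$ are nondecreasing, exceed $c$ strictly just above $1$, and increase to $f'(\infty)$), hence
\begin{equation*}
\mathcal{D}_f(\mathbb{P}\,\|\,\mathbb{Q}) \;\geq\; c\,(p-1) + f'(\infty)\,(1-p) \;=\; (1-p)\bigl(f'(\infty)-c\bigr) \;>\; 0 \qquad \text{whenever } p<1,
\end{equation*}
so $\mathcal{D}_f=0$ forces $p=1$, and your argument then applies unchanged. In the paper's own use of the proposition this case never arises, since Theorem~\ref{thm: f_divergence_convergence} assumes $\mathbb{P}(s=k)>0$ and $\mathbb{P}(\hat{y}_{\btheta}=j)>0$, which makes the product distribution $\mathbb{Q}$ have full support.
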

As an immediate result, a trained model in~\eqref{eq: f-FERM} is fair under the demographic notion if and only if 
\begin{equation}
\mathbb{P}({\hat{y}_{\btheta} (\bx), s}) = \mathbb{P}(\hat{y}_{\btheta}(\bx)) \otimes \mathbb{P}(s),    
\end{equation}
which means the independence of $s$ and $\hat{y}_{\btheta}(\bx)$.

Next, we show $f$-divergences either include or provide an upper bound for well-known notions of fairness violation in the literature.
\begin{proposition}
\label{proposition: ERMI}
Exponential \Renyi Mutual Information (ERMI)~\citep{pmlr-v97-mary19a, lowy2022stochastic} is an $f$-divergence with $f(t) = (t-1)^2$    
\end{proposition}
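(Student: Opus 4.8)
The plan is to unwind both definitions and verify they coincide by a direct algebraic expansion, since the statement is fundamentally a definitional identification of ERMI with the $\chi^2$-divergence between the joint law and the product of marginals. First I would recall the definition of Exponential \Renyi Mutual Information from~\citep{pmlr-v97-mary19a, lowy2022stochastic}: writing $\mathbb{P}_{jk} := \mathbb{P}_{\hat{y}_{\btheta}, s}(\hat{y}_{\btheta}=j, s=k)$, $\mathbb{P}_j := \mathbb{P}_{\hat{y}_{\btheta}}(\hat{y}_{\btheta}=j)$, and $\pi_k := \mathbb{P}_s(s=k)$, ERMI between the discrete prediction $\hat{y}_{\btheta}$ and the sensitive attribute $s$ is
$$\mathrm{ERMI}(\hat{y}_{\btheta}, s) = \mathbb{E}\Big[\frac{\mathbb{P}_{\hat{y}_{\btheta}, s}}{\mathbb{P}_{\hat{y}_{\btheta}}\, \mathbb{P}_s}\Big] - 1 = \sum_{j \in \mathcal{Y},\, k \in \mathcal{S}} \frac{\mathbb{P}_{jk}^2}{\mathbb{P}_j\, \pi_k} - 1.$$

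Next I would specialize the $f$-divergence in~\eqref{eq: f_divergence} to $f(t) = (t-1)^2$, taking $\mathbb{P}$ to be the joint law $\mathbb{P}_{\hat{y}_{\btheta}, s}$ and $\mathbb{Q} = \mathbb{P}_{\hat{y}_{\btheta}} \otimes \mathbb{P}_s$ the product of marginals, so that the generic ratio $\mathbb{P}_{jk}/\mathbb{Q}_{jk}$ becomes $\mathbb{P}_{jk}/(\mathbb{P}_j \pi_k)$. The core step is then the expansion
$$\mathcal{D}_f(\mathbb{P}, \mathbb{Q}) = \sum_{j,k} \mathbb{Q}_{jk}\Big(\frac{\mathbb{P}_{jk}}{\mathbb{Q}_{jk}} - 1\Big)^2 = \sum_{j,k}\frac{(\mathbb{P}_{jk} - \mathbb{Q}_{jk})^2}{\mathbb{Q}_{jk}} = \sum_{j,k}\frac{\mathbb{P}_{jk}^2}{\mathbb{Q}_{jk}} - 2\sum_{j,k}\mathbb{P}_{jk} + \sum_{j,k}\mathbb{Q}_{jk}.$$
Because $\mathbb{P}$ and $\mathbb{Q}$ are both probability distributions, the last two sums contribute $-2 + 1 = -1$; substituting $\mathbb{Q}_{jk} = \mathbb{P}_j \pi_k$ into the remaining quadratic term reproduces the ERMI expression above exactly, establishing $\mathrm{ERMI}(\hat{y}_{\btheta}, s) = \mathcal{D}_f\big(\mathbb{P}_{\hat{y}_{\btheta}, s} \,\|\, \mathbb{P}_{\hat{y}_{\btheta}} \otimes \mathbb{P}_s\big)$ with $f(t)=(t-1)^2$.

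I do not anticipate a genuine obstacle, as no nontrivial inequality or limiting argument is involved; the only points requiring care are bookkeeping ones. Specifically, I would confirm that the ``$-1$'' normalizing constant in the standard ERMI definition is precisely the combined contribution of the cross term $-2\sum_{j,k}\mathbb{P}_{jk}$ and the mass term $\sum_{j,k}\mathbb{Q}_{jk}$ in the $\chi^2$ expansion, and I would invoke the nondegeneracy hypotheses already used in Theorem~\ref{thm: f_divergence_convergence} (namely $\mathbb{P}_j > 0$ and $\pi_k > 0$ for all $j, k$) to ensure every denominator $\mathbb{Q}_{jk} = \mathbb{P}_j \pi_k$ is strictly positive and the expressions are well-defined.
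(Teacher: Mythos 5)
Your proposal is correct and follows essentially the same route as the paper's own proof: specialize $f(t)=(t-1)^2$ with the product of marginals as the reference measure, expand the square, and use the fact that both the joint law and the product of marginals sum to one so that the cross and mass terms contribute $-2+1=-1$. Your added remarks (the intermediate $\chi^2$ form $\sum_{j,k}(\mathbb{P}_{jk}-\mathbb{Q}_{jk})^2/\mathbb{Q}_{jk}$ and the explicit positivity of the denominators $\mathbb{P}_j\pi_k$) are harmless refinements of the same computation.
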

\begin{proof}
Exponential \Renyi Mutual Information is defined as~\citep{lowy2022stochastic}:
\begin{equation}
    \textrm{ERMI}(\hat{y}, s) = \sum_{j \in \mathcal{Y}, k \in \mathcal{S}} \frac{\hat{P}_{\hat{y}, s}(j, k)^2}{\hat{P}_{\hat{y}}(j) \hat{P}_{s}(k)} - 1
\end{equation}
For the case of $f(t) = (t-1)^2$, we have:
\begin{gather*}
    \mathcal{D}_f \big(\hat{P}_{\hat{y}} \otimes \hat{P}_{s} || \hat{P}_{\hat{y}, s}\big) = \sum_{j \in \mathcal{Y}}\sum_{k \in \mathcal{S}} \hat{P}_{\hat{y}}(j) \hat{P}_{s}(k) f \Big(\frac{\hat{P}_{\hat{y}, s}(j, k)}{\hat{P}_{\hat{y}}(j) \hat{P}_{s}(k)}\Big)
    = \sum_{j \in \mathcal{Y}}\sum_{k \in \mathcal{S}} \hat{P}_{\hat{y}}(j) \hat{P}_{s}(k) \Big(\frac{\hat{P}_{\hat{y}, s}(j, k)}{\hat{P}_{\hat{y}}(j) \hat{P}_{s}(k)}- 1\Big)^2 \\
    = \sum_{j \in \mathcal{Y}}\sum_{k \in \mathcal{S}} \hat{P}_{\hat{y}}(j) \hat{P}_{s}(k)  \Big(\frac{\hat{P}_{\hat{y}, s}(j, k)^2}{\hat{P}_{\hat{y}}(j)^2 \hat{P}_{s}(k)^2} -2\frac{\hat{P}_{\hat{y}, s}(j, k)}{\hat{P}_{\hat{y}}(j) \hat{P}_{s}(k)}+ 1\Big)  
    \\=\sum_{j \in \mathcal{Y}}\sum_{k \in \mathcal{S}} \Big(\frac{\hat{P}_{\hat{y}, s}(j, k)^2}{\hat{P}_{\hat{y}}(j) \hat{P}_{s}(k)} -2\hat{P}_{\hat{y}, s}(j, k)+ \hat{P}_{\hat{y}}(j) \hat{P}_{s}(k)\Big)\\
    = \sum_{j \in \mathcal{Y}}\sum_{k \in \mathcal{S}} \frac{\hat{P}_{\hat{y}, s}(j, k)^2}{\hat{P}_{\hat{y}}(j) \hat{P}_{s}(k)} - 2 + 1 = \textrm{ERMI}(\hat{y}, s)
\end{gather*}
Note that, in the last equality, we use:
\begin{gather*}
    \sum_{j \in \mathcal{Y}}\sum_{k \in \mathcal{S}} \hat{P}_{\hat{y}, s}(j, k) = \sum_{j \in \mathcal{Y}} \hat{P}_{\hat{y}}(j) = 1,
\end{gather*}
and 
\begin{gather*}
    \sum_{j \in \mathcal{Y}}\sum_{k \in \mathcal{S}} \hat{P}_{\hat{y}}(j) \hat{P}_{s}(k)= \sum_{j \in \mathcal{Y}} \hat{P}_{\hat{y}}(j)\Big(\sum_{k \in \mathcal{S}} \hat{P}_{s}(k)\Big) = \sum_{j \in \mathcal{Y}} \hat{P}_{\hat{y}}(j) = 1,
\end{gather*}
\end{proof}
\begin{proposition}
Demographic parity violation is upper-bounded by the $f$-divergence for $f(t) = (t-1)
^2$\end{proposition}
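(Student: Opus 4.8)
The plan is to bound the demographic parity violation by the $\chi^2$-divergence $\mathcal{D}_f$ with $f(t)=(t-1)^2$, exploiting the closed form derived in the proof of Proposition~\ref{proposition: ERMI}. That computation exhibits
\[
\mathcal{D}_f = \textrm{ERMI}(\hat{y},s) = \sum_{j \in \mathcal{Y}, k \in \mathcal{S}} \frac{\big(\hat{P}_{\hat{y},s}(j,k) - \hat{P}_{\hat{y}}(j)\hat{P}_s(k)\big)^2}{\hat{P}_{\hat{y}}(j)\hat{P}_s(k)},
\]
so the divergence is a sum of \emph{nonnegative} normalized squared deviations from independence. The key observation driving the proof is that each individual summand is therefore bounded above by the whole sum.

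First I would isolate a single sensitive group $k$ and the positive label, rewriting the gap between the conditional probability and the marginal as $\mathbb{P}(\hat{y}=1 \mid s=k) - \hat{P}_{\hat{y}}(1) = \big(\hat{P}_{\hat{y},s}(1,k) - \hat{P}_{\hat{y}}(1)\hat{P}_s(k)\big)/\hat{P}_s(k)$. Squaring shows that its square equals the $(1,k)$ summand of $\mathcal{D}_f$ scaled by $\hat{P}_{\hat{y}}(1)/\hat{P}_s(k)$; bounding that summand by the full sum $\mathcal{D}_f$ and using $\hat{P}_{\hat{y}}(1)\leq 1$ yields $\big|\mathbb{P}(\hat{y}=1\mid s=k) - \hat{P}_{\hat{y}}(1)\big| \leq \sqrt{\mathcal{D}_f/\hat{P}_s(k)}$. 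Next I would combine two groups $i,j \in \mathcal{S}$ by the triangle inequality through the common anchor $\hat{P}_{\hat{y}}(1)$, giving $\big|\mathbb{P}(\hat{y}=1\mid s=i) - \mathbb{P}(\hat{y}=1\mid s=j)\big| \leq \sqrt{\mathcal{D}_f/\hat{P}_s(i)} + \sqrt{\mathcal{D}_f/\hat{P}_s(j)}$. Taking the maximum over pairs and writing $\pi_{\min} = \min_{k}\hat{P}_s(k)$ (positive by the standing assumption $\mathbb{P}(s=k)>0$) produces the bound $\textrm{DPV} \leq \tfrac{2}{\sqrt{\pi_{\min}}}\sqrt{\mathcal{D}_f}$, which is the claimed domination of the demographic parity violation by the $f$-divergence.

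The argument is essentially routine once the closed form is in hand; the only real subtlety, and the step I would double-check most carefully, is the appearance of the group-size factor $\pi_{\min}$. It enters because demographic parity compares \emph{conditional} probabilities, so dividing by $\hat{P}_s(k)$ is unavoidable, and the bound necessarily degrades when some protected group is rare. This is consistent with the positivity assumption on $\mathbb{P}(s=k)$ invoked elsewhere in the paper. If instead a bound free of the inverse-marginal factor were desired, one could bound a $\hat{P}_s$-weighted demographic parity gap, for which a single application of Cauchy--Schwarz against $\mathcal{D}_f$ gives a cleaner inequality.
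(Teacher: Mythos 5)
Your proof is correct, but it takes a genuinely different route from the paper's. The paper's own proof is a two-line reduction: having already established in Proposition~\ref{proposition: ERMI} that the $\chi^2$-divergence between the joint distribution and the product of marginals equals ERMI, it simply invokes Lemma~3 of \citet{lowy2022stochastic} as a black box, which asserts that ERMI upper-bounds the demographic parity violation. You instead give a self-contained elementary argument: writing $\mathcal{D}_f = \sum_{j,k} \bigl(\hat{P}_{\hat{y},s}(j,k)-\hat{P}_{\hat{y}}(j)\hat{P}_s(k)\bigr)^2 / \bigl(\hat{P}_{\hat{y}}(j)\hat{P}_s(k)\bigr)$, bounding a single nonnegative summand by the whole sum, converting that summand into a conditional-versus-marginal gap via the identity $\mathbb{P}(\hat{y}=1 \mid s=k)-\hat{P}_{\hat{y}}(1) = \bigl(\hat{P}_{\hat{y},s}(1,k)-\hat{P}_{\hat{y}}(1)\hat{P}_s(k)\bigr)/\hat{P}_s(k)$, and chaining two groups through the common anchor $\hat{P}_{\hat{y}}(1)$ by the triangle inequality. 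All steps check out, including the scaling factor $\hat{P}_{\hat{y}}(1)/\hat{P}_s(k) \le 1/\hat{P}_s(k)$ and the final bound $\mathrm{DPV} \le \bigl(2/\sqrt{\pi_{\min}}\bigr)\sqrt{\mathcal{D}_f}$; the positivity of $\hat{P}_s(k)$ and $\hat{P}_{\hat{y}}(j)$ that you implicitly need is exactly the paper's standing assumptions (Theorem~\ref{thm: formal}, assumptions three and four). As for what each approach buys: the paper's citation is shorter and consistent with its strategy of inheriting the FERMI lemmas once ERMI is identified as a $\chi^2$-divergence, but it leaves the constant and the precise sense of ``upper-bounded'' opaque; your derivation makes both explicit --- the control is on $\mathrm{DPV}$ in terms of $\sqrt{\mathcal{D}_f}$, with an unavoidable $1/\sqrt{\pi_{\min}}$ degradation for rare protected groups --- which is the quantitatively meaningful form of the claim (driving the regularizer to zero forces the violation to zero at a square-root rate). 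Your closing observation that a $\hat{P}_s$-weighted demographic parity gap would admit a cleaner Cauchy--Schwarz bound free of the inverse-marginal factor is a nice extra, though not needed for the proposition.
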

\begin{proof}
Based on Propositions~\ref{proposition: ERMI}, ERMI is an $f$-divergence with $f(t) = (t-1)^2$. Therefore, the proposition is an immediate result of Lemma 3 in~\citep{lowy2022stochastic}.
\end{proof}
\begin{proposition}
\Renyi correlation~\citep{baharlouei2020renyi} can be upper bounded by the $f$-divergences for the choice of $f(t) = (t-1)^2$.
\end{proposition}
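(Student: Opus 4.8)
The plan is to realize both the \Renyi correlation (the Hirschfeld--Gebelein--\Renyi maximal correlation) and the $f$-divergence with $f(t)=(t-1)^2$ as spectral quantities of a single matrix and then compare them. First I would recall the variational definition $\rho(\hat{y}_{\btheta},s)=\sup_{g,h}\mathbb{E}[g(\hat{y}_{\btheta})\,h(s)]$, the supremum being over real functions with $\mathbb{E}[g]=\mathbb{E}[h]=0$ and $\mathbb{E}[g^2]=\mathbb{E}[h^2]=1$. The key object is the matrix $Q$ with entries $Q_{jk}=\hat{P}_{\hat{y}_{\btheta},s}(j,k)\big/\sqrt{\hat{P}_{\hat{y}_{\btheta}}(j)\,\hat{P}_{s}(k)}$, which is well defined because the marginals are strictly positive under the nondegeneracy hypotheses of Theorem~\ref{thm: f_divergence_convergence}.

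Next I would perform the change of variables $a_j=g(j)\sqrt{\hat{P}_{\hat{y}_{\btheta}}(j)}$ and $b_k=h(k)\sqrt{\hat{P}_{s}(k)}$. Under this substitution the normalization constraints become $\|a\|_2=\|b\|_2=1$, the zero-mean constraints become $\langle a,\sqrt{\hat{P}_{\hat{y}_{\btheta}}}\rangle=0$ and $\langle b,\sqrt{\hat{P}_{s}}\rangle=0$ (where $\sqrt{\hat{P}}$ denotes the vector of entrywise square roots of a marginal), and the objective becomes the bilinear form $a^{\top}Q\,b$. I would then verify by direct computation that $Q\sqrt{\hat{P}_{s}}=\sqrt{\hat{P}_{\hat{y}_{\btheta}}}$, so that $u_1=\sqrt{\hat{P}_{\hat{y}_{\btheta}}}$ and $v_1=\sqrt{\hat{P}_{s}}$ form a singular pair of $Q$ with singular value $\sigma_1=1$. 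Since the zero-mean constraints are exactly orthogonality to $u_1$ and $v_1$, the Courant--Fischer / Ky Fan variational characterization of singular values yields $\rho(\hat{y}_{\btheta},s)=\sigma_2(Q)$, the second-largest singular value.

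For the divergence side, Proposition~\ref{proposition: ERMI} identifies the $f$-divergence with $f(t)=(t-1)^2$ as ERMI, and ERMI is precisely $\sum_{j,k}Q_{jk}^2-1=\|Q\|_F^2-1=\sum_i\sigma_i^2(Q)-1$. Combining this with $\sigma_1=1$ gives $\mathcal{D}_f=\sum_{i\ge 2}\sigma_i^2(Q)\ge\sigma_2^2(Q)=\rho(\hat{y}_{\btheta},s)^2$, so that $\rho(\hat{y}_{\btheta},s)\le\sqrt{\mathcal{D}_f}$, which is the claimed upper bound.

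The main obstacle is the rigorous justification of $\rho=\sigma_2(Q)$: one must argue that restricting the maximization of $a^{\top}Qb$ to the orthogonal complements of the leading singular pair drops the value from $\sigma_1$ to $\sigma_2$. This is the standard deflation argument, but it needs the leading singular value $\sigma_1=1$ to be attained \emph{only} by the marginal vectors (so that the constraints really remove the trivial direction rather than some genuinely correlated component); this uniqueness up to the trivial mode is what strict positivity of the marginals guarantees. The remaining steps---the change of variables, the identity $Q\sqrt{\hat{P}_{s}}=\sqrt{\hat{P}_{\hat{y}_{\btheta}}}$, and the Frobenius-norm computation---are routine linear algebra.
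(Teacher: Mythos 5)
Your proof is correct, but it takes a genuinely different route from the paper's. The paper's entire proof is a citation: it invokes Proposition~\ref{proposition: ERMI} to identify the $f$-divergence with $f(t)=(t-1)^2$ as ERMI, and then quotes Lemma~2 of \citep{lowy2022stochastic}, which asserts precisely that the squared \Renyi correlation is bounded by ERMI. You instead re-derive that external lemma from scratch: the change of variables turning the maximal-correlation problem into $\max a^{\top}Qb$ over unit vectors orthogonal to $\bigl(\sqrt{\hat{P}_{\hat{y}_{\btheta}}},\sqrt{\hat{P}_{s}}\bigr)$, the identification $\rho=\sigma_2(Q)$, and the Frobenius identity $\mathcal{D}_f=\|Q\|_F^2-1=\sum_{i}\sigma_i^2-1$, giving $\mathcal{D}_f=\sum_{i\geq 2}\sigma_i^2\geq \sigma_2^2=\rho^2$, i.e.\ $\rho\leq\sqrt{\mathcal{D}_f}$. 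This is essentially the argument underlying the cited lemma, so what your version buys is self-containedness, plus a sharper picture of the slack (the divergence dominates \emph{all} trailing singular values, not just $\sigma_2$); what the paper's version buys is brevity and delegation to a published result.

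Two corrections to your execution. First, you label $\sigma_1=1$ as the \emph{leading} singular value without justifying $\|Q\|_{\mathrm{op}}\leq 1$; this needs the Cauchy--Schwarz step $a^{\top}Qb=\mathbb{E}[g(\hat{y}_{\btheta})h(s)]\leq\sqrt{\mathbb{E}[g^2]}\sqrt{\mathbb{E}[h^2]}=\|a\|_2\|b\|_2$, and it should be stated, since otherwise the deflation could be peeling off the wrong mode. Second, the ``main obstacle'' you identify is a red herring: the deflation argument does \emph{not} require the top singular value to be attained only at the marginal pair. Any singular pair $(u_1,v_1)$ with $Qv_1=\sigma_1 u_1$ and $Q^{\top}u_1=\sigma_1 v_1$ extends to a full SVD basis (since $Q$ maps $v_1^{\perp}$ into $u_1^{\perp}$), so the constrained maximum equals the operator norm of the deflated matrix, namely $\sigma_2$, regardless of multiplicity. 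In fact, strict positivity of the marginals does \emph{not} deliver the uniqueness you claim --- take $\hat{y}_{\btheta}=s$ with uniform marginals, so $Q=I$ and $\sigma_1=\sigma_2=1$ --- yet the conclusion survives, because the proof only needs the inequality $\rho\leq\sigma_2$ (which the deflation bound gives directly), not the equality $\rho=\sigma_2$.
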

\begin{proof}
Based on Propositions~\ref{proposition: ERMI}, ERMI is an $f$-divergence with $f(t) = (t-1)^2$. Therefore, the proposition is an immediate result of Lemma 2 in~\citep{lowy2022stochastic}.
\end{proof}
\begin{remark}
Mutual Information as the measure of fairness violation~\citep{cho2020fair} is a special case of $f$-divergences for the choice of KL-divergence $f(t) = t\log(t)$ in~\eqref{eq: f-FERM}.    
\end{remark}

\section{Proof of Proposition~\ref{thm: variational}}
\label{appendix: thm21}
\begin{lemma}
\label{thm: master}
Assume that $f(\bz)$ is a semi-continuous convex function. Therefore, $f$ can be written as the following maximization problem:
\begin{equation*}
    f(\bz) = \max_{\alpha} \bz^T \alpha - g(\alpha)
\end{equation*}
where $g$ is the convex conjugate of $f$. 
\end{lemma}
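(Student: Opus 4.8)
The statement is the classical Fenchel--Moreau biconjugation theorem: for a proper, lower semi-continuous convex $f$, the biconjugate $f^{**}$ coincides with $f$. Writing $g = f^*$ for the convex conjugate $g(\alpha) = \sup_{\bz} \bz^T\alpha - f(\bz)$, the quantity $\max_\alpha \bz^T\alpha - g(\alpha)$ is exactly $f^{**}(\bz)$, so the plan is to prove $f = f^{**}$ by establishing the two inequalities $f^{**} \le f$ and $f^{**} \ge f$ separately. I read ``semi-continuous'' here as lower semi-continuous, which is the hypothesis that makes the result true.

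The first inequality is immediate from Fenchel's (Young's) inequality and needs no regularity beyond the definitions. For every $\bz$ and every $\alpha$, the definition of $g = f^*$ gives $g(\alpha) \ge \bz^T\alpha - f(\bz)$, hence $\bz^T\alpha - g(\alpha) \le f(\bz)$. Taking the supremum over $\alpha$ yields $f^{**}(\bz) \le f(\bz)$ for all $\bz$.

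The reverse inequality is the substantive part, and it is where the hypotheses are used: convexity together with lower semi-continuity makes the epigraph $\mathrm{epi}(f)$ a \emph{closed} convex set. I would argue by contradiction. Suppose $f^{**}(\bz_0) < f(\bz_0)$ for some $\bz_0$. Then the point $(\bz_0, f^{**}(\bz_0))$ lies strictly below the epigraph, hence outside the closed convex set $\mathrm{epi}(f)$, so the separating hyperplane theorem furnishes a nonzero $(\alpha, \beta)$ strictly separating the point from $\mathrm{epi}(f)$. When $\beta \neq 0$ one normalizes and reads off an affine minorant of $f$ of the form $\bz \mapsto \bz^T\tilde\alpha - g(\tilde\alpha)$ whose value at $\bz_0$ exceeds $f^{**}(\bz_0)$, contradicting the definition of $f^{**}$ as the supremum of such affine functions. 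Equivalently, this exhibits $f$ as the pointwise supremum of its affine minorants, which is precisely $f^{**}$.

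The main obstacle is the degenerate case of a \emph{vertical} separating hyperplane, i.e. $\beta = 0$, which can arise when $\bz_0$ lies outside $\mathrm{dom}(f)$ or on its relative boundary; there the separation does not directly yield an affine minorant. The standard remedy is a perturbation (tilting) argument: since $f$ is proper, lsc, and convex it admits at least one genuine affine minorant, and combining it with the vertical hyperplane and taking a suitable limit produces a nonvertical minorant that still violates the assumed strict inequality. Dispatching this case alongside the separation argument completes the proof that $f^{**} = f$, which is the claimed identity $f(\bz) = \max_\alpha \bz^T\alpha - g(\alpha)$.
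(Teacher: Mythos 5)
Your proof is correct, but it takes a genuinely more self-contained route than the paper. The paper's own proof of this lemma is essentially a citation: it defines $g = f^*$ and then invokes the Fenchel--Moreau theorem as a black box (``since $f$ is lower semi-continuous and convex, it is biconjugate''), which is exactly the identity $f = f^{**}$ that the lemma asserts. You instead prove Fenchel--Moreau itself: the inequality $f^{**} \le f$ from Young's inequality, and the reverse inequality by separating the point $(\bz_0, f^{**}(\bz_0))$ from the closed convex epigraph, including the genuinely necessary treatment of the degenerate vertical-hyperplane case via tilting with an existing affine minorant. What your approach buys is transparency: the only external input is the separating hyperplane theorem, and it makes visible exactly where each hypothesis enters --- convexity and lower semi-continuity give closedness and convexity of $\mathrm{epi}(f)$, while properness (which you invoke in the degenerate case, and which is also needed to ensure $f^{**}(\bz_0) > -\infty$ so that the point being separated lies in $\mathbb{R}^{d+1}$ at all) supplies the affine minorant. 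What the paper's approach buys is brevity, at the cost of delegating precisely this argument to a reference. Two small points worth noting, both of which the paper shares: the lemma's hypotheses should include properness for the statement to be true as written, and the supremum defining $f^{**}$ need not be attained in general, so the ``$\max$'' in the statement should strictly speaking be a ``$\sup$''.
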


\begin{proof}
Let $g$ be the convex conjugate of the function $f$ defined as:
\begin{equation*}
    g(\alpha) = \sup_{\bz} \: \boldsymbol{\alpha}^T \bz - f(\bz)
\end{equation*}
Since $f$ is a lower semi-continuous convex function, by Fenchel-Moreau theorem~\citep{ioffe2009theory}, it is biconjugate, which means the taking conjugate of $g$ transforms it back to $f$. Therefore, 
\begin{equation*}
    f(\bz) = \sup_{\boldsymbol{\alpha}} \: \boldsymbol{\alpha}^T \bz - g(\boldsymbol{\alpha})  
\end{equation*}
where $g$ is the convex conjugate of $f$.
\end{proof}

Based on the above lemma, we have:
\begin{gather*}
\mathcal{D}_f(\mathbb{P}, \mathbb{Q}) = \sum_{i = 1}^{m} \mathbb{Q}_i f \Big(\frac{\mathbb{P}_i}{\mathbb{Q}_i}\Big) = \mathcal{D}_f(\mathbb{P}, \mathbb{Q}) = \sum_{i = 1}^{m} \mathbb{Q}_i \sup_{\alpha_i \in \textrm{dom} f} \alpha_i \frac{\mathbb{P}_i}{\mathbb{Q}_i} - f^{*}(\alpha_i) \\
= \sup_{\alpha_1, \dots, \alpha_m \in \textrm{dom} f} \sum_{i=1}^m \alpha_i \mathbb{P}_i - f^{*}(\alpha_i) \mathbb{Q}_i 
\end{gather*}
Set $\mathbb{P} = \mathbb{P}({\hat{y}_{\btheta} (\bx), s}), \mathbb{Q} = \mathbb{P}(\hat{y}_{\btheta}(\bx)) \otimes \mathbb{P}(s),$ and  $\alpha_i = A_{jk}$. Therefore, we obtain the formulation in~\eqref{eq: variational_representation}.
\section{Derivation of Closed-Form Expressions for Unbiased Gradient Estimators of $f$-Divergences}
\label{appendix: derivations}

\begin{proposition}
    For two functions $f(t), g(t)$ such that $g(t) = f(t) + c(t-1)$, then $\mathcal{D}_f(\cdot|\cdot) \equiv \mathcal{D}_g (\cdot|\cdot)$. 
\end{proposition}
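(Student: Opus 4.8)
The plan is to verify the identity $\mathcal{D}_g \equiv \mathcal{D}_f$ by direct computation from the discrete definition~\eqref{eq: f_divergence}, showing that the perturbation $c(t-1)$ contributes a term that vanishes precisely because $\mathbb{P}$ and $\mathbb{Q}$ are normalized probability measures. No deep machinery is needed; the whole argument rests on linearity of the defining sum and the normalization constraint.

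First, I would fix arbitrary discrete probability measures $\mathbb{P}, \mathbb{Q}$ on $\mathcal{P} = \{1,\dots,m\}$ and substitute $g(t) = f(t) + c(t-1)$ into the definition, obtaining
\begin{equation*}
\mathcal{D}_g(\mathbb{P}, \mathbb{Q}) = \sum_{j=1}^m \mathbb{Q}_j \, g\!\Big(\frac{\mathbb{P}_j}{\mathbb{Q}_j}\Big) = \sum_{j=1}^m \mathbb{Q}_j \, f\!\Big(\frac{\mathbb{P}_j}{\mathbb{Q}_j}\Big) + c \sum_{j=1}^m \mathbb{Q}_j \Big(\frac{\mathbb{P}_j}{\mathbb{Q}_j} - 1\Big).
\end{equation*}
The first sum is exactly $\mathcal{D}_f(\mathbb{P},\mathbb{Q})$ by~\eqref{eq: f_divergence}. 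The second step is to simplify the residual sum: since $\mathbb{Q}_j \cdot (\mathbb{P}_j/\mathbb{Q}_j - 1) = \mathbb{P}_j - \mathbb{Q}_j$, it telescopes into $c\big(\sum_j \mathbb{P}_j - \sum_j \mathbb{Q}_j\big) = c(1 - 1) = 0$, using that both are probability distributions. Concluding, $\mathcal{D}_g(\mathbb{P},\mathbb{Q}) = \mathcal{D}_f(\mathbb{P},\mathbb{Q})$ for all $\mathbb{P},\mathbb{Q}$, which is the claimed equivalence.

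There is no real technical obstacle here; the only point requiring care is to make explicit that the cancellation of the linear term uses the \emph{normalization} $\sum_j \mathbb{P}_j = \sum_j \mathbb{Q}_j = 1$, rather than any property of $f$ or convexity. It is worth flagging that $g$ remains convex (adding an affine function preserves convexity), so $\mathcal{D}_g$ is still a bona fide $f$-divergence and Proposition~\ref{thm: variational} still applies to it. Finally, I would remark that the identical computation goes through in the continuous setting of~\eqref{eq: f_divergence_continuous}, replacing the sum by $\int_\Omega$ and using $\int_\Omega dP = \int_\Omega dQ = 1$; this explains why, in Table~\ref{table: f-divergence-table}, generating functions may be chosen up to such affine normalizations without changing the regularizer.
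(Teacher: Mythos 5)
Your proof is correct and is exactly the standard affine-invariance computation: the linear term $c(t-1)$ contributes $c\sum_j(\mathbb{P}_j-\mathbb{Q}_j)=0$ by normalization. The paper itself gives no computation at all, deferring entirely to \citep[Proposition 7.2]{polyanskiy2022information}, which encapsulates precisely this argument, so your write-up simply makes self-contained what the paper outsources to the citation; your added remarks (convexity of $g$ is preserved, and the same cancellation works in the continuous case) are accurate and harmless.
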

\begin{proof}
    Proof follows naturally from~\citep[Proposition 7.2]{polyanskiy2022information}
\end{proof}

\begin{theorem}
Let $f(t) = (t-1)^2$ and $\mathbb{P}(s = k) = \pi_k$ ($\chi^2$ Divergence). Then, Equation~\eqref{eq: ferm} can be written as:
\begin{multline}
\label{eq: chi_square}
    \min_{\btheta} \max_{\bA} \sum_{i=1}^n \ell(\hat{y}_{\btheta}(\bx_i), y_i) + \lambda \sum_{j} \sum_{k} \pi_k \Big[A_{jk}\mathbb{P} (\hat{y}_{\btheta} = j | s = k) - (A_{jk} +\frac{A_{jk}^2}{4}) \mathbb{P}(\hat{y}_{\btheta} = j)\Big]
\end{multline}
\end{theorem}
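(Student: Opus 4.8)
The plan is to specialize the already-established variational representation of Proposition~\ref{thm: variational} to the choice $f(t) = (t-1)^2$, so that the only genuine computation is the Legendre--Fenchel conjugate of this particular $f$. First I would invoke Proposition~\ref{thm: variational} to replace the $\chi^2$-regularized objective \eqref{eq: ferm} by its min-max form \eqref{eq: variational_representation}, which holds verbatim for any convex $f$. This immediately reduces the theorem to inserting the correct $f^*$ and performing elementary bookkeeping on the probability factors.

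Next I would compute $f^*$. For $f(w) = (w-1)^2$ the defining supremum
\[
f^*(z) = \sup_{w}\big(wz - (w-1)^2\big)
\]
is an unconstrained concave maximization in $w$; setting the derivative $z - 2(w-1)$ to zero gives the maximizer $w = 1 + z/2$, and substituting back yields $f^*(z) = z + \tfrac{z^2}{4}$. In particular $f^*(A_{jk}) = A_{jk} + \tfrac{A_{jk}^2}{4}$, which is exactly the quadratic penalty appearing inside \eqref{eq: chi_square}.

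I would then substitute this expression into \eqref{eq: variational_representation} and rearrange: replace $\mathbb{P}_s(s=k)$ by the constant $\pi_k$, and write the joint probability as $\mathbb{P}_{\hat{y}_{\btheta}, s}(\hat{y}_{\btheta}=j, s=k) = \pi_k\,\mathbb{P}(\hat{y}_{\btheta}=j \mid s=k)$. Factoring the common $\pi_k$ out of each summand turns the bracketed term into $\pi_k\big[A_{jk}\,\mathbb{P}(\hat{y}_{\btheta}=j\mid s=k) - (A_{jk}+\tfrac{A_{jk}^2}{4})\,\mathbb{P}(\hat{y}_{\btheta}=j)\big]$, matching \eqref{eq: chi_square} termwise and completing the derivation.

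The steps are routine, so the difficulty is a matter of care rather than depth, and the one subtlety worth flagging concerns the domain of $f$. If $f$ is taken on $\mathbb{R}^+$ (as is natural for a density ratio) rather than on all of $\mathbb{R}$, the closed form $f^*(z) = z + \tfrac{z^2}{4}$ is valid only for $z \geq -2$, with $f^*$ saturating to its boundary value for $z < -2$. I would note that the first-order optimality condition in the inner maximization forces the optimal dual variable to be $A_{jk}^* = 2\big(\tfrac{\mathbb{P}(\hat{y}_{\btheta}=j\mid s=k)}{\mathbb{P}(\hat{y}_{\btheta}=j)} - 1\big)$, and since the probability ratio is nonnegative (and well-defined under the positivity assumption $\mathbb{P}(\hat{y}_{\btheta}=j)>0$ of Theorem~\ref{thm: f_divergence_convergence}), we always have $A_{jk}^* \geq -2$. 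Hence the optimum lies in the interior regime where the clean quadratic conjugate applies, and \eqref{eq: chi_square} holds as stated.
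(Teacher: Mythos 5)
Your proposal is correct and follows essentially the same route as the paper: specialize Proposition~\ref{thm: variational} to $f(t)=(t-1)^2$, compute the conjugate via the first-order condition $w^\ast = 1+z/2$ to get $f^*(z)=z+\tfrac{z^2}{4}$, and substitute $\mathbb{P}_{\hat{y}_{\btheta},s}(j,k)=\pi_k\,\mathbb{P}(\hat{y}_{\btheta}=j\mid s=k)$. Your closing remark on the domain of $f$ (that the optimal dual $A_{jk}^\ast = 2\bigl(\tfrac{\mathbb{P}(\hat{y}_{\btheta}=j\mid s=k)}{\mathbb{P}(\hat{y}_{\btheta}=j)}-1\bigr)\geq -2$ keeps the quadratic conjugate valid) is a point of rigor the paper glosses over, but it does not change the argument.
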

Variational Representation of $f(x)=(x-1)^2$ is given by
\begin{equation*}
    f(x) = \sup_\alpha (\alpha x - f^*(\alpha))
\end{equation*}
Where $f^*(\alpha)$ is the convex conjugate
\begin{equation*}
    f^*(\alpha) = \sup_x(x\alpha - f(x))
\end{equation*}
Taking derivative of $f^*(\alpha)$ w.r.t x gives $x^* = \alpha/2 +1$. This results in $f^*(\alpha) = \alpha + \alpha^2/4$

\begin{theorem}
Let $f(t) = -\ln(t)$ and $\mathbb{P}(s = k) = \pi_k$ (Reverse KL). Then, Equation~\eqref{eq: ferm} can be written as:
\begin{equation}
    \min_{\btheta} \max_{\bA} \sum_{i=1}^n \ell(\hat{y}_{\btheta}(\bx_i), y_i) + \lambda \sum_{j} \sum_{k} \pi_k \Big[A_{jk} \mathbb{P}(\hat{y}_{\btheta} = j | s = k) + (1 + \ln(-A_{jk})) \mathbb{P} (\hat{y}_{\btheta} = j) \Big]
\end{equation}
\end{theorem}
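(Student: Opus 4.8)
The plan is to mirror the derivation already carried out for the $\chi^2$ case: start from the variational reformulation of Proposition~\ref{thm: variational}, plug in the specific convex conjugate of $f(t) = -\ln t$, and simplify. First I would invoke Proposition~\ref{thm: variational}, which rewrites the $f$-FERM objective as
$$\min_{\btheta} \max_{\bA} \sum_{i=1}^n \ell(\hat{y}_{\btheta}(\bx_i), y_i) + \lambda \sum_{j,k} \Big[ A_{jk}\, \mathbb{P}_{\hat{y}, s}(\hat{y}_{\btheta}=j, s=k) - f^*(A_{jk})\, \mathbb{P}_{\hat{y}}(\hat{y}_{\btheta}=j)\, \mathbb{P}_s(s=k) \Big],$$
so that the only remaining task is to evaluate $f^*$ for this choice of $f$.

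Next I would compute the Legendre--Fenchel transform $f^*(\alpha) = \sup_{t>0}\big(\alpha t + \ln t\big)$. Setting the derivative $\alpha + 1/t$ to zero yields the stationary point $t^* = -1/\alpha$, which lies in the domain $t>0$ only when $\alpha < 0$; for $\alpha \geq 0$ the supremum is $+\infty$. Substituting $t^*$ back gives $f^*(\alpha) = -1 - \ln(-\alpha)$ on its effective domain $\alpha < 0$. This domain restriction is the one point worth flagging: because $f^*$ is finite only for negative arguments, the inner maximization is effectively taken over $A_{jk} < 0$, which is precisely what makes the term $\ln(-A_{jk})$ in the claimed expression well-defined.

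Finally I would substitute $f^*(A_{jk}) = -1 - \ln(-A_{jk})$ into the variational objective and use the factorization $\mathbb{P}_{\hat{y}, s}(\hat{y}_{\btheta}=j, s=k) = \pi_k\, \mathbb{P}(\hat{y}_{\btheta}=j \mid s=k)$ together with $\mathbb{P}_s(s=k) = \pi_k$. The conjugate term becomes $-f^*(A_{jk})\, \pi_k\, \mathbb{P}(\hat{y}_{\btheta}=j) = \big(1 + \ln(-A_{jk})\big)\, \pi_k\, \mathbb{P}(\hat{y}_{\btheta}=j)$, and factoring $\pi_k$ out of both terms produces exactly the stated regularizer. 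I expect essentially no obstacle here beyond bookkeeping: the entire content is the conjugate computation and the sign tracking, both of which are routine, and the derivation runs completely parallel to the $\chi^2$ theorem established just above.
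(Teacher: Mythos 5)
Your proposal is correct and follows essentially the same route as the paper: invoke Proposition~\ref{thm: variational}, compute the conjugate of $f(t)=-\ln t$ by stationarity to get $t^*=-1/\alpha$ and $f^*(\alpha)=-1-\ln(-\alpha)$, and substitute back using $\mathbb{P}_{\hat{y},s}(\hat{y}_{\btheta}=j,s=k)=\pi_k\,\mathbb{P}(\hat{y}_{\btheta}=j\mid s=k)$. Your explicit tracking of the effective domain $A_{jk}<0$ is a small rigor improvement over the paper's terse derivation, which leaves that constraint implicit in the appearance of $\ln(-A_{jk})$.
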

Proceeding as above, optimal $x^*$ for the supremum of $f^*(\alpha)$ is $x^* = -1/\alpha$, \\ resulting in $f^*(\alpha) = -1-\ln(-\alpha)$.

%--------------------
\begin{theorem}
Let $f(t) = \frac{1}{2} |t-1|$  and  $\mathbb{P}(s = k) = \pi_k$ (Total Variational Distance). Then, Equation~\eqref{eq: ferm} can be written as (where $|A_{jk}|\leq \frac{1}{2}$):
\begin{equation}
    \min_{\btheta} \max_{\bA} \sum_{i=1}^n \ell(\hat{y}_{\btheta}(\bx_i), y_i) + \lambda \sum_{j} \sum_{k} \pi_k A_{jk} \Big[ \mathbb{P}(\hat{y}_{\btheta} = j | s=k ) - \mathbb{P}(\hat{y}_{\btheta} = j) \Big]
\end{equation}
\end{theorem}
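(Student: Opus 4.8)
The plan is to specialize the variational representation of Proposition~\ref{thm: variational} to the total variation generator $f(t) = \tfrac{1}{2}|t-1|$. By that proposition, each summand of the regularizer in~\eqref{eq: variational_representation} takes the form $A_{jk}\,\mathbb{P}_{\hat{y},s}(\hat{y}_{\btheta}=j, s=k) - f^*(A_{jk})\,\mathbb{P}_{\hat{y}}(\hat{y}_{\btheta}=j)\,\mathbb{P}_s(s=k)$, so the whole computation reduces to evaluating the Legendre-Fenchel conjugate $f^*$ of $\tfrac{1}{2}|t-1|$ and substituting it back, exactly as was done for the $\chi^2$ and reverse-KL cases.

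First I would compute $f^*(\alpha) = \sup_{t}\bigl(t\alpha - \tfrac{1}{2}|t-1|\bigr)$. Making the change of variable $u = t-1$ gives $f^*(\alpha) = \alpha + \sup_{u}\bigl(u\alpha - \tfrac{1}{2}|u|\bigr)$, and the inner supremum equals $0$ (attained at $u=0$) precisely when $|\alpha|\le \tfrac{1}{2}$ and is $+\infty$ otherwise. Hence $f^*(\alpha) = \alpha$ on the effective domain $[-\tfrac12,\tfrac12]$ and $f^* = +\infty$ outside it. This is the step from which the constraint $|A_{jk}|\le \tfrac12$ in the statement originates: the $+\infty$ values force the maximization over $\bA$ to restrict each $A_{jk}$ to $[-\tfrac12,\tfrac12]$, and on that range $f^*$ is simply the identity.

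Next I would substitute $f^*(A_{jk}) = A_{jk}$ into the generic representation, use $\mathbb{P}_s(s=k)=\pi_k$ together with the identity $\mathbb{P}_{\hat{y},s}(\hat{y}_{\btheta}=j, s=k) = \pi_k\,\mathbb{P}(\hat{y}_{\btheta}=j\mid s=k)$, and factor out $\pi_k$. This collapses the summand to $\pi_k A_{jk}\bigl[\mathbb{P}(\hat{y}_{\btheta}=j\mid s=k) - \mathbb{P}(\hat{y}_{\btheta}=j)\bigr]$, which is exactly the claimed regularizer, completing the derivation.

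The only delicate point---and the one I would treat most carefully---is the domain of the conjugate. If one computes $f^*$ over $t\ge 0$ (the natural domain of the likelihood ratio $\mathbb{P}_j/\mathbb{Q}_j$) rather than over all of $\mathbb{R}$, then for $\alpha<-\tfrac12$ the supremum is the finite constant $-\tfrac12$ instead of $+\infty$. I would note that this discrepancy is harmless: on the region $A_{jk}<-\tfrac12$ the resulting term $\pi_k\bigl[A_{jk}\mathbb{P}(\hat{y}_{\btheta}=j\mid s=k)+\tfrac12\mathbb{P}(\hat{y}_{\btheta}=j)\bigr]$ tends to $-\infty$ as $A_{jk}\to-\infty$ (assuming $\mathbb{P}(\hat{y}_{\btheta}=j\mid s=k)>0$), so the inner maximizer never leaves $[-\tfrac12,\tfrac12]$. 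Thus the stated representation with the explicit constraint $|A_{jk}|\le\tfrac12$ is recovered regardless of which domain convention is adopted, and on that interval $f^*$ being affine makes the regularizer linear in each $A_{jk}$, matching the final formula.
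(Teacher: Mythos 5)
Your proposal is correct and follows essentially the same route as the paper's derivation: compute the Legendre--Fenchel conjugate of $f(t)=\tfrac{1}{2}|t-1|$, observe that $f^*(\alpha)=\alpha$ on its effective domain $|\alpha|\le\tfrac12$ (with $+\infty$ outside, which is where the constraint $|A_{jk}|\le\tfrac12$ comes from), and substitute into the variational representation of Proposition~\ref{thm: variational}. Your additional discussion of the $t\ge 0$ versus $t\in\mathbb{R}$ domain convention is a careful refinement the paper omits, but it does not change the argument's structure.
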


For $f=\frac{1}{2} |t-1|$, the variational representation is $f(x) = \sup_{\alpha} \Big(\alpha x - f^*(\alpha)\Big)$

Through the convex conjugate $f^*(\alpha)$, we have that 
\begin{align*}
    f^*(\alpha) &= \sup_{x} \Big( x\alpha - f(\alpha) \Big) = \sup_{x} \Big( x\alpha - \frac{1}{2}|x-1| \Big)\\
    &= \begin{cases}
        \infty & \text{for } |\alpha|>\frac{1}{2} \\
        \alpha & \text{for } |\alpha|\leq \frac{1}{2}
    \end{cases}
\end{align*}
So $|\alpha|\leq \frac{1}{2}$ is constrained for the supremum/maximum to exist (otherwise tends to $\infty$).

\begin{theorem}
Let $f(t) = t \ln(t)$  and  $\mathbb{P}(s = k) = \pi_k$ (KL Divergence). Then, Equation~\eqref{eq: ferm} can be written as: 
\begin{equation}
    \min_{\btheta} \max_{\bA} \sum_{i=1}^n \ell(\hat{y}_{\btheta}(\bx_i), y_i) + \lambda \sum_{j} \sum_{k} \pi_k \Big[ A_{jk} \mathbb{P}(\hat{y}_{\btheta} = j|s=k) - e^{A_{jk} - 1} \mathbb{P} (\hat{y}_{\btheta} = j)\Big]
\end{equation}
\end{theorem}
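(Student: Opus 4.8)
The plan is to follow the identical recipe already used for the $\chi^2$, reverse KL, and total-variation cases: invoke the variational representation of Proposition~\ref{thm: variational} and then substitute the explicit Legendre--Fenchel conjugate $f^*$ of $f(t)=t\ln t$. By Proposition~\ref{thm: variational} (and the derivation in Appendix~\ref{appendix: thm21}), Equation~\eqref{eq: ferm} equals the min-max problem~\eqref{eq: variational_representation}, whose regularizer has the generic form $\lambda\sum_{j,k}\big[A_{jk}\,\mathbb{P}_{\hat{y},s}(j,k)-f^*(A_{jk})\,\mathbb{P}_{\hat{y}}(j)\,\mathbb{P}_s(k)\big]$. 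Thus the only task specific to this theorem is to compute $f^*$ in closed form and to rewrite the joint/marginal probabilities in the conditional form appearing in the statement.

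First I would compute the convex conjugate. By definition $f^*(\alpha)=\sup_{x>0}\big(x\alpha-x\ln x\big)$. Setting the derivative $\alpha-\ln x-1$ to zero gives the stationary point $x^*=e^{\alpha-1}$; since $x\mapsto x\alpha-x\ln x$ is concave on $(0,\infty)$, this is the global maximizer, and substituting back yields $f^*(\alpha)=\alpha e^{\alpha-1}-e^{\alpha-1}(\alpha-1)=e^{\alpha-1}$. A point worth flagging is that, in contrast to the total-variation case, the domain of $\alpha$ is all of $\mathbb{R}$ here (the supremum is finite for every real $\alpha$), so no side constraint of the form $|A_{jk}|\le c$ is inherited by the inner maximization.

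Next I would substitute $\alpha_i=A_{jk}$, $\mathbb{P}=\mathbb{P}(\hat{y}_{\btheta}(\bx),s)$, and $\mathbb{Q}=\mathbb{P}(\hat{y}_{\btheta}(\bx))\otimes\mathbb{P}(s)$ into the variational form exactly as in the proof of Proposition~\ref{thm: variational}, producing the regularizer $\lambda\sum_{j,k}\big[A_{jk}\,\mathbb{P}_{\hat{y},s}(j,k)-e^{A_{jk}-1}\,\mathbb{P}_{\hat{y}}(j)\,\mathbb{P}_s(k)\big]$. Using $\mathbb{P}_s(s=k)=\pi_k$ together with the identity $\mathbb{P}_{\hat{y},s}(j,k)=\pi_k\,\mathbb{P}(\hat{y}_{\btheta}=j\mid s=k)$, I can factor $\pi_k$ out of both terms to arrive at precisely the displayed expression.

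There is essentially no serious obstacle: the argument is a short conjugate computation followed by bookkeeping. The only place demanding a moment of care is justifying that the interior stationary point of $x\alpha-x\ln x$ actually attains the supremum—namely checking concavity in $x$ and the boundary behavior as $x\to0^+$ and $x\to\infty$—and confirming that $\mathrm{dom}(f^*)=\mathbb{R}$, so that the maximization over each $A_{jk}$ is genuinely unconstrained. This last observation is exactly what distinguishes the KL case from the total-variation case and explains why no constraint on $A_{jk}$ appears in the final formula.
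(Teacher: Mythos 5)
Your proposal is correct and follows exactly the paper's own route: compute the Legendre--Fenchel conjugate $f^*(\alpha)=e^{\alpha-1}$ from the stationary point $x^*=e^{\alpha-1}$ of $x\alpha-x\ln x$, then substitute into the variational representation of Proposition~\ref{thm: variational} and factor out $\pi_k$ via $\mathbb{P}_{\hat{y},s}(j,k)=\pi_k\,\mathbb{P}(\hat{y}_{\btheta}=j\mid s=k)$. Your added checks (concavity in $x$, boundary behavior, and the observation that $\mathrm{dom}(f^*)=\mathbb{R}$ so no constraint on $A_{jk}$ is needed, unlike the total-variation case) are welcome refinements of the paper's terser argument, not a different approach.
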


For $f(t)=t \ln(t) $ in f-divergence, the convex conjugate can be represented by:
\begin{align*}
    f^*(\alpha) = \sup_{x} (x\alpha - x \ln(x))
\end{align*}
On differenting w.r.t $x$ for attaining supremum, we get $x = e^{\alpha - 1}$. Hence, the variational representation of $f(t) = t \ln(t)$ becomes:
\begin{align*}
    f(x) = \sup_{\alpha} \Big(x\alpha - e^{\alpha - 1}\Big)
\end{align*}
\textbf{Note:} We can also use the affine transformation $ \alpha \leftarrow \alpha -1$, which results in the more commonly studied version in literature:
\begin{align*}
    D(P||Q) = 1+ \sup_{g:X \rightarrow \mathbb{R}} \mathbb{E}_P[g(X)] - \mathbb{E}_Q[e^{g(X)}]
\end{align*}

\begin{theorem}
Let $f(t) = -(t+1)\ln(\frac{t+1}{2}) + t \ln(t)$  and  $\mathbb{P}(s = k) = \pi_k$ (Jensen-Shannon Divergence). Then, Equation~\eqref{eq: ferm} can be written as: 
\begin{equation}
  \min_{\btheta} \max_{\bA} \sum_{i=1}^n \ell(\hat{y}_{\btheta}(\bx_i), y_i) + \lambda \sum_{j} \sum_{k} \pi_k \Big[ A_{jk} \mathbb{P}(\hat{y}_{\btheta} = j|s=k) + \ln(2 - e^{A_{jk}}) \mathbb{P}(\hat{y}_{\btheta} = j)\Big]
\end{equation}
\end{theorem}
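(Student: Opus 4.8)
The plan is to specialize the variational (Legendre--Fenchel) representation established in Proposition~\ref{thm: variational} to the Jensen--Shannon generator $f(t) = -(t+1)\ln\frac{t+1}{2} + t\ln t$, exactly as was done for the $\chi^2$, reverse KL, total variation, and KL cases. Since $f$ is convex and lower semicontinuous on $\mathbb{R}^+$ with $f(1)=0$, Lemma~\ref{thm: master} applies, and the only genuine work is to compute the convex conjugate $f^*(\alpha) = \sup_{x>0} \big(x\alpha - f(x)\big)$ in closed form; everything else follows by substituting $f^*$ and $\alpha_i = A_{jk}$ into~\eqref{eq: variational_representation}.

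To obtain $f^*$, I would first differentiate, getting $f'(t) = \ln t - \ln(t+1) + \ln 2 = \ln\!\big(\tfrac{2t}{t+1}\big)$. The first-order condition $f'(x^*) = \alpha$ for the supremum then reads $\tfrac{2x^*}{x^*+1} = e^{\alpha}$, which I solve to get $x^* = \tfrac{e^{\alpha}}{2 - e^{\alpha}}$; this is a positive maximizer precisely when $\alpha < \ln 2$, and for $\alpha \ge \ln 2$ one checks (using $f(x)\sim x\ln 2$ as $x\to\infty$) that the supremum is $+\infty$, so $\mathrm{dom}(f^*) = \{\alpha : e^{\alpha} < 2\}$. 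Substituting $x^*$ back, the useful identities $\tfrac{x^*+1}{2} = \tfrac{1}{2-e^{\alpha}}$ and $\ln x^* = \alpha - \ln(2 - e^{\alpha})$ make the $x^*\alpha$ contribution cancel against the $x^*\ln x^*$ piece, collapsing the whole expression to $f^*(\alpha) = -\ln(2 - e^{\alpha})$.

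Finally, I would plug this into the general regularizer term of~\eqref{eq: variational_representation}. Because the representation carries $-f^*(A_{jk})$, one gets $-f^*(A_{jk}) = \ln(2 - e^{A_{jk}})$, and using $\mathbb{P}(\hat{y}_{\btheta}=j, s=k) = \pi_k\,\mathbb{P}(\hat{y}_{\btheta}=j\mid s=k)$ to factor out $\pi_k = \mathbb{P}(s=k)$ yields exactly $\pi_k\big[A_{jk}\mathbb{P}(\hat{y}_{\btheta}=j\mid s=k) + \ln(2 - e^{A_{jk}})\mathbb{P}(\hat{y}_{\btheta}=j)\big]$, which is the claimed form.

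The only delicate point is the algebra in the conjugate computation: one must track the logarithmic terms carefully so that the cancellation producing the clean $-\ln(2-e^{\alpha})$ is transparent, and note that the domain restriction $\alpha<\ln 2$ is implicitly enforced on the inner maximization over $\bA$ (the regularizer is $+\infty$, i.e. inactive, outside this range), which keeps $\ln(2 - e^{A_{jk}})$ real. No harder issue arises, since convexity and the biconjugacy needed for Lemma~\ref{thm: master} are inherited from the standing assumptions on $f$.
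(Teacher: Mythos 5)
Your proposal is correct and follows essentially the same route as the paper's own derivation: compute $f^*$ via the first-order condition $\tfrac{2x}{x+1}=e^{\alpha}$, giving $x^*=\tfrac{e^{\alpha}}{2-e^{\alpha}}$ and, after the logarithmic cancellation, $f^*(\alpha)=-\ln(2-e^{\alpha})$, which is then substituted into the variational representation of Proposition~\ref{thm: variational} with $\pi_k$ factored out. Your explicit verification that $\mathrm{dom}(f^*)=\{\alpha:\alpha<\ln 2\}$ (so that $\ln(2-e^{A_{jk}})$ stays real and the constraint is implicitly enforced by the inner maximization) is a small rigor point the paper leaves tacit, but it does not change the argument.
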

For the JS Divergence, we have $f(t) = -(t+1)\ln(\frac{t+1}{2}) + t \ln(t)$, whose convex conjugate can be represented as:
\begin{align*}
    f^*(\alpha) = \sup_{x} \Big( \alpha x + (x+1)\ln\Big(\frac{x+1}{2}\Big) - x\ln(x) \Big)
\end{align*}
On differentiating w.r.t $x$ to obtain the supremum, we have
\begin{align*}
    \frac{2x}{x+1} = e^{\alpha}
    \implies  x &= \frac{e^{\alpha}}{2-e^{\alpha}}
\end{align*}
Substituting $x$ in $f^*(\alpha)$, 
\begin{align*}
    f^*(\alpha) = -\ln(2-e^{\alpha})
\end{align*}
Thus, in $f(x) = \sup_{\alpha} \Big( x\alpha - f^*(\alpha)\Big)$, we get the variational form as:
\begin{align*}
    f(x) = \sup_{\alpha} \Big( x\alpha + \ln(2-e^{\alpha}) \Big)
\end{align*}

\begin{theorem}
    Let $f(t)$ be \\
\[ 
 f(t) = 
    \begin{cases}
        \frac{t^{\alpha} - \alpha t - (1-\alpha)}{\alpha (\alpha-1)} & \text{if } \alpha \neq 0, \alpha \neq 1 \\
        t \ln(t) -t +1 & \text{if } \alpha = 1\\
        -\ln(t) +t -1 & \text{if } \alpha =0
    \end{cases}
\]
    and  $\mathbb{P}(s = k) = \pi_k$ (General $\alpha$ Divergence). Then, Equation~\eqref{eq: ferm} can be written as: 
    \begin{multline}
  \min_{\btheta} \max_{\bA} \sum_{i=1}^n \ell(\hat{y}_{\btheta}(\bx_i), y_i) + \lambda \sum_{j} \sum_{k} \pi_k \Big[ A_{jk} \mathbb{P}(\hat{y}_{\btheta} = j|s=k)\\
  - \frac{\mathbb{P}(\hat{y}_{\btheta}=j)}{\alpha}\Big( \Big((\alpha -1)A_{jk}+1\Big)^{\frac{\alpha}{\alpha-1}} - 1   \Big)  \Big]
\end{multline}
\end{theorem}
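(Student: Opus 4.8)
The plan is to follow the identical template used for the $\chi^2$, KL, reverse-KL, and Jensen--Shannon cases above: invoke the variational reformulation of Proposition~\ref{thm: variational} (equivalently Lemma~\ref{thm: master}), which reduces the entire claim to computing the Legendre--Fenchel conjugate $f^*(z)=\sup_{t}\, tz - f(t)$ of the $\alpha$-divergence generator and then substituting the result into \eqref{eq: variational_representation}. So the only genuine content is the conjugate computation for the general $\alpha$ generator; the rest is bookkeeping.

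First I would treat the generic case $\alpha\neq 0,1$, noting that $\alpha=1$ and $\alpha=0$ are already covered by the KL and reverse-KL theorems proved above and should emerge as the limits $\alpha\to 1$ and $\alpha\to 0$ (a useful consistency check). After verifying that $f(t)=\frac{t^{\alpha}-\alpha t-(1-\alpha)}{\alpha(\alpha-1)}$ is convex on $\mathbb{R}^+$ (so that the biconjugate machinery of Lemma~\ref{thm: master} applies), I would differentiate to get $f'(t)=\frac{t^{\alpha-1}-1}{\alpha-1}$. The first-order optimality condition $z=f'(t^*)$ then gives $t^*=\big((\alpha-1)z+1\big)^{1/(\alpha-1)}$, which is real and finite precisely on the set where $(\alpha-1)z+1>0$; this set is $\mathrm{dom}(f^*)$ and determines the admissible range of the dual variable $A_{jk}$ in the supremum.

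Next I would substitute $t^*$ back into $f^*(z)=z\,t^*-f(t^*)$ and simplify. Writing $u:=(\alpha-1)z+1$ so that $t^*=u^{1/(\alpha-1)}$ and exploiting the identity $\tfrac{\alpha}{\alpha-1}=1+\tfrac{1}{\alpha-1}$ (hence $(t^*)^{\alpha}=u\cdot u^{1/(\alpha-1)}=u^{\alpha/(\alpha-1)}$), the cross terms telescope: after clearing the common factor $\alpha(\alpha-1)$ one is left with $\alpha(\alpha-1)f^*(z)=(\alpha-1)\big(u^{\alpha/(\alpha-1)}-1\big)$, i.e. $f^*(z)=\tfrac{1}{\alpha}\big(((\alpha-1)z+1)^{\alpha/(\alpha-1)}-1\big)$. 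This collapsing of terms is the main (and essentially only) obstacle, though it is routine algebra rather than anything conceptual.

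Finally, plugging this $f^*$ into \eqref{eq: variational_representation} with $A_{jk}$ as the dual variable and using $\mathbb{P}_{\hat y,s}(\hat y=j,s=k)=\pi_k\,\mathbb{P}(\hat y=j\mid s=k)$ together with $\mathbb{P}_s(s=k)=\pi_k$ factors out $\pi_k$ from each summand and yields exactly the stated regularizer $\pi_k\big[A_{jk}\mathbb{P}(\hat y=j\mid s=k)-\tfrac{1}{\alpha}(((\alpha-1)A_{jk}+1)^{\alpha/(\alpha-1)}-1)\mathbb{P}(\hat y=j)\big]$, completing the derivation.
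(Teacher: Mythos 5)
Your proposal is correct and follows essentially the same route as the paper: compute the Legendre--Fenchel conjugate of the $\alpha$-generator via the first-order condition (yielding $t^*=((\alpha-1)z+1)^{1/(\alpha-1)}$ and $f^*(z)=\tfrac{1}{\alpha}\bigl(((\alpha-1)z+1)^{\alpha/(\alpha-1)}-1\bigr)$), then substitute into the variational representation of Proposition~\ref{thm: variational} and factor out $\pi_k$. Your additions --- the explicit convexity/domain check, the $u=(\alpha-1)z+1$ substitution that organizes the telescoping algebra, and recovering KL/reverse-KL as limits --- are all consistent with (and slightly more careful than) the paper's terse derivation, which likewise excludes $\alpha\in\{0,1\}$ and recovers those cases in the limit.
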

Excluding the limiting cases where $\alpha=1$ or $\alpha=0$, we can find the convex conjugate $f^*(y)$ as:
\begin{align*}
    f^*(y) &= \sup_{x} \Big( xy - f(x)\Big)\\
    &= \sup_{x} \Big(xy - \frac{x^{\alpha} - \alpha x - (1-\alpha)}{\alpha (\alpha-1)}\Big)
\end{align*}
On differentiating w.r.t. $x$, we obtain (here variational parameter is $y$, do not confuse with the constant $\alpha$)
\begin{align*}
    x^* = \Big( (\alpha -1 )y +1\Big)^{\frac{1}{\alpha-1}}
\end{align*}
Thus, 
\begin{align*}
    f^*(y) = \frac{\Big( (\alpha - 1 )y+1\Big)^{\frac{\alpha}{\alpha-1}}}{\alpha} - \frac{1}{\alpha}
\end{align*}
KL Divergence and Reverse KL Divergence can be obtained by taking the limit when $\alpha$ tends to $1$ and $0$, respectively.

\textbf{Note:} Standard literature on divergences often parametrize the $\alpha$-divergence as
\[
f(x) = 
\begin{cases}
    t\ln(t) & \text{if } \alpha = 1\\
    -\ln(t) & \text{if } \alpha = -1\\
    \frac{4}{1-\alpha^2} \Big( 1 - t^{(1+ \alpha/2)} \Big) & \textrm{otherwise}
\end{cases}
\]

This is equivalent to the substitution $\alpha \leftarrow \frac{1+\alpha}{2}$ in the original definition of generalized f-divergence.

\begin{theorem}
    Let $f(t) = (\sqrt{t} - 1)^2$ (equivalently $f(t) = 2(1-\sqrt{t})$) and  $\mathbb{P}(s = k) = \pi_k$ (Squared Hellinger Distance). Then, Equation~\eqref{eq: ferm} can be written as: 
\begin{equation}
  \min_{\btheta} \max_{\bA} \sum_{i=1}^n \ell(\hat{y}_{\btheta}(\bx_i), y_i) + \lambda \sum_{j} \sum_{k} \pi_k \Big[ A_{jk} \mathbb{P}(\hat{y}_{\btheta} = j|s=k) + \mathbb{P}(\hat{y}_{\btheta}=j)\Big(\frac{1}{A_{jk}}+2\Big)  \Big]
\end{equation}
\end{theorem}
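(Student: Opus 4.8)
The plan is to reuse the Legendre--Fenchel recipe applied to every preceding generator in this appendix: obtain the convex conjugate $f^*$ in closed form and substitute it into the variational representation of Proposition~\ref{thm: variational}. The one twist specific to the squared Hellinger case is \emph{which} representative of the generator to conjugate.

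First I would invoke the linear-shift invariance proved at the start of this appendix (namely $\mathcal{D}_f \equiv \mathcal{D}_g$ whenever $g(t) = f(t) + c(t-1)$) to pass from $f(t) = (\sqrt{t}-1)^2 = t - 2\sqrt{t} + 1$ to the equivalent generator $f(t) = 2(1-\sqrt{t})$, obtained by subtracting $(t-1)$. This is the decisive preparatory step: conjugating $2(1-\sqrt{t})$ is exactly what produces the $1/A_{jk}+2$ term in the statement, whereas conjugating $(\sqrt{t}-1)^2$ directly yields the equivalent but differently parametrized conjugate $\alpha/(1-\alpha)$, which does not match the stated form.

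Next I would compute $f^*(\alpha) = \sup_{x>0}\bigl(x\alpha - 2 + 2\sqrt{x}\bigr)$. Differentiating the inner objective and setting it to zero gives $\alpha + 1/\sqrt{x} = 0$, hence the maximizer $x^* = 1/\alpha^2$, which is positive (and the supremum finite) precisely on the domain $\alpha < 0$. Substituting $x^*$ back collapses the expression to $f^*(\alpha) = -1/\alpha - 2$. I would then insert this into \eqref{eq: variational_representation} with $\mathbb{P} = \mathbb{P}(\hat{y}_{\btheta},s)$, $\mathbb{Q} = \mathbb{P}(\hat{y}_{\btheta})\otimes\mathbb{P}(s)$, and $A_{jk}$ in the role of the dual variable, using $\mathbb{P}(\hat{y}_{\btheta}=j, s=k) = \pi_k\,\mathbb{P}(\hat{y}_{\btheta}=j\mid s=k)$ to factor out $\pi_k$. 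Because $f^*$ enters with a minus sign, the two negative signs in $-1/\alpha-2$ flip, giving precisely the $+\bigl(1/A_{jk}+2\bigr)\mathbb{P}(\hat{y}_{\btheta}=j)$ term claimed in the theorem.

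The only genuine subtlety, rather than a real obstacle, is bookkeeping the domain restriction $A_{jk} < 0$ that renders the conjugate finite and the maximizer positive; this must be recorded so that $1/A_{jk}$ is well-defined and the inner maximization ranges over the correct feasible set. With that caveat in place, the remaining content is a single differentiation and an algebraic substitution, exactly as in the earlier entries of Table~\ref{table: f-divergence-table}.
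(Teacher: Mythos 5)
Your proposal is correct and follows essentially the same route as the paper: conjugating the shifted generator $2(1-\sqrt{t})$ to obtain $f^*(\alpha) = -1/\alpha - 2$ on the domain $\alpha < 0$, then substituting into the variational representation of Proposition~\ref{thm: variational}. Your explicit appeal to the linear-shift proposition (to justify replacing $(\sqrt{t}-1)^2$ by $2(1-\sqrt{t})$) and your bookkeeping of the restriction $A_{jk}<0$ merely make precise what the paper leaves implicit in its ``equivalently'' and its parenthetical ``Note $\alpha<0$''.
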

For Squared Hellinger Distance, 
\begin{align*}
    f^*(\alpha) &= \sup_x (x\alpha - f(x))\\
    &= \sup_x (x\alpha-2(1-\sqrt x))
\end{align*}
On differentiating w.r.t. $x$, we get
\begin{align*}
    & \alpha+\frac{1}{\sqrt{x}} =0 \; (\text{Note }\alpha<0) \implies x= \frac{1}{(\alpha)^2}\\
    & \implies f^*(\alpha) = \frac{\alpha}{\alpha^2} -2 + \frac{(-2)}{\alpha} = \frac{-1}{\alpha} - 2\\
\end{align*}
Note that the first, second, and third terms are negative, negative, and positive, respectively; hence, the appropriate choice of $\sign(\alpha)$ for functions of odd powers of $\alpha$.

\section{Formal Statement of Theorem~\ref{thm: f_divergence_convergence} and Proof}
\label{appendix: thm_convergence}
\begin{theorem}
\label{thm: formal}
\textbf{Formal Statement of Theorem} Let $(\bx_i, y_i, s_i) \quad \forall 1 \leq i \leq n$ be the collection of $n$ data points satisfying the following assumptions:
\begin{itemize}
    \item $\ell(\cdot, \bx, y)$ is $G$-Lipschitz, and $\beta_\ell$-smooth for all $\bx_i, y_i$. 
    \item $F_{j}(\cdot, \btheta)$ is $L$-Lipschitz and $b$-smooth for all $\btheta$ and all label classes $j$. 
    \item $\widehat{p}_{\hat{y}}^{\min} := \inf_{\{\btheta^t, t \in [T]\}} \min_{j \in [m]} \frac{1}{N} \sum_{i=1}^N \hat{y}_{\btheta, j}(\bx_i) %\mathbbm{1}_{\{ \widehat{y}(\bx_i, \btheta) = j \}}
    \geq \frac{\mu}{2} > 0$.
    %for some $\btheta^* \in \argmin_{\btheta} \max_{W} \widehat{F}(\btheta, W)$. 
    \item $\hat{p}_S^{\min} := \frac{1}{N}\sum_{i=1}^N \mathbbm{1}_{\{s_i = j\}}  > 0$.
\end{itemize}
choose $\eta_{\btheta} = \Theta(\frac{\epsilon^4}{\ell^3 L^2 D^2})$ and $\eta_{\alpha} = \Theta(\frac{\epsilon^2}{\ell\sigma^2})$ and the mini-batch size of $1$. Therefore, Algorithm~\ref{alg: f-divergence-minibatch} finds an $\epsilon$-stationary of Problem~\ref{eq: f-FERM} in $\mathcal{O}(\frac{1}{\epsilon^8})$.
\end{theorem}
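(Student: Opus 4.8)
The plan is to view Problem~\eqref{eq: f-FERM}, through the data-separable reformulation~\eqref{eq: linearly_separable}, as a stochastic nonconvex--concave min--max problem $\min_{\btheta}\max_{\bA} h(\btheta,\bA)$ and then apply the convergence guarantee for stochastic gradient descent--ascent of~\citet{pmlr-v119-lin20a}. Doing so reduces the theorem to verifying four structural properties with explicit constants: (i) $h$ is concave in $\bA$; (ii) the relevant dual feasible region is compact with diameter $D$; (iii) $h$ has a finite joint Lipschitz-smoothness constant in $(\btheta,\bA)$; and (iv) the single-sample gradient estimators have bounded variance $\sigma^2$. The target notion of $\epsilon$-stationarity is the standard one for this class: a small gradient of the primal envelope $\Phi(\btheta):=\max_{\bA}h(\btheta,\bA)$ (equivalently, of its Moreau envelope).

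Property (i) is immediate, since each summand of the regularizer contains $-f^*(A_{jk})$ weighted by the nonnegative factor $\pi_k\,\hat{\mathbb{P}}_{\hat y}(j)$ and $f^*$ is convex; this is the concavity already noted after~\eqref{eq: linearly_separable}. Property (ii) is where the positivity hypotheses are used. Writing the first-order condition for the inner maximization coordinatewise gives $(f^*)'(A^*_{jk}) = \hat{\mathbb{P}}_{\hat y,s}(j,k)\big/\big(\pi_k\,\hat{\mathbb{P}}_{\hat y}(j)\big)$. Because $\hat{p}_S^{\min}>0$ and $\widehat{p}_{\hat y}^{\min}\ge \mu/2>0$, the right-hand side is bounded uniformly over all iterates $\btheta^t$; inverting $(f^*)'$ on this bounded range (which is licit for the differentiable divergences of Table~\ref{table: f-divergence-table}) confines each $A^*_{jk}$, and running the ascent over the induced box the iterates as well, to a compact set of controlled diameter $D$.

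For property (iii) I would differentiate $h$ through the softmax outputs $F_j$ and the loss $\ell$. The $\btheta$-gradient is a composition of the $G$-Lipschitz, $\beta_\ell$-smooth loss and the $L$-Lipschitz, $b$-smooth predictor maps, scaled by the bounded quantities $A_{jk}$ and $f^*(A_{jk})$; the dual and mixed second derivatives are controlled using boundedness of $\bA$ together with the probability lower bounds, which keep $f^*,(f^*)',(f^*)''$ away from their singularities. Collecting these bounds yields a finite joint smoothness constant $\ell$. Property (iv) is then routine: with batch size one, the estimators $\hat{\mathbb{P}}_{\hat y}(j;\mathcal{B})$ and $\hat{\mathbb{P}}_{\hat y,s}(j,k;\mathcal{B})$ are single $[0,1]$-valued, Lipschitz terms, so the corresponding stochastic gradients have uniformly bounded second moments, giving $\sigma^2$.

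With (i)--(iv) established, the stochastic GDA theorem of~\citet{pmlr-v119-lin20a}, instantiated with $\eta_{\btheta}=\Theta\!\big(\epsilon^4/(\ell^3 L^2 D^2)\big)$ and $\eta_{\alpha}=\Theta\!\big(\epsilon^2/(\ell\sigma^2)\big)$ and batch size one, returns an $\epsilon$-stationary point of $\Phi$ in $\mathcal{O}(\epsilon^{-8})$ iterations, which is exactly the claim. I expect the genuine difficulty to lie in Steps (ii) and (iii) carried out \emph{per divergence}: the domain and curvature of $f^*$ vary across Table~\ref{table: f-divergence-table} (for instance $f^*(A)=e^{A-1}$ for KL versus the hard constraint $|A|\le \tfrac12$ for total variation), so bounding the dual diameter and the smoothness constant simultaneously requires case-specific control, and it is precisely the lower bounds $\hat{p}_S^{\min},\widehat{p}_{\hat y}^{\min}>0$ that prevent $(f^*)'$ and $(f^*)''$ from blowing up and thereby keep $D$ and $\ell$ finite.
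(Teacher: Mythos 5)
Your proposal is correct and follows essentially the same route as the paper's proof: both reduce \eqref{eq: f-FERM} via the separable reformulation \eqref{eq: linearly_separable} to a stochastic nonconvex--concave min--max problem, bound the dual variables $A_{jk}$ (and hence $f^*(A_{jk})$) through the first-order optimality condition of the inner maximization together with the positivity assumptions on $\hat{p}_S^{\min}$ and $\widehat{p}_{\hat y}^{\min}$, establish unbiasedness and bounded variance of the single-sample gradients, and then invoke the two-time-scale SGDA guarantee (Theorem 4.9) of \citet{pmlr-v119-lin20a} to obtain the $\mathcal{O}(\epsilon^{-8})$ complexity. If anything, your write-up is more explicit than the paper's about the dual-domain compactness and joint smoothness constants that Lin et al.'s theorem formally requires, but the underlying argument is the same.
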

\begin{remark}
The \textbf{first assumption} listed in the theorem statement is true for popular losses such as cross-entropy loss and squared loss (assuming that the input data takes values in a bounded set, which holds for all real-world datasets). 
\end{remark}

\begin{remark}
The \textbf{second assumption} holds for popular classifiers generating probability vectors (e.g., logits in neural networks, logistic regression outputs). For classifiers with no probability output, one must transform the output to a number between zero and one first.   
\end{remark}

\begin{remark}
The \textbf{third assumption} states that the probability of assigning a label to the data points must not be zero for all data points for any label at each iteration.
\end{remark}

\begin{remark}
Finally, the \textbf{fourth assumption} ensures each sensitive class's probability is not zero. In other words, there should be at least one point in the dataset with that sensitive attribute for any sensitive group. It holds for all benchmark datasets in practice. Simply put, any protected group appearing during the test phase must have at least one representative in the training data.
\end{remark}

The following lemma is helpful for the proof of the theorem:
\begin{lemma}
\label{lemma: bounded}
    Let $A_1, \dots, A_n$ be $n$ variables such that $\|A_i\|_2 \leq c_i$. Then, we have:
    \begin{equation}
        \mathbb{E}[\|\sum_{i=1}^n A_i\|_2^2] \leq n\sum_{i=1}^n c_i^2
    \end{equation}
\end{lemma}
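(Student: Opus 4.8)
The plan is to prove this purely deterministically first and then note that the expectation is immediate, since the claimed bound $\sum_i c_i^2$ does not depend on any randomness. The key observation is that the factor of $n$ on the right-hand side is exactly the price one pays to pass from an $\ell^1$-type bound (from the triangle inequality) to an $\ell^2$-type bound, via Cauchy--Schwarz.

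First I would apply the triangle inequality for the Euclidean norm to obtain
\begin{equation*}
\Big\| \sum_{i=1}^n A_i \Big\|_2 \leq \sum_{i=1}^n \|A_i\|_2 .
\end{equation*}
Squaring both sides, the left-hand side becomes the quantity of interest. For the right-hand side I would invoke Cauchy--Schwarz applied to the all-ones vector and the vector $(\|A_1\|_2, \dots, \|A_n\|_2)$, which gives the quadratic--arithmetic mean inequality
\begin{equation*}
\Big( \sum_{i=1}^n \|A_i\|_2 \Big)^2 \leq n \sum_{i=1}^n \|A_i\|_2^2 .
\end{equation*}
Chaining these two inequalities yields $\big\| \sum_{i=1}^n A_i \big\|_2^2 \leq n \sum_{i=1}^n \|A_i\|_2^2$.

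Next I would substitute the hypothesis $\|A_i\|_2 \leq c_i$ term by term, giving $\sum_{i=1}^n \|A_i\|_2^2 \leq \sum_{i=1}^n c_i^2$, so that $\big\| \sum_{i=1}^n A_i \big\|_2^2 \leq n \sum_{i=1}^n c_i^2$ holds pointwise (for every realization of the $A_i$). Finally, since this bound is a deterministic constant, taking expectation and using monotonicity of $\mathbb{E}[\cdot]$ preserves the inequality, yielding $\mathbb{E}\big[\big\| \sum_{i=1}^n A_i \big\|_2^2\big] \leq n \sum_{i=1}^n c_i^2$.

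There is no genuine obstacle here; the lemma is elementary. The only point worth stating carefully is that the bound $\|A_i\|_2 \leq c_i$ is assumed to hold surely (almost surely), so no concentration or independence argument is needed, and the expectation step is trivial. If instead one only had $\mathbb{E}[\|A_i\|_2^2] \leq c_i^2$, the same chain would still work by taking expectations one step earlier, but then one should be explicit that the $n$ factor is unavoidable in general (it is tight when all $A_i$ are equal), which is presumably why the lemma is stated with this conservative constant for use in the convergence analysis.
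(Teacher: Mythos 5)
Your proof is correct, and it takes a mildly but genuinely different route from the paper's. The paper proves the same pointwise inequality $\bigl\|\sum_{i=1}^n A_i\bigr\|_2^2 \leq n\sum_{i=1}^n \|A_i\|_2^2$ by expanding the square into $\sum_i \|A_i\|_2^2 + 2\sum_{i\neq j}\langle A_i, A_j\rangle$ and bounding each cross term with $2\langle A_i, A_j\rangle \leq \|A_i\|_2^2 + \|A_j\|_2^2$ (Young's inequality), whereas you never expand the square: you first apply the triangle inequality and then Cauchy--Schwarz against the all-ones vector to pass from $\bigl(\sum_i \|A_i\|_2\bigr)^2$ to $n\sum_i \|A_i\|_2^2$. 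The two arguments are of equal length and power, so the difference is largely aesthetic; if anything, the paper's ordering of steps ($\mathbb{E}[\|\sum_i A_i\|_2^2] \leq n\sum_i \mathbb{E}[\|A_i\|_2^2] \leq n\sum_i c_i^2$) makes it visibly sufficient to assume only the second-moment bound $\mathbb{E}[\|A_i\|_2^2] \leq c_i^2$ rather than an almost-sure bound --- a relaxation you also correctly identify at the end, noting one would simply take expectations one step earlier. Your additional remarks (that the hypothesis is an almost-sure bound so the expectation step is trivial, and that the factor $n$ is tight when all $A_i$ coincide) are accurate and slightly sharper in presentation than the paper's own write-up, which leaves the role of the expectation partly implicit.
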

\begin{proof}
 \begin{gather*}
     \|\sum_{i=1}^n A_i\|_2^2 = \sum \|A_i\|_2^2 + 2 \sum_{i \neq j} \langle A_i, A_j \rangle \leq \sum \|A_i\|_2^2 + \sum_{i \neq j} \|A_i\|_2^2 + \|A_j\|_2^2 = n \sum_{i=1}^n \|A_i\|_2^2,
 \end{gather*}   
 which is based on the fact that $2\langle A_i, A_j \rangle \leq \|A_i\|_2^2 + \|A_j\|_2^2$. Therefore:
 \begin{equation*}
     \mathbb{E}[\|\sum_{i=1}^n A_i\|_2^2] \leq n\sum_{i=1}^n\mathbb{E}[\|A_i\|_2^2] \leq n\sum_{i=1}^n c_i^2
 \end{equation*}
\end{proof}
Now, we are ready to prove Theorem~\ref{thm: formal}.
\begin{proof}
The proof consists of three main steps. First, we need to show that the gradient estimator in Algorithm~\ref{alg: f-divergence-minibatch} is unbiased. Since the samples are IID, for any function $\psi(\cdot, \cdot)$, and an IID batch of data points $\mathcal{B}$ we have:
\begin{equation*}
\mathbb{E} \Big[\frac{1}{\mathcal{B}} \sum_{(\bx, y) \in \mathcal{B}} \nabla \psi(\bx, y) \Big] = \frac{1}{\mathcal{B}} \sum_{(\bx, y)} \mathbb{E}[\psi(\bx, y)] = \mathbb{E}_{(\bx, y) \sim \mathbb{P}(\bx, y, s)}[\nabla \psi(\bx, y)]  
\end{equation*}
As an immediate result, if the objective function is written as the summation over $n$ functions, the gradient estimator over an IID batch of data will be unbiased. According to Equation~\eqref{eq: linearly_separable}, the objective function has the desired form for:
\begin{equation}
\small
\min_{\btheta} \: \max_{\bA} \quad \frac{1}{n}\sum_{i=1}^n \left[ \ell(\hat{y}_{\btheta}(\bx_i), y_i) + \lambda \sum_{j \in \mathcal{Y}, \atop k \in \mathcal{S}}  \Big[A_{jk} F_{j} (\bx_i; \btheta) \mathbbm{1}(s_i = k) -  f^*(A_{jk}) \pi_k F_{j} (\bx_i; \btheta)  \Big] \right]
\end{equation}
Next, we need to show the boundedness of the gradient estimator variance. Let
\begin{equation*}
    G_{\mathcal{B}} =  \frac{1}{|\mathcal{B}|}\sum_{(x_i, y_i) \in \mathcal{B}} \left[ \nabla_{\btheta} \ell(\hat{y}_{\btheta}(\bx_i), y_i) + \lambda \sum_{j \in \mathcal{Y}, \atop k \in \mathcal{S}}  \Big[A_{jk} \nabla_{\btheta} F_{j} (\bx_i; \btheta) \mathbbm{1}(s_i = k) -  f^*(A_{jk}) \pi_k \nabla_{\btheta} F_{j} (\bx_i; \btheta) \right]
\end{equation*}

We need to show for a given data batch:
\begin{equation*}
    \mathbb{E}[\|G_{\mathcal{B}} - G_n\|_2^2] 
\end{equation*}
where $G_n$ is the gradient with respect to all $n$ data points (when $\mathcal{B} = \{1, \dots, n\}$. Note that:
\begin{equation*}
    \|G_{\mathcal{B}} - G_n\|_2^2 \leq 2\|G_{\mathcal{B}}\|_2^2 + \|G_n\|_2^2
\end{equation*}
Thus, it suffices to show that the gradient is bounded for any given $\mathcal{B}$ batch. Since the samples are independent of each other and identically distributed from $\mathbb{P}_{\textrm{train}}$ (IID samples), the second-order moment of the average over $|\mathcal{B}|$ data points is $1/|\mathcal{B}|$ times the variance of a single data point.

Thus, we need to show the boundedness of the gradient for a given data point drawn from the training distribution:
\begin{equation}
\label{eq: grad}
  \left[ \nabla_{\btheta} \ell(\hat{y}_{\btheta}(\bx), y_i) + \lambda \sum_{j \in \mathcal{Y}, \atop k \in \mathcal{S}}  \Big[A_{jk} \nabla_{\btheta} F_{j} (\bx_i; \btheta) \mathbbm{1}(s_i = k) -  f^*(A_{jk}) \pi_k \nabla_{\btheta} F_{j} (\bx_i; \btheta) \right]  
\end{equation}
Based on the first assumption:
\begin{equation}
    \|\nabla_{\btheta} \ell(\hat{y}_{\btheta}(\bx), y_i)\|_2 \leq G
\end{equation}
Based on the second assumption:
\begin{gather}
    \|A_{jk} \nabla_{\btheta} F_{j} (\bx_i; \btheta) \mathbbm{1}(s_i = k)\|_2 \leq LA_{jk} \\
    \| \pi_k f^{*}(A_{jk}) \nabla_{\btheta}F_{j}(\bx_i; \btheta) \|_2 \leq \pi_{k} Lf^*(A_{jk})
\end{gather}
These terms are bounded if $A_{jk}$ is bounded and $f^*(A_{jk})$ is bounded for any $A_{jk}$. This holds true for all $f$-divergences given assumptions 3 and 4. To see why, it suffices to find the optimal solution of each $f$-divergence by setting the gradient zero with respect to $A_{jk}$. In all cases, the solution is a combination of $\mathbb{P}_{s_k}$ and   $\mathbb{P}_{\hat{y}_j}$ terms that are non-zero and bounded (by assumptions 3 and 4). Since each term is bounded in~\eqref{eq: grad}, the expectation of the squared norm is also bounded, according to Lemma~\ref{lemma: bounded}.

Finally, given that the estimator is unbiased, and the variance is bounded (Assumption 4.1 holds in~\citet{pmlr-v119-lin20a}), the two-time-scale stochastic gradient descent-ascent algorithm (which is Algorithm~\ref{alg: f-divergence-minibatch}) finds an $\epsilon$-stationary solution of the Problem in $\mathcal{O}(\frac{1}{\epsilon^8})$ according to Theorem 4.9 in~\citet{pmlr-v119-lin20a}.
\end{proof}

\begin{remark}
 For the case of strongly convex $f$-divergence (e.g. $\chi^2$ divergence), the convergence rate of $\mathcal{O}(\kappa^3 \epsilon^{-4})$ can be obtained (Theorem 4.5 in~\citet{pmlr-v119-lin20a}). Such an iteration complexity holds for the batch size of $\mathcal{O}(\frac{\kappa \sigma^2}{\epsilon^2})$. If the batch size is as small as one, the rate will be $\mathcal{O}(\kappa^3 \epsilon^{-5})$. 
\end{remark}

\begin{remark}
If the batch size is $n$ (deterministic), a rate of $\mathcal{O}(\epsilon^{-6})$ can be obtained according to Theorem 4.8 in~\citet{pmlr-v119-lin20a}. Note that it does not necessarily translate to a better runtime than the stochastic case. Because the per iteration cost of evaluating the gradient for $n$ data points can be much higher than evaluating on just $1$ (or a small number of) data points. 
\end{remark}

\section{A Faster (But Double-Loop) First-order Optimization Algorithm for Optimizing~\eqref{eq: f-FERM}}
\label{appendix: faster_convergence}
We apply SREDA~\citep{luo2020stochastic} to find an $\epsilon$ stationary solution of Problem~\eqref{eq: f-FERM}. Note that SREDA works for non-convex strongly-concave min-max problems. We can directly apply the algorithm when $f$ is the $\chi^2$-divergence. In the cases that the function is concave but not strongly concave (e.g., KL divergence and Reverse KL), we first consider the following approximation:
\begin{equation}
\label{eq: linearly_separable_regularized}    
\small
\min_{\btheta} \: \max_{\bA} \quad \frac{1}{n}\sum_{i=1}^n \left[ \ell(\hat{y}_{\btheta}(\bx_i), y_i) + \lambda \sum_{j \in \mathcal{Y}, \atop k \in \mathcal{S}}  \Big[A_{jk} F_{j} (\bx_i; \btheta) \mathbbm{1}(s_i = k) -  f^*(A_{jk}) \pi_k F_{j} (\bx_i; \btheta) - \frac{\epsilon}{2}\|A_{jk}\|^2 \Big] \right]
\end{equation}
The maximization problem is an $\epsilon$-strongly concave problem. If we apply SREDA (see Algorithm 3 in~\citet{luo2020stochastic}), we find an $\epsilon$ stationary solution of Problem~\eqref{eq: linearly_separable_regularized} in $\mathcal{O}(\kappa^3\epsilon^{-3})$ where $\kappa = \frac{L}{\mu}$ is the condition number. In our case, $\mu$, the strong concavity modulus can be set to the desired accuracy $\epsilon$ so that solving the approximate problem~\eqref{eq: linearly_separable_regularized} leads to an approximate stationary point of the original problem. Therefore, the rate of convergence will be $\mathcal{O}(\epsilon^{-6})$. Note that applying SREDA finds an $\epsilon$ stationary solution of Problem~\eqref{eq: linearly_separable_regularized}. Similar to Theorem~\ref{thm: epsilon}, since the added regularization term is small enough, the obtained solution is a $\mathcal{O}(\epsilon)$-stationary solution of the original problem~\eqref{eq: f-FERM}. An important note is that the SREDA algorithm (line 13 in Algorithm 3~\citep{luo2020stochastic}) has a nested loop compared to the SGDA algorithm proposed in Algorithm~\ref{alg: f-divergence-minibatch}. Therefore, the $\mathcal{O}(\epsilon^{-6})$ iteration complexity bound does not necessarily translate to an improved runtime in practice. Algorithm 2 describes SREDA applied to Problem~\eqref{eq: linearly_separable_regularized}. For the simplicity of the presentation, define the summation argument over $n$ data points as $\phi(\bx_i, y_i, s_i, \btheta, \bA)$.
\begin{algorithm}
    \caption{SREDA Algorithm For Solving~\eqref{eq: f-FERM}}
    \label{alg: SREDA}
    \begin{algorithmic}[1]
	 \STATE \textbf{Input}: periods $q$, m > 0, Number of iterations T,  step-size $\eta_\btheta$, fairness parameter $\lambda \geq 0,$ iteration number $T$, Batchsizes $S$ and $R$. 
    \FOR {$t = 1, \ldots, T$}
    \IF {$t \textrm{ mod } q = 0$}
    
    \STATE Draw $S$ samples $(\bx^{'}_1, y^{'}_1), \dots, (\bx^{'}_S, y^{'}_S)$ 

    \STATE $\bv_t = \frac{1}{S} \sum_{i=1}^S \nabla_{\btheta} \phi(\bx_i, y_i, s_i, \btheta, \bA)$

    \STATE $\bu_t = \frac{1}{S} \sum_{i=1}^S \nabla_{\bA} \phi(\bx_i, y_i, s_i, \btheta, \bA)$

    \ELSE 
    \STATE $\bv_t = \bv^{'}_t$
    \STATE $\bu_t = \bu^{'}_t$
    \ENDIF  
    \STATE $\btheta_{t+1} = \btheta_{t} - \eta_{\btheta} \bv_k$
    \STATE  $(\bA_{t+1}, \bv^{'}_{t+1}, \bu^{'}_{t+1}) = $ ConcaveMaximizer$(t, m, R, \btheta_{t}, \btheta_{t+1}, \bA_t, \bu_t, \bv_t)$
    \ENDFOR
    \vspace{1mm}
    \STATE \textbf{Return:} $\btheta^T$
\end{algorithmic}
\end{algorithm}
The ConcaveMaximizer module is described in Algorithm 4 in~\citet{luo2020stochastic}. %The algorithm convergence rate is $\mathcal{O}(\kappa^3 \epsilon^{-3})$. , which leads to $\mathcal{O}(\epsilon^{-6})$ iteration complexity as $\kappa = \mathcal{O}(\epsilon^{-1})$.

\section{A First-order Optimization Algorithm for Optimizing~\eqref{eq: dro_grad}}
\label{appendix: dro_alg}
This section presents a first-order optimization algorithm for optimizing~\eqref{eq: dro_grad}. The details are presented in Algorithm~\ref{alg: gradient-regularizer}. Further, we show the convergence of the algorithm to an $\epsilon$-stationary solution of Problem~\eqref{eq: dro_grad} in $\mathcal{O}(\epsilon^{-8})$.

\begin{theorem}
    Assume that $\ell(\cdot, \cdot)$ and $\mathcal{F}_j(\cdot, \btheta)$ are Lipschitz continuous for any given $j$ and $\btheta$ and their gradients are $L$-Lipshitz. Further, assume that $\mathbb{P}(s = k) > 0$ for all protected groups and $\mathbb{P}(\hat{y}_{\btheta} = j) > 0$ at every iteration for all labels $j$. Then, for any given batch size $1 \leq | \mathcal{B}| \leq n$, Algorithm~\ref{alg: gradient-regularizer} finds an $\epsilon$-stationary solution of~\eqref{eq: f-FERM} in $\mathcal{O}(\frac{1}{\epsilon^8})$ for any given $\epsilon > 0$. 
\end{theorem}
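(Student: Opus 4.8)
The plan is to follow the same three-step template used in the proof of Theorem~\ref{thm: formal}, now applied to the equivalent nonconvex--concave min--max reformulation~\eqref{eq: min-max-small-dro} of Problem~\eqref{eq: dro_grad} (the statement's reference to~\eqref{eq: f-FERM} appears to be a typo for the DRO objective that Algorithm~\ref{alg: gradient-regularizer} actually targets), and then to invoke the two-timescale stochastic gradient descent--ascent guarantee of~\citet{pmlr-v119-lin20a}. First I would observe that the inner problem in~\eqref{eq: min-max-small-dro} maximizes $h(\btheta,\mathbb{U},\mathbb{V})$, which is \emph{affine} (hence concave) in $(\mathbb{U},\mathbb{V})$, over the product of $\ell_p$ balls $\{\|\mathbb{U}\|_p\le\delta,\ \|\mathbb{V}\|_p\le\delta\}$; this is a compact convex set of diameter $O(\delta)$. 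The outer objective is (possibly) nonconvex in $\btheta$. Thus~\eqref{eq: min-max-small-dro} has exactly the structure required by~\citet[Theorem~4.9]{pmlr-v119-lin20a}, provided the stochastic oracle used by Algorithm~\ref{alg: gradient-regularizer} is unbiased with bounded variance.

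Second, I would verify unbiasedness. As explained around~\eqref{eq: derivative_h_theta}, the semi-stochastic oracle computes $\nabla_{\mathbb{U}}h=\boldsymbol{\alpha}^*(\hat{\mathbb{P}},\hat{\mathbb{Q}})$ and $\nabla_{\mathbb{V}}h=f^*(\boldsymbol{\alpha}^*(\hat{\mathbb{P}},\hat{\mathbb{Q}}))$ exactly from one forward pass (these are deterministic functions of $\btheta$), while the $\btheta$-gradient is estimated on a mini-batch. Since $\hat{p}_j(\btheta)$ and $\hat{q}_j(\btheta)$ are empirical averages over the $n$ data points, so that $\nabla_\btheta\hat{p}_j$ and $\nabla_\btheta\hat{q}_j$ are sums of per-sample terms, the same IID averaging argument as in Theorem~\ref{thm: formal} shows the mini-batch estimator of $\nabla_\btheta h$ is unbiased.

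Third --- and this is the main obstacle --- I would establish boundedness of the $\btheta$-gradient, and hence bounded variance via Lemma~\ref{lemma: bounded}. The closed form~\eqref{eq: derivative_h_theta} divides by $\hat{q}_j^2(\btheta)\,(f^*)''(\alpha)|_{\alpha=\alpha_j^*}$, so the delicate point is keeping this denominator bounded away from zero. Assumptions~3 and~4 (that $\mathbb{P}(s=k)>0$ and $\mathbb{P}(\hat{y}_\btheta=j)>0$ at every iterate) guarantee $\hat{q}_j(\btheta)$ is bounded below uniformly; for the $f$-divergences under consideration (those that are differentiable with strictly convex conjugate --- KL, $\chi^2$, reverse KL, Jensen--Shannon, squared Hellinger), $(f^*)''$ is bounded below on the compact range of $\alpha_j^*$, while $\boldsymbol{\alpha}^*$ and $f^*(\boldsymbol{\alpha}^*)$ are bounded by the same reasoning used in Theorem~\ref{thm: formal}. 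The numerator is controlled by the $L$-Lipschitzness of $F_j$ together with $\hat{p}_j,\hat{q}_j\in[0,1]$. These bounds, plus the $L$-Lipschitz gradients of $\ell$ and $F_j$ and smoothness of the map $\btheta\mapsto\boldsymbol{\alpha}^*$ obtained from the implicit function theorem, yield joint smoothness of $h$ and a uniform second-moment bound on the stochastic gradient.

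Finally I would conclude by feeding the unbiasedness and bounded-variance facts into~\citet[Theorem~4.9]{pmlr-v119-lin20a}, whose two-timescale SGDA attains an $\epsilon$-stationary point of a nonconvex--concave problem in $\mathcal{O}(\epsilon^{-8})$ iterations for any batch size $1\le|\mathcal{B}|\le n$, giving the claimed rate. I would also remark that, in the small-shift regime of Proposition~\ref{thm: epsilon}, such a solution of~\eqref{eq: dro_grad} is an $O(\epsilon)$-stationary solution of the original Lagrangian DRO problem~\eqref{eq: dro_lagrangian}.
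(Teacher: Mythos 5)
Your proposal is correct and takes essentially the same approach as the paper: the paper's own proof of this theorem is a single sentence deferring to the argument of Theorem~\ref{thm: f_divergence_convergence} on the grounds that the objective is nonconvex--concave, i.e., exactly the three-step template you instantiate (unbiased mini-batch gradients, bounded variance via Lemma~\ref{lemma: bounded}, then the two-timescale SGDA guarantee of \citet[Theorem 4.9]{pmlr-v119-lin20a}). Your added details---reading \eqref{eq: f-FERM} as a typo for \eqref{eq: dro_grad}, the affine-in-$(\mathbb{U},\mathbb{V})$ structure over compact $\ell_p$ balls, and keeping the denominator $\hat{q}_j^2(\btheta)\,(f^*)''(\alpha)|_{\alpha=\alpha_j^*}$ in \eqref{eq: derivative_h_theta} bounded away from zero---simply make explicit what the paper leaves implicit.
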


The proof of the theorem is similar to Theorem~\ref{thm: f_divergence_convergence} as the objective function is non-convex concave. 

One can obtain faster algorithms under additional assumptions. For example, if the set for $\theta$ is assumed to be compact (e.g., we restrict the norm of the weight of the gradient), then we can accelerate the algorithm to $O(\epsilon^{-6})$, see~\citet{rafique2022weakly}. Moreover, if we consider full batch sizes, we can utilize Algorithm 2 in~\citet{ostrovskii2021nonconvex}. This will give you the rate of convergence of $O(\epsilon^{-2})$ (Theorem 5.2).

\begin{algorithm}
    \caption{Gradient-Regularization Robust Training algorithm}
    \label{alg: gradient-regularizer}
    \begin{algorithmic}[1]
	 \STATE \textbf{Input}: $\btheta^0 \in \mathbb{R}^{d_{\theta}},$ step-sizes $\eta_\btheta$, $\eta_\alpha$, fairness parameter $\lambda \geq 0,$ iteration number $T$, Batchsize $[b]_t$
    \FOR {$t = 1, \ldots, T$}
    \STATE Sample minibatch of data $\bb_t = \{(\bx_1,\by_1),\cdots,(\bx_b,\by_b)\}$ 
    \STATE Estimate $\mathbb{P}(\hat{\by}_{\btheta^{t}})$ for minibatch $\bb_t$
    \REPEAT
        \STATE $dA_{jk} = \nabla_{\bA} (A_{jk} \mathbb{P}_{\hat{\by}_\btheta, \bs} - f^*(A_{jk}) \mathbb{P}_{\hat{\by}_\btheta}\mathbb{P}_{\bs})$
        \STATE $A_{jk} = A_{jk} + \eta_\alpha\; dA_{jk}$
    \UNTIL {Convergence to $A_{jk}^*$}
    \STATE Obtain closed form expressions: $\frac{\partial}{\partial \btheta} ||\nabla_\mathbb{P} \mathcal{D}_f(\mathbb{P}||\mathbb{Q})||_2^2$ and $\frac{\partial}{\partial \btheta} ||\nabla_\mathbb{P} \mathcal{D}_f(\mathbb{P}||\mathbb{Q})||_2^2$ in terms of $\mathbb{P}_{\hat{\by}_\btheta}$ 
    
    \STATE 
        $d \btheta = \nabla_\btheta \Big[ \ell (\btheta^{t-1}, \bx, \by) + \lambda\Big[  \mathcal{D}_f(\hat{\mathbb{P}}||\hat{\mathbb{Q}}) + \epsilon \Big(|| \nabla_\mathbb{P} \mathcal{D}_f(\hat{\mathbb{P}} || \hat{\mathbb{Q}}) ||^2_2 + || \nabla_\mathbb{Q} \mathcal{D}_f(\hat{\mathbb{P}} || \hat{\mathbb{Q}}) ||^2_2 \Big)\Big] \Big] $ 
    \STATE $\btheta^t = \btheta^{t-1} - \eta_\btheta\; d\btheta $
    \ENDFOR
    \STATE \textbf{Return:} $\btheta^T$
\end{algorithmic}
\vspace{-.03in}
\end{algorithm}

\section{Proof of Equation~\eqref{eq: dro_lagrangian_linfty}}
\label{appendix: dro_infinity}
\vspace{-2mm}
To show Problem~\eqref{eq: dro_lagrangian} is equivalent to~\eqref{eq: dro_lagrangian_linfty} under $\ell_p$ norm and the probability simplex constraint relaxation, note that the maximization problem in~\eqref{eq: dro_lagrangian} is a constrained convex maximization with respect to $\mathbb{P}$. Therefore, there is a global solution on the boundary. The maximum problem under $\ell_{\infty}$ can be written as:
\begin{equation}
\max_{\substack{||\mathbb{P} - \hat{\mathbb{P}}||_{\infty} \leq \delta \\ \substack{||\mathbb{Q} - \hat{\mathbb{Q}}||_{\infty} \leq \delta}}}  \mathcal{D}_f(\mathbb{P}||\mathbb{Q}), 
\end{equation}
or equivalently:
\begin{equation}
\max_{\substack{||\mathbb{P} - \hat{\mathbb{P}}||_{\infty} \leq \delta \\ \substack{||\mathbb{Q} - \hat{\mathbb{Q}}||_{\infty} \leq \delta}}}  \sum_{j=1}^{m}\mathbb{Q}_j f \Big(\frac{\mathbb{P}_j}{\mathbb{Q}_j} \Big), 
\end{equation}
For the choice of KL-divergence ($f(t) = t\log(t)$) and $\chi^2$ divergence ($f(t) = (t-1)^2$), $f$ is a non-decreasing function. Fixing a $j \in \{1, \dots, m\}$, the maximum with respect to is $\mathbb{P}_j$  attained when $\mathbb{P}_j$ is maximized. The maximum of $\mathbb{P}_j$ is obtained on the boundary where $\delta$ is added to $\hat{\mathbb{P}}_j$. Since $\hat{\mathbb{P}}_j + \delta$ should be a probability value, if it is larger than $1$, we project it back to $1$. As a result, the maximum of $\mathbb{P}_j$ is $\max(\hat{\mathbb{P}}_j + \delta, 1)$. Further, $f$ in both choices of $f$ is super-linear, meaning that $\mathbb{Q}_j f \Big(\frac{\mathbb{P}_j}{\mathbb{Q}_j} \Big)$ is a non-increasing function with respect to $\mathbb{Q}_j$. Thus, its maximum with respect to $\mathbb{Q}_j$ is attained when $\mathbb{Q}_j$ is minimized. Therefore, the optimal solution is either $\hat{\mathbb{Q}}_j - \delta$, or if it goes less than $0$, we project it back to $0$. Applying the same argument to all $j$'s, we obtain Equation~\eqref{eq: dro_lagrangian_linfty}.

\section{Details of Tuning Hyperparameters}
\label{appendix: hyperparameter}
\vspace{-2mm}
In all experiments, we set $\eta_{\btheta} = 10^{-5}$ and $\eta_{\alpha} = 10^{-6}$. Further, we train the model with $\lambda = 0$ for $300$ epochs, and then we set $\lambda$ to the considered value. We continue the training until $2000$ epochs. The range of $\lambda$ to get each point in the tradeoff figures is varied for different $f$-divergences. The KL-divergence $\lambda$ range is $[0, 150]$. For $\chi^2$ divergence it is $[0, 300]$ and for the reverse KL it is $[0, 50]$. Moreover, the $\lambda$ range for JS and Squared Hellinger is $[0, 110]$ and $[0, 250]$. Note that larger values outside the range lead to models with $0$ predictions for all values.

In the DRO case, aside from $\lambda$ we must tune $\epsilon$, the robustness parameter. To achieve the best result, we have two different strategies depending on the availability of the data from the target domain. Suppose we have access to a collection of data points from the target domain. In that case, we consider it as the validation set to choose the optimal combination of $\lambda \in \{0.1, 0.5, 1, 2, 5, 10, 20, 50\}$ and $\delta \in \{0.01, 0.02, 0.05, 0.1, 0.2, 0.5, 1, 2, 5, 10\}$. In the second scenario, when we do not have any access to target domain data, we perform a $k$-fold cross-validation on the source data. A more elegant way is to create the validation dataset by oversampling the minority groups. Having access to the oversampled validation set, we choose the optimal $\lambda$ and $\delta$ similar to the first scenario. In the experiment regarding Figure~\ref{fig: new_adult}, we reserve $5\%$ of data from the target domain for validation (scenario 1). In Figure~\ref{fig: adult}, we apply scenario 2 to tune the hyperparameters $\lambda$ and $\delta$.

\section{Further Experiments on Other Datasets and Notions of Fairness}
\label{appendix: experiments}
In this section, we perform~\eqref{eq: f-FERM},~\citep{hardt2016equality},~\citep{pmlr-v97-mary19a}, and~\citep{baharlouei2020renyi} to COMPAS~\footnote{https://www.kaggle.com/datasets/danofer/compass} and German Credit datasets~\footnote{https://archive.ics.uci.edu/dataset/144/statlog+german+credit+data}. In the experiment on COMPAS, we use equality of opportunity as the measure of fairness violation, while in the German Credit dataset experiment, we use equalized odds. The results show that~\eqref{eq: f-FERM} is significantly better than other approaches regarding the accuracy-fairness tradeoff. The batch size is equal to $64$ for all methods.
\begin{figure}[H]
% \vspace{-3mm}
    \begin{center}
\centerline{\includegraphics[width=1\columnwidth]{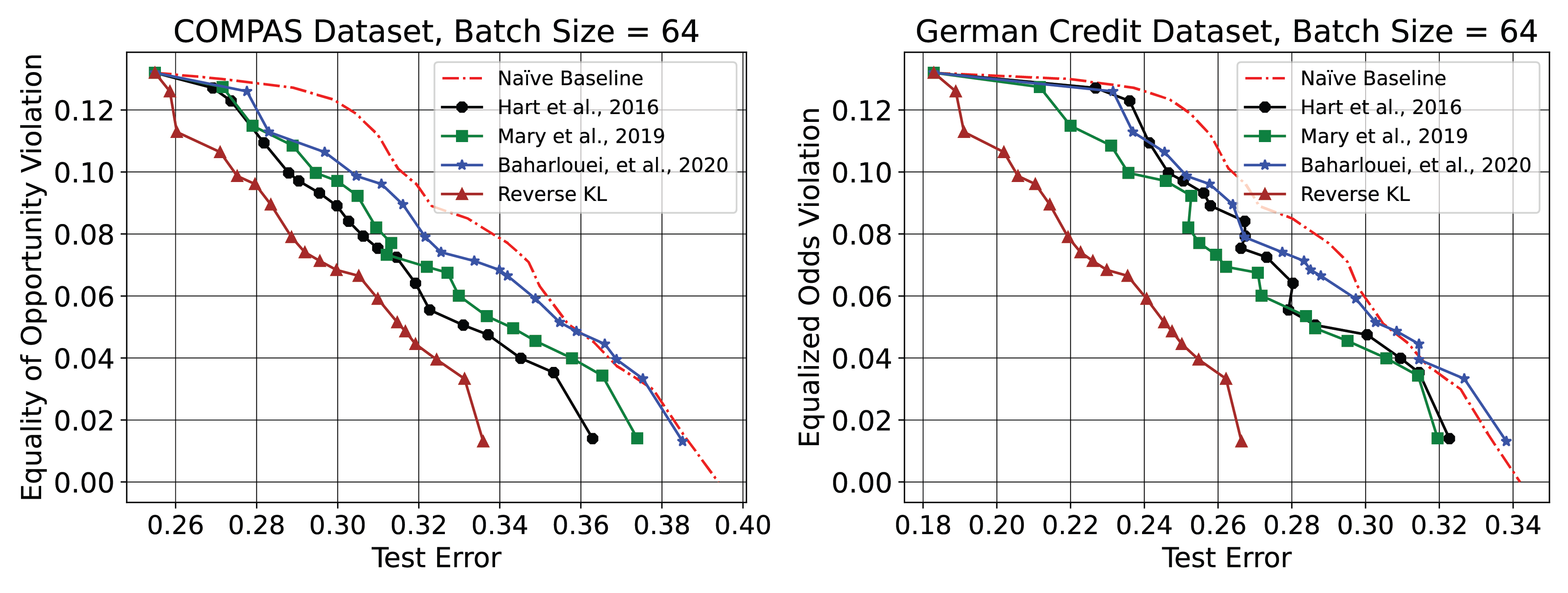}}
    \vspace{-3mm}
    \caption{\small Performance of the trained fair models on COMPAS and German Credit Datasets.}
    \label{fig: compas}
\end{center}
\vspace{-10mm}
\end{figure}
\end{document}